\definecolor{links}{rgb}{0.0,0,0.8}   % red
\definecolor{urls}{rgb}{0,0,0.8}    % blue
\definecolor{cites}{rgb}{0.0,0.0,0.8}   % blue
\DeclareMathOperator*{\argmax}{arg\;max}
\DeclareMathOperator*{\argmin}{arg\;min}
\newcommand{\sv}[1]{#1}
\newcommand{\bsv}[1]{\mathbf{#1}}
\newcommand{\rv}[1]{\mathsf{#1}}
\newcommand{\brv}[1]{\mathbf{\mathsf{#1}}}
\newcommand{\rvs}{\rv{s}}
\newcommand{\rvb}{\rv{b}}
\newcommand{\rvz}{\rv{z}}
\newcommand{\svs}{\sv{s}}
\newcommand{\svb}{\sv{b}}
\newcommand{\svy}{\sv{y}}
\newcommand{\svz}{\sv{z}}
\newcommand{\brvs}{\brv{s}}
\newcommand{\brvb}{\brv{b}}
\newcommand{\brvy}{\brv{y}}
\newcommand{\brvz}{\brv{z}}
\newcommand{\bsvs}{\bsv{s}}
\newcommand{\bsvb}{\bsv{b}}
\newcommand{\bsvy}{\bsv{y}}
\newcommand{\bsvz}{\bsv{z}}
\newcommand{\btheta}{\boldsymbol{\theta}}
\newcommand{\bpsi}{\boldsymbol{\psi}}
\newcommand{\bmu}{\boldsymbol{\mu}}
\newcommand{\bSigma}{\boldsymbol{\Sigma}}
\newcommand{\R}{\mathbb{R}}
\newcommand{\C}{\mathbb{C}}
\newcommand{\E}{\mathbb{E}}
\newtheorem{lemma}{Lemma}
\newtheorem{proposition}{Proposition}
\newcommand{\xmark}{\ding{55}}%
\setlist[itemize]{leftmargin=*}
\def\nbyp#1{\textcolor{red}{\textbf{YP:} #1}}
\title{Score-based Source Separation with Applications to Digital Communication Signals}
\author{%
  Tejas Jayashankar$^{1}$\hspace{0.5em}
  Gary {C.F.}\,\,Lee$^{1}$\hspace{0.5em}
  Alejandro Lancho$^{1,2}$\hspace{0.5em}
  Amir Weiss$^{1}$\\[0.75em]
    \textbf{Yury Polyanskiy}$^{1}$\hspace{0.5em}
  \textbf{Gregory W. Wornell}$^{1}$\\[0.75em]
  $^{1}$Massachusetts Institute of Technology \quad $^{2}$Universidad Carlos III de Madrid\\
}
\begin{document}

\maketitle

\begin{abstract}
    We propose a new method for separating superimposed sources using diffusion-based generative models.  Our method relies only on separately trained statistical priors of independent sources to establish a new objective function guided by \textit{maximum a posteriori} estimation with an \textit{$\alpha$-posterior}, across multiple levels of Gaussian smoothing.  Motivated by applications in radio-frequency (RF) systems, we are interested in sources with underlying discrete nature and the recovery of encoded bits from a signal of interest, as measured by the bit error rate (BER). Experimental results with RF mixtures demonstrate that our method results in a BER reduction of 95\%  over classical and existing learning-based methods.  Our analysis demonstrates that our proposed method yields solutions that asymptotically approach the modes of an underlying discrete distribution. Furthermore, our method can be viewed as a multi-source extension to the recently proposed score distillation sampling scheme, shedding additional light on its use beyond conditional sampling.  The project webpage is available at \url{https://alpha-rgs.github.io}.
\end{abstract}

\section{Introduction}
\label{sec:introduction}

The problem of single-channel source separation (SCSS) is ubiquitous and arises in many different applications, ranging from the cocktail party problem in the audio domain to interference mitigation in the digital communications domain.  Broadly speaking, the goal in SCSS is to decompose a mixture $\bsv{y} = \kappa_1\bsv{x}_1 + \dots + \kappa_K\bsv{x}_K \in \mathcal{Y}^N$ into its $K$ constituent components, $\{\bsv{x}_i\}_{i=1}^K, \bsv{x}_i \in \mathcal{X}_i^N$.  Motivated by applications in modern engineering, e.g., intelligent communication systems, we are particularly interested in the SCSS problem for heterogeneous sources with \emph{underlying discrete structure}.  

Prior art that uses deep learning for source separation problems has been well-studied in the audio and image domain by leveraging domain-specific structures.  For example, recent end-to-end methods in the audio domain leverage spectrogram masking techniques to separate sparse time-frequency representations \cite{chandna2017monoaural, tzinis2020sudo}, while in the visual domain, natural images may be separable by exploiting local features and ``smoothness" assumptions in the color space \cite{gandelsman2019double, zou2020deep}. More recent efforts have tried to solve this problem by leveraging independently trained statistical priors using annealed Langevin dynamics sampling \cite{jayaram2020source, mariani2023multi, lutati2023separate}.  However, these methods have demonstrated shortcomings on discrete sources that exhibit intricate temporal structures with \emph{equiprobable multimodal distributions} \cite{frank2020problems}. 

These additional challenges motivate us to develop a \textit{novel general Bayesian framework} for source separation that relies on statistical priors over sources with underlying discreteness. Specifically, we focus on learning data-driven priors for two main reasons---i)  \textbf{Unknown system parameters}, e.g., the underlying signal generation model may not be available to create hand-crafted priors; and ii) \textbf{Automation}, facilitated by learning methods to create plug-and-play priors for versatile use.

In this paper, we study the SCSS problem involving a mixture of two signals, $\bsvs$ and $\bsvb$, which interfere at a different relative scale to produce the mixture signal $\bsvy = \bsvs + \kappa \bsvb$, where $\kappa\in\mathbb{R}_+$ is the scaling coefficient. We additionally impose restrictions on access to paired training data samples $(\bsv{s}, \bsv{b}, \bsv{y})$ during training. Our motivation for this restriction arises from the following consideration: given $n$ sources, the number of two-component source separation models leveraging joint-statistics grows as $\mathcal{O}(n^2)$. Any changes to the training data---even for a single source---would require $\mathcal{O}(n)$ model updates.  In contrast, solutions that leverage independently trained priors over the sources need to \textit{update only a single model} (i.e., $\mathcal{O}(1)$).  Additionally, these priors can also be used to solve more generalized source-separation problems, e.g., with $K > 2$ components or with a different mixture model, \textit{without requiring any additional training}.

We demonstrate the successful operation of our method in the digital radio-frequency (RF) setting. Signals in the RF domain are often discrete in nature as their randomness arises from bits/symbols that modulate a continuous waveform. As such, source separation performance is measured not only through continuous reconstruction metrics, e.g., the mean squared error (MSE), but also (and primarily) by the fidelity of digital communication, commonly measured by the bit error rate (BER).

\paragraph{Broader Impact.}The crux of our work is motivated by challenges arising in the next generation of wireless systems. The proliferation of intelligent wireless devices---utilizing finite spectrum resources---has called for better interoperability strategies, so as to ensure reliable operations in a rapidly growing \textit{heterogeneous wireless networking ecosystem} \cite{damnjanovic2011survey, khandekar2010lte, xu2021survey}. Interference from other sources operating in the same channel, e.g., 5G waveforms or WiFi signals, can lead to deterioration in quality of service. While conventional approaches treat such interference as Gaussian noise, there could be significant performance gains if we were able to learn and exploit intricate statistical structures of these co-channel signals \cite{lancho2022}. We take an important step towards this goal by developing data-driven \textit{score-based SCSS solutions} that demonstrate significant gains over modern and conventional approaches, thereby unveiling novel intelligent and effective interference mitigation strategies.  We further elaborate on the broader impact of our work in Appendix \ref{sec:broader impact}.

\paragraph{Contributions.}We start from first principles and propose a new Bayesian-inspired method for SCSS.  Our main contributions are as follows.
\begin{itemize}
    \item We present a new method for SCSS that leverages the (approximate) score from pre-trained diffusion models to extend MAP estimation using generalized Bayes' theorem with an $\alpha$-posterior \cite{Perrotta2020PracticalCO, 10.1093/biomet/asx010} across different levels of Gaussian smoothing.  Our method, termed as $\boldsymbol{\alpha}$-\textbf{RGS} ($\boldsymbol{\alpha}$-posterior with \textbf{R}andomized \textbf{G}aussian \textbf{S}moothing) (see \S \ref{sec: map estimation for source separation}), is easy to implement by randomizing the predetermined training noise levels without cumbersome tuning of a special annealing noise schedule, as in Langevin-dynamics-based approaches \cite{jayaram2020source}.
    \item We show that our formulation is a multi-source extension of the recently proposed score distillation sampling (SDS) objective \cite{poole2022dreamfusion}.  Our analysis (see \S \ref{sec: analysis}) shows that despite randomizing across multiple noise levels, the local extrema of our loss (and hence of SDS) asymptotically approach the modes of the underlying unsmoothened source distribution(s) (see \S \ref{sec: the smoothing model}).
    \item We demonstrate through experimental results (see \S \ref{sec: experiments}) that $\alpha$-RGS outperforms classical signal processing and annealed Langevin-dynamics-based approaches for RF source separation, with a 96\% and 94.5\% improvement in BER (averaged across different levels of interference) over traditional and existing learning-based methods, respectively. While score-based methods have been adopted to solve problems such as symbol detection \cite{9953978} and channel estimation \cite{arvinte2022mimo}, to the best of our knowledge, this is the first work that applies score-based models for the task of source separation in the RF domain.
\end{itemize}

\paragraph{Paper Organization.} The paper is organized as follows. The necessary prerequisites are introduced in \S \ref{sec:prerequisites}. \S \ref{sec: map estimation for source separation} meticulously develops our proposed method, starting from basic principles.  We provide analytical characterizations of our method in \S \ref{sec: analysis}, with additional details available in Appendix \ref{sec:complete characterization of alpha rgs}.  In \S \ref{sec: experiments}, we introduce our problem in the context of digital communication signals and describe our experiments along with results. A detailed summary of our results is available in Appendix \ref{sec:complete results}.  A primer on digital communication signals and details about our datasets are available in Appendix~\ref{sec:datasets}.  Details related to diffusion models in the RF context, a primer on conventional baselines, and a description of score-based separation baselines can be found in Appendices \ref{sec:diffusion models for RF signals}, \ref{sec:classical rf interference mitigation techniques} and \ref{sec:BASIS}, respectively. We conclude with \S \ref{sec: conclusion}, where we comment on future work and on the broader impact of our method.

\section{Prerequisites}
\label{sec:prerequisites}

\subsection{Finite Alphabet Signal Processing}
\label{sec:finite alphabet signal processing}

In many engineering systems, signals commonly exhibit continuous magnitudes. However, in particular cases, these signals may possess discrete properties, leading to a finite (but possibly large) number of possible realizations. Such signals are often expressed as,
\begin{equation}
    s(t) = \sum_{p=-\infty}^{\infty} c_{p}\;g_p(t - t_p), 
    \label{eq:signal processing equation}
\end{equation}
where $c_p \in \mathcal{S}$ are discrete symbols drawn from a finite set $\mathcal{S} \subset \C$ (or $\R$) and $g_p(\cdot)$ is a continuous filter that ``carries'' contributions from the symbols.  For example, in optical or RF communications, the symbols could correspond to complex-valued mappings of the underlying bits \cite{lapidoth2017foundation}, while in the discrete tomography domain, the symbols might correspond to measurements obtained from different materials with a finite number of phases or absorption values \cite{gouillart2013belief}.   In this work, we will address the challenges associated with finding the global optimum within the optimization landscape formulated for separating a superposition of such discrete sources.  

\subsection{Diffusion Models}
\label{sec: diffusion models}

Diffusion models are a class of generative models introduced by \citet{sohl2015deep} based on the principles of thermodynamic diffusion. The model is parametrized by two Markov chains of length $T$---i) the forward chain $q(\bsv{x}_t|\bsv{x}_{t-1})$, in which noise is gradually added to the data $\bsv{x}_0$ such that at the end of the forward process, $q(\bsv{x}_T) = \mathcal{N}(\bsv{x}_T; 0, \mathbf{I})$; and ii) the learned reverse chain $p_{\phi}(\bsv{x}_{t-1}|\bsv{x}_{t})$, in which noise is gradually removed (i.e., denoising) in order to generate new samples.

These models are trained by minimizing a variational lower bound on the log-likelihood. Recent work by \citet{ho2020denoising} in the image domain shows that this optimization problem translates to learning a denoiser $r_{\phi}(\bsv{x}_t, t)$ parametrized as a neural network with weights $\phi$ that estimates the noise term in  $\bsv{x}_t \coloneqq \sqrt{\alpha_t}\bsv{x}_0 + \sqrt{1 - \alpha_t}\bsv{z}$, $\bsvz \sim \mathcal{N}(0, \mathbf{I})$ across different timesteps $t \sim \text{Unif}\left(\{1, \dots, T\}\right)$,
\begin{equation}
    \phi^* = \argmin_{\phi} \E_{\brv{t}, \brv{x}, \brv{z}}\left[\|r_{\phi}(\bsv{x}_t, t) - \bsv{z}\|_2^2\right],
    \label{eq:diffusion opt}
\end{equation}
where $\{1 - \alpha_i\}_{i=1}^T$ defines an increasing signal \textit{variance-preserving} noise schedule with $0 \le \alpha_i \le 1$.  The works in \cite{ho2020denoising, song2019generative, song2020score} show that objective \eqref{eq:diffusion opt} is equivalent to minimizing a loss whose minimizer is an estimate of the marginal score $S_{\brv{x}_t}(\bsv{x}_t) = \nabla_{\bsv{x}_t} \log q(\bsv{x}_t)$ such that,
\begin{equation}
    S_{\brv{x}_t}(\bsv{x}_t) \approx -(1 - \alpha_t)^{-\nicefrac{1}{2}}r_{\phi}(\bsv{x}_t, t).
    \label{eq:relationship between score and denoiser}
\end{equation}
Other works have extended diffusion models to continuous time using stochastic differential equations \citep{song2020score}, while others have used them for signals in the audio domain \cite{kong2020diffwave} and forecasting problems \cite{rasul2021autoregressive}.

\section{A Bayesian Framework for Source Separation}
\label{sec: map estimation for source separation}

We now develop the Bayesian framework underlying our proposed source separation method. We are particularly motivated in developing a generalized framework that relies on statistical priors \emph{only}, without any other form of domain-specific regularization.  

\subsection{\textbf{\textit{Maximum a Posteriori}} (MAP) Source Separation}

\paragraph{Signal statistical structure.}Let $\bsv{s} \in \mathcal{S} \subset \C^d$ and $\bsv{b} \in \C^d$ be two \textit{statistically independent} complex-valued vector sources, where $\mathcal{S}$ is a countable set of all realizations $\bsvs$.  We assume that $\bsvs$ has PMF $P_{\brvs}$, and we let $\bsvb$ be an arbitrary source (potentially discrete with some noise) with PDF $p_{\brvb}$.  We assume that the latter distributions are multimodal, where the probability is generally (close to) zero except at the modes.  Additionally, we seek to be robust to the challenging setting where the distributions have \emph{multiple equiprobable modes}, so as to develop novel methods that can tackle the finite alphabet source separation problem as is motivated in \S \ref{sec:introduction}. We emphasize that these assumptions do not completely characterize the often complicated fine-grained statistical source structure, and we therefore rely on generative models to learn such unknown structures from data.

\paragraph{MAP problem formulation.}Consider a mixture composed of two superimposed sources,
\begin{equation}
    \bsv{y} = \bsv{s} + \kappa \bsv{b},
    \label{eq:mixture-model}
\end{equation}
where $\kappa \in \R_+$ is a relative scaling coefficient between the two signals.

Given $\bsvy$ and assuming $\kappa$ is known, in order to separate the sources, it is sufficient to estimate $\bsvs$ since $\bsvb = (\bsvy - \bsvs)/\kappa$.  The MAP estimate of $\bsvs$ is then given by,
\begin{align}
    \begin{aligned}
        \widehat{\bsvs} = \argmax_{ \bsvs\in\mathcal{S}\,\,\text{s.t}\,\,\bsvy = \bsvs + \kappa \bsvb } p_{\brvs|\brvy}(\bsvs|\bsvy).
        \label{eq:MAP-s-discrete}
    \end{aligned}
\end{align}
Using Bayes' theorem, \eqref{eq:MAP-s-discrete} can be equivalently expressed as,
\begin{subequations}
    \begin{align}
        \begin{aligned}
            \argmax_{\bsvs\in\mathcal{S}\,\,\textrm{s.t}\,\,\bsvy = \bsvs + \kappa \bsvb} & \, p_{\brvs|\brvy}(\bsvs|\bsvy) 
        \end{aligned} &= \begin{aligned}
            \argmax_{\bsvs\in\mathcal{S}\,\,\textrm{s.t}\,\,\bsvy = \bsvs + \kappa \bsvb} & \, p_{\brvy|\brvs}(\bsvy|\bsvs) P_{\brvs}(\bsvs)
        \end{aligned}\label{eq:regular posterior orig}\\
            &= \argmin_{\bsvs \in \mathcal{S}} - \log P_{\brvs}(\bsvs) - \log p_{\brvb}\left((\bsvy - \bsvs)/\kappa\right),\label{eq:regular bayes orig}
    \end{align}
\end{subequations}
where the likelihood $p_{\brvy|\brvs}(\bsvy|\bsvs) = p_{\brvb}\left(\left(\bsvy - \bsvs\right)/\kappa\right)$ under the constraint \eqref{eq:mixture-model}, and we convert to negative log probabilities in \eqref{eq:regular bayes orig}.  Due to the underlying discreteness of $\bsvs$, and the potential underlying discreteness of $\bsvb$ as well, the objective function in \eqref{eq:regular bayes orig} is not differentiable. Hence, in its current form, gradient-based techniques cannot be used to solve this problem, and we must instead resort to combinatorial methods, which are computationally infeasible even for moderate dimension size $d$.

\subsection{Proposed Method (\texorpdfstring{$\alpha$-RGS}{a-RGS})}
\label{sec: proposed method}

To overcome the computational complexity of combinatorial-based methods, our goal is to develop new gradient-based SCSS solutions that leverage diffusion models trained on discrete sources. To this end, we propose to use multiple levels of Gaussian smoothing with an extended MAP framework with an $\alpha$-posterior, such that the optimization landscape of the resulting objective function is smoothened.

\subsubsection{The Smoothing Model and \texorpdfstring{$\alpha$-posterior}{a-posterior} Generalized Bayes'}
\label{sec: the smoothing model}

\paragraph{Surrogate distribution.} \label{par:surr}One can almost perfectly approximate $P_s$ (in some well-defined sense) with the \textit{surrogate distribution} $p_{\bar{\brvs}}$ , where $\bar{\bsvs} = \bsvs + \varepsilon_s$ for $\varepsilon_s \sim \mathcal{N}(0, \sigma_s^2\mathbf{I})$, $\sigma_s \rightarrow 0$. While now theoretically amenable to optimization via gradient descent, the sharp modes, which constitute numerical pitfalls, often cause gradient-based methods to get stuck in local extrema. 

\paragraph{Gaussian Smoothing Model.}We adopt a \textit{variance-preserving} smoothing model, with adjustable noise levels, based on the formulation of the stochastic process proposed in \cite{ho2020denoising}. Let $\{1 - \alpha_t\}_{t=1}^T$ be a sequence of noise levels such that $\alpha_i > \alpha_{i+1}$ and $\alpha_i \in (0, 1]$.  Define the ``smoothened sources'' as
\begin{subequations}
    \begin{align}
        \tilde{\bsvs}_t(\bar{\bsvs}) &\coloneqq \sqrt{\alpha_t} \bar{\bsvs} + \sqrt{1 - \alpha_t}\bsvz_s,\label{eq:gaussian mix s}\\
        \tilde{\bsvb}_u(\bar{\bsvs}, \bsvy) &\coloneqq \sqrt{\alpha_u} \left(\bsvy -\bar{\bsvs}\right)/\kappa + \sqrt{1 - \alpha_u}\bsvz_b,\label{eq:gaussian mix b}
    \end{align}
\end{subequations}
where $t, u \sim \textrm{Unif}\left(\{1, \dots, T\}\right)$ and $\bsvz_s, \bsvz_b \sim \mathcal{N}(0, \mathbf{I}_d)$. Observe that $\tilde{\bsvs}_t$ and $\tilde{\bsvb}_u$---the continuous-valued proxies for s and b, respectively---have PDFs rather than PMFs, and in particular, their PDFs have infinite support, and they are differentiable. More importantly, a direct consequence is that it smoothens the optimization landscape and  helps in preventing gradient-based algorithms from getting stuck at spurious local minima.

\paragraph{Generalized Bayes' with an $\alpha$-posterior.}We found it useful to replace the likelihood in \eqref{eq:regular posterior orig} with a distribution proportional to $p_{\brvy|\brvs}(\bsvy|\bsvs)^{\omega}, \, \omega > 1$. Intuitively, under the constraint \eqref{eq:mixture-model}, this sharpens the distribution of $\bsvb$ and gives a higher weight to the modes of $p_{\brvb}$ relative to the natural weighting that arises from the MAP criterion. This is beneficial, for example, when $p_{\brvb}$ is more complicated and has many more modes than $P_s$.  

This aforementioned reweighting has been used as an implementation trick in diffusion sampling with classifier conditioning \cite{dhariwal2021diffusion}, but we recognize this as MAP estimation with an $\alpha$-posterior\footnote{$\alpha$ should not be confused with, and has no relation to, $\alpha_t$ from the Gaussian smoothing model.}  where $\alpha=\omega$ which is expressed through the generalized Bayes' rule \cite{Perrotta2020PracticalCO, 10.1093/biomet/asx010, zhang2003learning, grunwald2012safe} as,
\begin{equation}
     \underbrace{p_{\brvs|\brvy}(\bsvs|\bsvy; \omega)}_{\alpha\text{-posterior with } \alpha=\omega} \propto \underbrace{p_{\brvy|\brvs}(\bsvy|\bsvs)^{\omega}}_{\textrm{tempered likelihood}} \underbrace{P_{\brvs}(\bsvs)}_{\textrm{prior}}.\label{eq:generalize bayes}
\end{equation}
In practice, using an $\alpha$-posterior has demonstrated increased learning speeds \cite{Perrotta2020PracticalCO, 10.1093/biomet/asx010} and could potentially help in resolving model mismatch arising from the use of approximate densities or scores (rather than exact ones) during optimization via gradient descent.

\paragraph{Single noise level estimation loss.}Let $\widehat{\bsvs}(\btheta) = \btheta$ be our estimate of $\bsvs$.\footnote{Owing to the continuous nature of the optimization landscape, $\btheta$ is in fact an estimate of $\bar{\bsvs}$ (see \S \ref{par:surr}), and does not affect BER and MSE estimates significantly.} Motivated by the form in \eqref{eq:regular bayes orig}, we generalize using the smoothing \eqref{eq:gaussian mix s}-\eqref{eq:gaussian mix b} in conjunction with the $\alpha$-posterior with $\alpha=\omega$  to define a new \textit{single noise level estimation loss},
\begin{equation}
    \mathcal{L}_{t, u}(\btheta) \coloneqq -\log p_{\tilde{\brvs}_t}\Bigl(\tilde{\bsvs}_t\left(\btheta\right)\Bigr) - \omega\log p_{\tilde{\brvb}_u}\left(\tilde{\bsvb}_u\left(\btheta, \bsvy\right)\right).\label{eq:single noise level loss}
\end{equation}
While departing from the original MAP \eqref{eq:regular bayes orig}, the newly-defined approximated loss \eqref{eq:single noise level loss} thereof facilitates gradient-based methods, which is key to our solution approach. Particularly, by varying the level of smoothing, \eqref{eq:single noise level loss} is more easily explored in regions between the modes via gradient descent.

  \begin{algorithm}[!t]
    \caption{Proposed Method: $\alpha$-posterior with Randomized Gaussian Smoothing ($\alpha$-RGS)}
    \label{alg: proposed method}
    \begin{algorithmic}[1]
      \Function{separation}{$\bsvy$, $\kappa$, $N$, $\{\eta_i\}_{i=0}^{N-1}$, $\btheta^{(0)}$} \Comment{$N$ total steps, learning rate $\eta_i$ at step $i$}
        \For{$i\gets 0, N-1$}
            \State $t, u \sim \text{Uni}\{1, \dots, T\}, \,\,\bsvz_s, \bsvz_b \sim \mathcal{N}(0, \mathbf{I})$ \Comment{Sample random noise levels}
            \State $\tilde{\bsvs}_t\left(\btheta^{(i)}\right) = \sqrt{\alpha_t}\, \btheta^{(i)} + \sqrt{1 - \alpha_t}\bsvz_s$ \Comment{Smooth $\bsvs$ at level $t$, \eqref{eq:gaussian mix s}}
            \State $\tilde{\bsvb}_u\left(\btheta^{(i)}, \bsvy\right) = \sqrt{\alpha_u} \left(\bsvy - \btheta^{(i)}\right)/\kappa + \sqrt{1 - \alpha_u}\bsvz_b$ \Comment{Smooth $\bsvb$ at level $u$, \eqref{eq:gaussian mix b}}
            \State $\widehat{\bsvz}_s,\, \widehat{\bsvz}_b = r_{\phi_{\bsvs}}\Bigl(\tilde{\bsvs}_t\left(\btheta^{(i)}\right),\, t\Bigr), \, r_{\phi_{\bsvb}}\Bigl(\tilde{\bsvb}_u\left(\btheta^{(i)}, \bsvy\right), u\Bigr)$ \Comment{Compute scores, \eqref{eq:relationship between score and denoiser}}
            \State $\btheta^{(i+1)} = \btheta^{(i)} - \eta_i\biggl[\frac{\omega}{\kappa} \frac{\sqrt{\alpha_u}}{\sqrt{1 - \alpha_u}}\,\left(\widehat{\bsvz}_b - \bsvz_b\right) - \frac{\sqrt{\alpha_t}}{\sqrt{1 - \alpha_t}}\,\left(\widehat{\bsvz}_s - \bsvz_s\right)\biggr]$\Comment{Subtract noise, \eqref{eq:gradient score to denoiser}}
        \EndFor
    \State \textbf{return} $\widehat{\bsvs} = \btheta^{(N)},\, \widehat{\bsvb} = (\bsvy - \widehat{\bsvs})/\kappa$
    \EndFunction
    \end{algorithmic}
  \end{algorithm}
      
\subsubsection{Estimation Rule Across Multiple Noise Levels}
\label{sec: map estimation at multiple noise levels}

 Intuitively, larger noise variances allow us to move between modes during gradient descent. In contrast, at lower noise levels the modes are sharper, which is beneficial in resolving the solution, assuming the iterative procedure starts at the basin of attraction of the correct local extremum point, or, alternatively, another ``good'' local extremum point. We propose a new estimation rule that uses \eqref{eq:single noise level loss} across multiple noise levels. Randomizing over multiple levels of Gaussian smoothing, our proposed gradient update asymptotically (in the number of iterations) takes the form,
\begin{equation}
    \nabla_{\btheta} \mathcal{L}(\btheta) \coloneqq \underbrace{-\E_{\rv{t}, \rvz_t} \left[\sqrt{\alpha_t}\, S_{\tilde{\brvs}_t}\Bigl(\tilde{\bsvs}_t\left(\btheta\right)\Bigr)\right] + \frac{\omega}{\kappa}\E_{\rv{u}, \rvz_u}\left[\sqrt{\alpha_u} S_{\tilde{\brvb}_u}\left(\tilde{\bsvb}_u\left(\btheta, \bsvy\right)\right)\right]}_{\E_{\rv{t}, \rv{u}}\left[\nabla_{\btheta}\mathcal{L}_{t, u}(\btheta)\right]},
\end{equation}
where $t, u \sim \text{Unif}\left(\{1, \dots, T\}\right)$, and the true score is defined as
\begin{equation}
    S_{\tilde{\brv{s}}_t}\Bigl(\tilde{\bsvs}_t\left(\btheta\right)\Bigr) \coloneqq \nabla_{\bsv{x}} \log p_{\tilde{\brvs}_t}\left(\bsv{x}\right)\Bigm|_{\bsv{x} = \tilde{\bsvs}_t\left(\btheta\right)},
\end{equation}
and similarly for $S_{\tilde{\brvb}_u}$.

Our proposed updates can be implemented using stochastic gradient descent as shown in Algorithm~\ref{alg: proposed method}. We use \textit{pre-trained diffusion models} (\S \ref{sec: diffusion models}) to approximate the score when deriving the analytical score is not possible. Using \eqref{eq:relationship between score and denoiser}, we relate the learned score to the denoiser available from the diffusion model, and also use a zero mean noise corrected estimate,
\begin{equation}
    \E_{\rvz_s} \left[r_{\phi_{\bsvs}}\Bigl(\tilde{\bsvs}_t\left(\btheta\right), t\Bigr)\right] = \E_{\rvz_s} \left[r_{\phi_{\bsvs}}\Bigl(\tilde{\bsvs}_t\left(\btheta\right), t\Bigr) - \bsvz_s\right],
    \label{eq:gradient score to denoiser}
\end{equation}
in our updates as shown in Algorithm \ref{alg: proposed method}. The above rule is motivated to introduce numerical stability and reduces the variance of the updates, since the diffusion models were trained to minimize the squared value of the same error term in \eqref{eq:diffusion opt}. The state-of-the-art method of score distillation sampling (SDS) \cite{poole2022dreamfusion} that uses diffusion models as critics, also uses the \textit{same term} in their gradient update step.

\subsection{Relation to Other Works}
\label{sec: relationship to other works}

\paragraph{BASIS separation.}\citet{jayaram2020source} introduced the BASIS separation algorithm that leverages the score from a generative model to perform source separation using annealed Langevin dynamics. Unlike our method, the BASIS algorithm relies on a specially tuned noise schedule for separation that is distinct from the diffusion model training noise schedule. We circumvent the challenges associated with tuning such a schedule and instead \textit{re-use the pre-determined training noise schedule} in a randomized fashion. As such, \textit{the only parameter that we tune is the learning rate}. Our results (\S \ref{sec: source separation results}) show that we outperform BASIS in the context of RF source separation. 

\paragraph{Score Distillation Sampling.}Our work can be viewed as an extension of the recently proposed  Dreamfusion architecture \cite{poole2022dreamfusion}, which uses pre-trained image diffusion models as critics to guide the generation of novel 3D visual content.  Given a generated sample, $g(\btheta)$, Score Distillation Sampling (SDS) updates the 3D object realization using gradient descent with a gradient given by,
\begin{equation}
    \nabla_{\theta} \mathcal{L}_{\textrm{SDS}}\left(\phi, \tilde{\bsvs}=g(\btheta)\right) = \E_{\textrm{t, z}} \left[w(t) (r_{\phi}(\sqrt{\alpha_t}\,g(\btheta) + \sqrt{1 - \alpha_t}\bsvz, t) - \bsvz)\right],
    \label{eq:sds}
\end{equation}
where $w(t)$ is a scaling related to the noise variance. Our updates contain the same gradient terms in \eqref{eq:sds} and hence it can be viewed as a \textit{multi-source extension of SDS}, shedding light on its use in applications to problems beyond sampling with interactions between numerous individual priors.

\paragraph{Diffusion models as priors.} More recently diffusion priors have been used in inverse problems such as MRI reconstruction \citep{song2021solving, chung2022score}, image restoration/colorization \cite{song2020score, kadkhodaie2021stochastic, chung2022diffusion, kawar2022denoising, chung2022improving} and for developing general inversion methods \cite{chung2022diffusion, song2023pseudoinverse}. A majority of these works solve a MAP problem with domain-specific likelihood constraints, thus resembling our formulation.  However, they either use annealed Langevin dynamics or a noise contracting reverse diffusion process to sample from the posterior distribution, necessitating the tuning of a special annealing schedule. In our work, we demonstrate convergence to the same modes using a simpler scheme based on randomized levels of Gaussian smoothing (see \S \ref{sec: analysis}). The $\alpha$-posterior parameter can be interpreted as a weighting of the regularization term, thus shedding light on the potential applicability of our method to general inverse problems in the future.

\section{Characterization of \texorpdfstring{$\alpha$-RGS}{a-RGS}}
\label{sec: analysis}
\begin{figure}[!t]
    \centering
    \begin{subfigure}[t]{0.33\textwidth}
        \centering
        \includegraphics[scale=0.31]{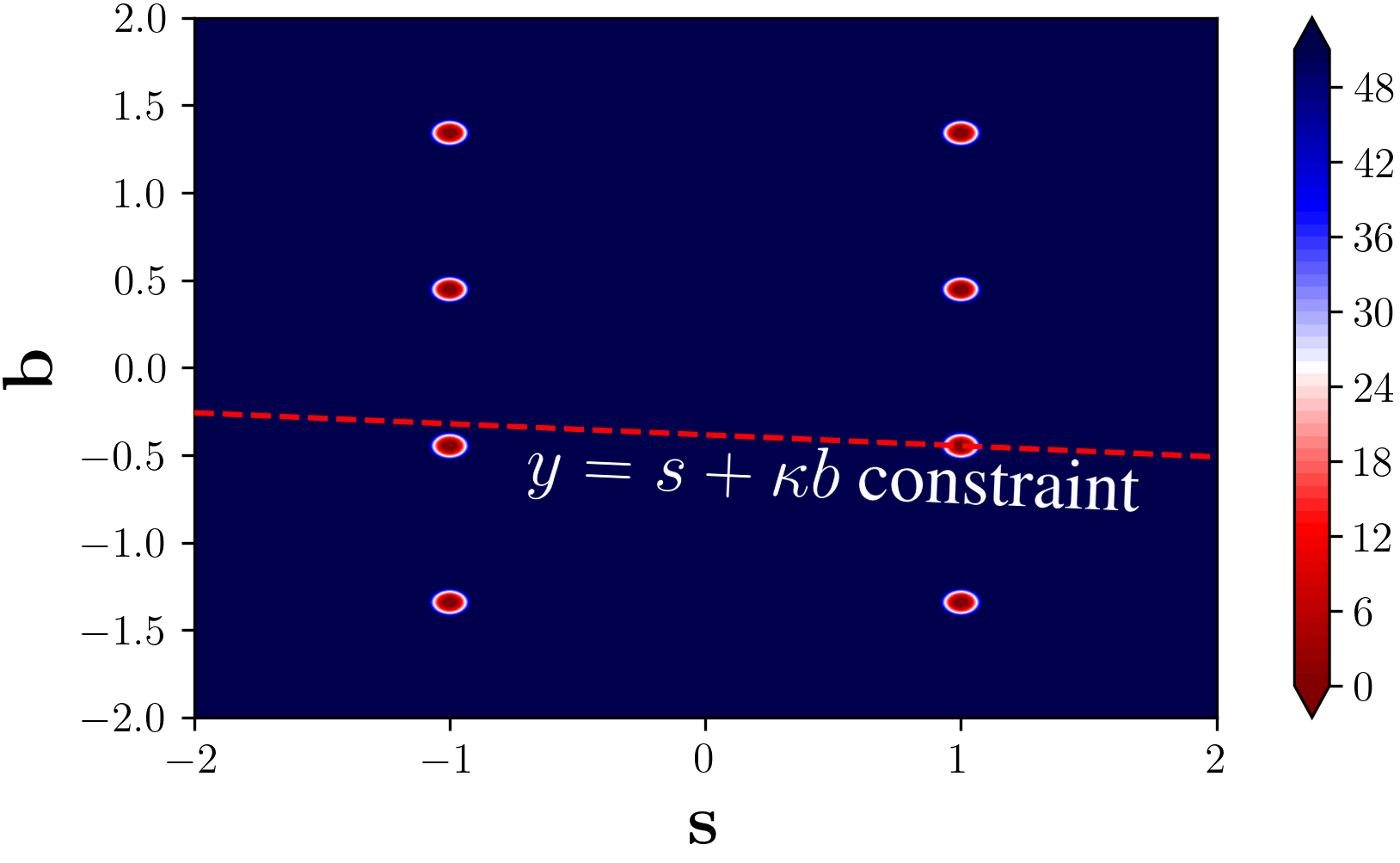}
    \end{subfigure}%
    ~
    \begin{subfigure}[t]{0.33\textwidth}
        \centering
        \includegraphics[scale=0.30]{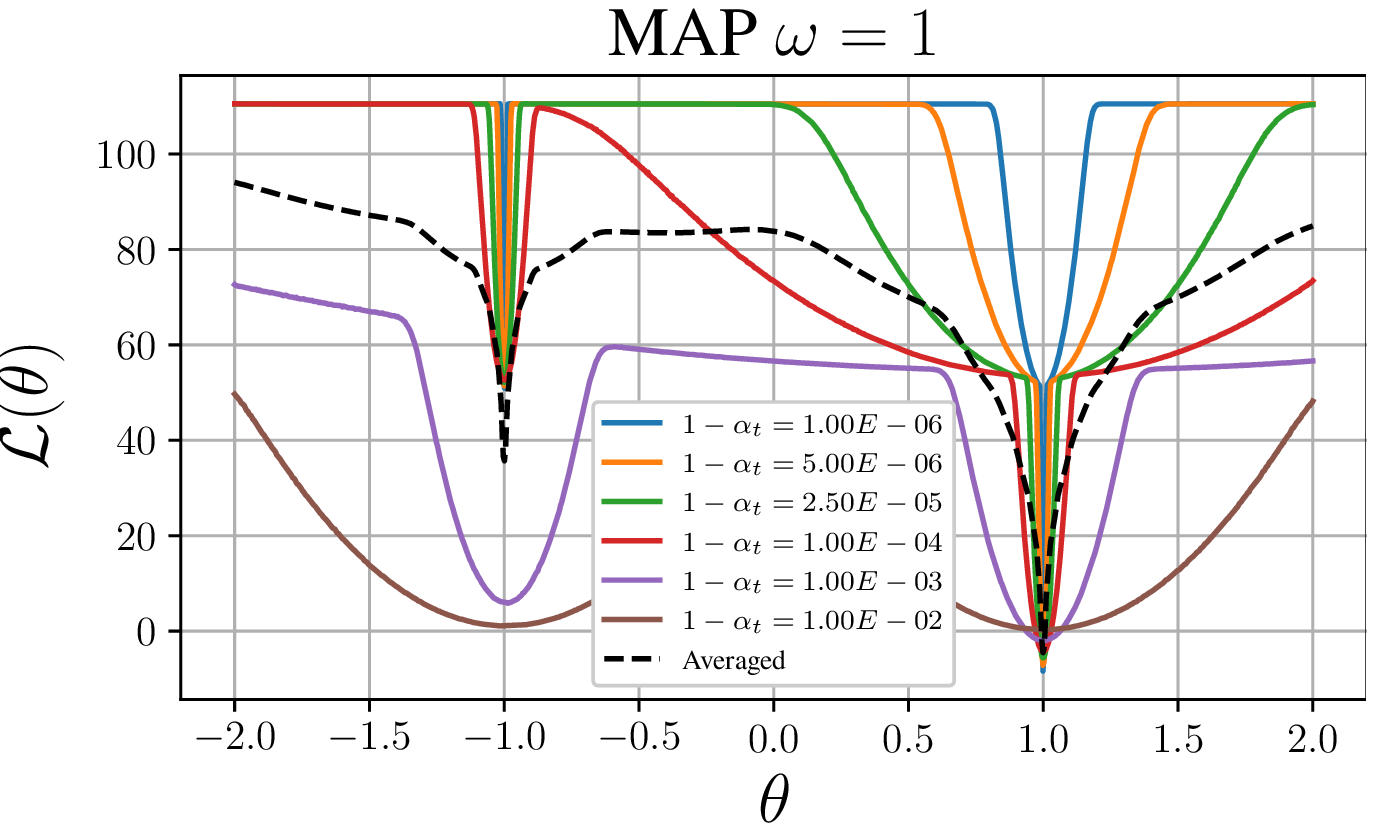}
    \end{subfigure}%
    ~
    \begin{subfigure}[t]{0.33\textwidth}
        \centering
        \includegraphics[scale=0.30]{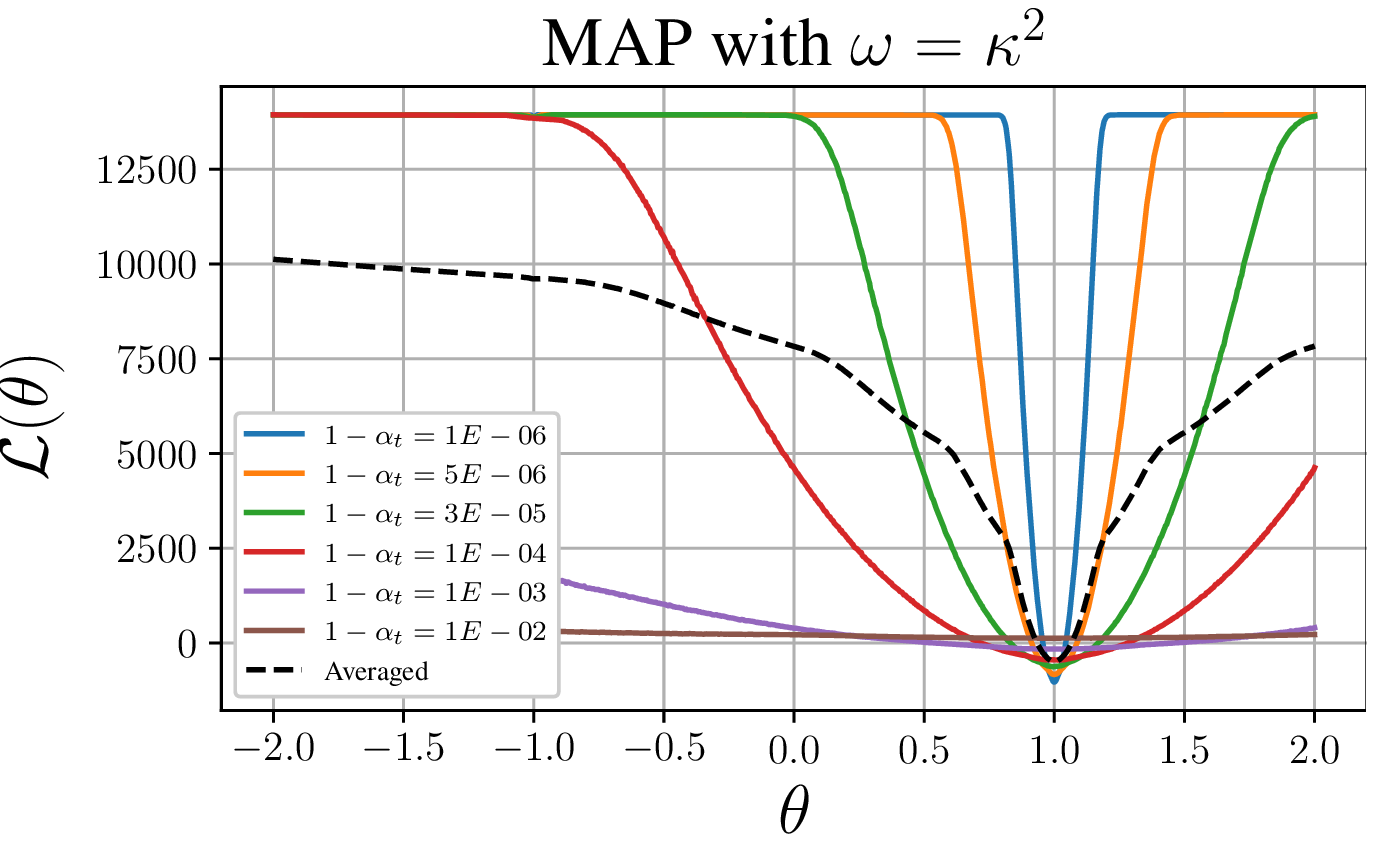}
    \end{subfigure}
    \caption{\textbf{Left:} Two discrete sources,  with infinitesimal additive noise, superimposed  to produce a joint distribution with 8 equiprobable modes.  An observed mixture $\bsvy$, imposes a linear constraint in this space. \textbf{Middle:} Extending vanilla MAP ($\omega=1$) to multiple noise levels still has a relatively large local minima. \textbf{Right:} By using $\omega=\kappa^2$, we are able to accentuate the correct mode and smooth the landscape even further. Colored curves correspond to \eqref{eq:single noise level loss expectation} evaluated with $T=1$ and $t=u$.}
    \label{fig:analysis full loss}
\end{figure}
In this section, we characterize the behavior of our objective function through a simple yet intuitive example. We first present a sufficient condition for perfect signal separation in the context of discrete sources that follow the signal generation model in \eqref{eq:signal processing equation}.
% \begin{proposition}
%     Let $\{u_t\}_{t=1}^L \in \mathcal{U}^L$ and $\{v_t\}_{t=1}^L \in \mathcal{V}^L$ be finite-length sequences over discrete alphabet $\mathcal{U}$ and $\mathcal{V}$.  Define the sources $s = f\left(\{u_t\}_{t=1}^L\right)$ and $b = g\left(\{v_t\}_{t=1}^L\right)$, where $f:\mathcal{U}^L \rightarrow \mathbb{C}$ and $g:\mathcal{V}^L \rightarrow \mathbb{C}$ are complex-valued symbol mappings.  If the mapping between the discrete representation and the symbol representation of the sources is unique, perfect recovery is possible.
%     \label{prop:sigsep}
% \end{proposition}
\begin{proposition}
    Let $s(t)$ and $b(t)$ be two sources following \eqref{eq:signal processing equation} with underlying symbols $c^s_p$ and $c^b_p$ respectively. Assume that the symbols are obtained as,
    \begin{equation}
        c^s_p= f\left(\{u_i\}_{i=p}^{L + p}\right)\quad\text{and}\quad c^b_p = h\left(\{v_i\}_{i=p}^{L + p}\right),
    \end{equation}
    where $f:\mathcal{U}^L \rightarrow \mathbb{C}$ and $h:\mathcal{V}^L\rightarrow \mathbb{C}$ are mappings from a sequence of length $L$ over the discrete alphabets $\mathcal{U}$ and $\mathcal{V}$ respectively\footnote{In digital communications, the discrete alphabet is typically the binary alphabet, representing the underlying bits that need to be communicated, and the symbols refer to the digital constellation (see \S \ref{sec:digital communication signals})}. If the mapping between the discrete representation and the symbol representation of the sources is unique, perfect recovery is possible.
    \label{prop:sigsep}
\end{proposition}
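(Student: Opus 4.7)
The plan is to establish perfect recovery through a chain of invertibility arguments, decomposing the problem into signal separation, symbol extraction, and bit decoding. First, I would argue that perfect recovery of the underlying sequences $\{u_i\}$ and $\{v_i\}$ reduces to perfect recovery of the symbol sequences $\{c^s_p\}$ and $\{c^b_p\}$: since $f:\mathcal{U}^L \to \mathbb{C}$ and $h:\mathcal{V}^L \to \mathbb{C}$ are assumed injective (``unique mapping''), they admit left inverses on their respective ranges, so the sequences are uniquely decodable from the symbols. This observation is the core motivation for the uniqueness hypothesis, which has no content about the mixture itself but rather ensures that the BER-level recovery goal is equivalent to symbol-level recovery.

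Next, I would argue that perfect recovery of the continuous-time signals $s(t)$ and $b(t)$ from the mixture $y(t)$ is sufficient for perfect symbol recovery. Under standard assumptions on the pulse shapes $g_p$ in \eqref{eq:signal processing equation} (e.g., Nyquist orthogonality ensuring zero intersymbol interference), the symbols $c^s_p$ can be extracted from $s(t)$ via matched filtering or symbol-rate sampling, and analogously for $b(t)$. So the chain of reductions is: mixture $\to$ separated signals $\to$ symbols $\to$ discrete sequences, where each arrow is invertible under the stated assumptions.

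Finally, I would establish that perfect signal separation is achievable in principle. The key observation is that the discrete nature of the symbols implies the joint support of $(s,b)$ is countable (modulo the infinitesimal noise component introduced in \S \ref{par:surr}), while the mixture constraint $\bsvy = \bsvs + \kappa \bsvb$ is a hyperplane in the ambient signal space. Generically, this hyperplane intersects the countable support at a unique point, which, in principle, can be identified by the MAP estimator in \eqref{eq:MAP-s-discrete}. The main obstacle is precisely this last step: formally ruling out collisions of the form $(\bsvs_1,\bsvb_1) \neq (\bsvs_2,\bsvb_2)$ with $\bsvs_1 + \kappa \bsvb_1 = \bsvs_2 + \kappa \bsvb_2$. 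This identifiability likely requires that $\kappa$ be generic relative to the alphabets $\mathcal{U}, \mathcal{V}$ and the mappings $f, h$, and is the only nontrivial piece of the argument; the rest is a bookkeeping exercise that composes the inverse maps along the reduction chain.
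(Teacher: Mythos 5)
The paper never actually proves this proposition: it is stated in \S\ref{sec: analysis} as a ``sufficient condition for perfect signal separation'' and the text immediately moves on to the loss landscape; no argument for it appears in the appendices either. So there is no official proof to compare against, and your proposal has to be judged on its own terms.

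Your reduction chain (mixture $\to$ separated waveforms $\to$ symbols $\to$ discrete sequences) is the natural one, and the outer links are fine: injectivity of $f$ and $h$ inverts symbols to sequences, and symbol-rate sampling or matched filtering inverts waveforms to symbols under standard pulse-shape assumptions. The genuine gap is the one you flag yourself and then leave open: nothing in the hypothesis, as you read it, rules out collisions $\bsvs_1 + \kappa\bsvb_1 = \bsvs_2 + \kappa\bsvb_2$ with $(\bsvs_1,\bsvb_1)\neq(\bsvs_2,\bsvb_2)$. Indeed, under that reading---injectivity of $f$ and $h$ separately---the claim is false without further assumptions: take $\kappa=1$ and let both sources be BPSK with the same pulse shape; each bit-to-symbol map is injective, yet the symbol pairs $(+1,-1)$ and $(-1,+1)$ produce the same mixture sample, so $\bsvy$ does not determine the pair. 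The only way the proposition holds as stated is to read ``the mapping between the discrete representation and the symbol representation of the sources is unique'' as injectivity of the \emph{joint} map from $(\{u_i\},\{v_i\})$ to the observable, at which point the conclusion is essentially tautological (an injective map on a countable domain can in principle be inverted, e.g., by exhaustive search) and your step on separability is part of the hypothesis rather than something to be proved. Either way, your proposal as written does not close this step---``likely requires that $\kappa$ be generic'' is a conjecture, not an argument---so the proof is incomplete at exactly the point where all the content lives.
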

We now argue that under such conditions, our method in Algorithm \ref{alg: proposed method} asymptotically approaches a local extremum corresponding to the following loss function,
\begin{equation}
    \mathcal{L}(\btheta) \coloneqq -\E_{\rv{t}, \brvz_s}\left[\log p_{\tilde{\brvs}_t}\Bigl(\tilde{\bsvs}_t\left(\btheta\right)\Bigr)\right] - \omega\E_{\rv{u}, \brvz_u}\left[\log p_{\tilde{\brvb}_u}\left(\tilde{\bsvb}_u\left(\btheta, \bsvy\right)\right)\right].\label{eq:single noise level loss expectation}
\end{equation}
To see this, let $\rvs$ and $\rvb$ be two discrete sources with equiprobable modes at $\{-1, +1\}$ and $\{\pm\nicefrac{6}{\sqrt{20}}, \pm\nicefrac{2}{\sqrt{20}}\}$ respectively. Figure \ref{fig:analysis full loss} shows the contours of the negative log joint probability, with the sources augmented with an infinitesimally small amount of Gaussian noise (see \S \ref{sec: the smoothing model}). An observed mixture $\svy$, adds a linear constraint in the ($\svs$, $\svb$) plane. The goal of Algorithm \ref{alg: proposed method} is to pick out the constraint-satisfying mode at $(1, \nicefrac{-2}{\sqrt{20}})$. If $\omega=1$, as shown in the middle plot, and given a poor initialization $\btheta^{(0)}$ closer to $-1$, the gradients in this region may prevent the estimate from escaping the suboptimal local minimum. If instead, we use $\omega=\kappa^2 > 1$ ($\kappa = 15.85$), as shown in rightmost plot, the optimization landscape is better conditioned in the same (absolute) neighborhood around $\btheta=-1$ with the mode at $+1$ much more accentuated. Specifically, if the algorithm is now initialized at $\btheta^{(0)}=-1$, after enough iterations, gradient steps at larger noise levels would lead the solution towards the direction of $\btheta=1$. Thus, on average (black curve), after enough iterations, the solution will approach $+1$. In constrast, Langevin-dynamics-based approaches without the $\alpha$-posterior weighting \cite{jayaram2020source}, if initialized poorly, could potentially get stuck at local extrema due to sharpening of the optimizing landscape as the level of noise decreases.

The estimate of the discrete source $\bsvs$ can be obtained by mapping the solution returned by Algorithm \ref{alg: proposed method} to the closest point (in the Euclidean sense) in the discrete alphabet $\mathcal{S}$.  This relies on the extremum $\btheta^*$ of \eqref{eq:single noise level loss expectation} being sufficiently close to the desired mode of $p_{\brvs|\brvy}(\bsvs|\bsvy;\omega)$, so that $\btheta^*$ can be mapped to the correct point in $\mathcal{S}$ with high probability. In the context of the above example, a key observation in achieving this desired behavior is that the mode at $(1, \nicefrac{-2}{\sqrt{20}})$ is still prominent at large noise levels. Thus, randomizing across different noise levels helps balance the exploration between modes and the resolution of the estimate.  This mainly requires a suitable noise level range (i.e., a lower bound on $\alpha_T$), to ensure that the modes are sufficiently resolved.  In our experiments, we show that \emph{no additional tuning is required} and that the training noise levels from the pre-trained diffusion models can be re-used, provided that the models have learned the source's structure sufficiently well.  

A more detailed characterization of our algorithm's convergence to the modes can be found in Appendix \ref{sec:complete characterization of alpha rgs}.  We underscore that in this work we focus on general signal types and mixtures, where perfect separability is not guaranteed and hence performance bounds are not analytically tractable.

\section{Applications and Experiments}
\label{sec: experiments}

\subsection{Digital Communication Signals}
\label{sec:digital communication signals}

\paragraph{Digital communication pipeline.}At a high level, digital communications deals with the transmission of bits by modulating a so-called ``carrier signal''.  Groups of bits, from which the underlying discreteness of these sources originates, are first mapped to symbols $c_{p} \in \C$ via the \textit{digital constellation}---a mapping between groups of bits and a finite set of complex-valued symbols.  These symbols are subsequently aggregated via \eqref{eq:signal processing equation} to form a \emph{complex-valued continuous waveform}. In the RF domain, $g(\cdot)$ is known as the \textit{pulse shaping} filter and helps limit the signal's bandwidth \cite[Sec~4.4.3]{heath2017introduction}. The constellation is chosen (among other considerations) by the number of bits modulated simultaneously.  Common schemes include modulating two bits at a time (Quadrature Phase Shift Keying, or QPSK),  or one bit at a time (Binary Phase Shift Keying, or BPSK).  Figure \ref{fig:digital_comm_modulation} illustrates a representative modulation pipeline. 
\begin{figure}[!t]
    \centering
    \includegraphics[scale=0.128]{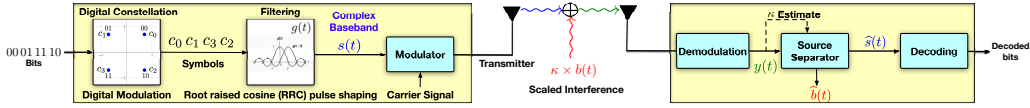}
    \caption{The single-carrier digital modulation pipeline with an intelligent decoder that performs a pre-processing stage of source separation. Illustrated is an example with a QPSK constellation and a root-raised cosine (RRC) pulse shaping function.}
    \label{fig:digital_comm_modulation}
\end{figure}
To recover the bits at the receiver, one may adopt \textit{matched filtering} (MF) \cite[Sec~5.8]{lapidoth2017foundation} before the estimation of the underlying symbols, and thereafter decode them back to bits. For commonly used pulse shaping functions, such as the root-raised cosine (RRC) shown in Figure \ref{fig:digital_comm_modulation}, the matched filter and pulse shaping filter coincide. We refer readers to \cite{lapidoth2017foundation, heath2017introduction, goldsmith2005wireless} for a more thorough exposition of the topic. 

\paragraph{Interference mitigation.}Mitigating co-channel interference is a challenging problem, especially in heterogeneous networks \cite{damnjanovic2011survey, khandekar2010lte, xu2021survey}. If the system parameters and signal generation model are known ahead of time, one can leverage this knowledge to devise hand-crafted priors.  However, one often deals with interference from more complicated wireless sources, for which the signal model is unknown. In such scenarios, data-driven methods that learn priors from background recordings can be useful. Our results demonstrate that $\alpha$-RGS, which leverages these priors, sets a new state-of-the-art in SCSS for RF mixtures.  We envision that our SCSS solution could help mitigate such interference prior to decoding the signal, as shown in the right hand (receiver) side of Figure \ref{fig:digital_comm_modulation}.

\paragraph{Deep learning for RF systems.}Recently deep learning methods have demonstrated the potential to reap significant gains over handcrafted model-based methods in RF applications \cite{eldar2022machine, 9802083}. Some works have studied the problem of symbol detection \cite{neev2019, schlezinger2020}, where they assume that the channel is stationary. Other works, such as DeepSIC \cite{deepsic2020}, use deep learning for interference cancellation in the multi-user setting within the same channel. In contrast, we deal with the more challenging setting of non-stationary interference, thereby requiring efficient exploitation of intricate temporal structures.  While the latter works consider the superposition of independent and identically distributed sources (same technology), we assume unknown additive interference (cross technology), a hard problem to solve with na\"ive decoding methods in the absence of explicit prior knowledge about the interference. Our problem formulation is closer to recent work in \cite{lee2022exploiting}. However, they learn an end-to-end estimate of the signal from paired data samples. As motivated in \S \ref{sec:introduction}, we instead assume restricted access to joint data, with a focus on capturing properties of the components through independent priors. 

\subsection{Experimental Setup}
\label{sec: experimental setup}

We now detail the setup for training diffusion models on RF signals.  We subsequently explain how these models are used in the implementation of our method at inference time (i.e, for separation).

\paragraph{RF SCSS formulation.} We are interested in recovering $\bsvs$, the signal of interest (SOI), from a mixture $\bsvy = \bsvs + \kappa \bsvb$, where $\bsvb$ is assumed to be a co-channel interference with unknown system parameters.  We evaluate performance using two metrics---i) the MSE, that measures the distortion between the estimated SOI and the ground truth; and ii) the BER of the decoded discrete representation, which is obtained from the estimated SOI by extracting the underlying bits (see Appendix \ref{sec:datasets}).  The latter measure is particular to digital communication signals as it captures the fidelity of the estimated representation that is only partially reflected in the MSE criterion.

\paragraph{Datasets.}We train diffusion models on different RF datasets ---i) synthetic QPSK signals with RRC pulse shaping (see \S \ref{sec:digital communication signals}), ii) synthetic OFDM signals (BPSK and QPSK) with structure similar to IEEE 802.11 WiFi signals; and iii)  signals corresponding to ``CommSignal2'' from the RF Challenge \cite{rfchallenge}, which contains datasets of over-the-air recorded signals.  All synthetic datasets were created using the NVIDIA Sionna toolkit \cite{sionna}. All datasets contain between 150k - 500k samples and we use a 90-10 train-validation split during training. Additional details are in Appendix \ref{sec:datasets}. 

\paragraph{Diffusion model training.}We adopt the Diffwave \cite{kong2020diffwave} architecture for our experiments, with a minor changes (see Appendix \ref{sec:diffusion models for RF signals}) to accommodate the complex-valued nature of our signals.  Our models are trained in the waveform domain on inputs of length $2560$ with the real and imaginary components concatenated in the channel dimension.  We train unconditional diffusion models and assume no access to knowledge about the signal generation model.  We use noise variance levels in the range $(1\mathrm{e}{-4}, 0.72)$ discretized into $50$ levels. We train for 500k steps with early stopping on 2 x NVIDIA 3090 GPUs.  Detailed hyperparameters for our training setup are provided in Appendix \ref{sec:diffusion models for RF signals}.

\paragraph{Source separation setup.}We consider three different mixtures in our experiments, all using an RRC-QPSK signal as the SOI $\bsvs$.   The interference signal $\bsvb$ is one of OFDM (BPSK), OFDM (QPSK) or a windowed recording from the CommSignal2 dataset.  Our proposed algorithm uses $\omega = \kappa^2$ and is initialized with the MF solution given the mixture $\bsvy$ (see \S \ref{sec:digital communication signals}).  Note that $\kappa$ can be equivalently described as the signal to interference ratio ($\textrm{SIR} \coloneqq 1/\kappa^2 \coloneqq \E_{\brvs}\left[\|\bsvs\|_2^2\right]/\E_{\brvb}\left[\|\kappa\bsvb\|_2^2\right]$). We assume that $\kappa$ is known\footnote{Many communication systems have power constraints and equalization capabilities, and with the endowment of such knowledge it is possible to estimate the signal to interference ratio (SIR) within reasonable margin.} and use $N=20,000$ with a cosine annealing learning rate schedule \cite{loshchilov2016sgdr}. The OFDM mixtures use $(\eta_{\max}, \eta_{\min}) = (5\mathrm{e}{-3}, 1\mathrm{e}{-6})$ and the CommSignal2 mixture uses $(\eta_{\max}, \eta_{\min}) = (2\mathrm{e}{-3}, 1\mathrm{e}{-6})$.  Importantly, we \emph{re-use the training noise levels} from the diffusion models and randomize over all but the smallest noise level resulting in a noise variance range of $(1 - \alpha_1 = 1.2\mathrm{e}{-3}, 1 - \alpha_T = 0.72)$ discretized into $T=49$ levels. We test performance across SIR levels ranging from $-24$ dB to $-3$ dB (``strong interference'' regime), by changing the value of $\kappa$ in the mixture. Each set of separation experiments was conducted on a single NVIDIA V100 GPU.

\paragraph{Baselines.}We compare our proposed method against baselines that also leverage independent statistical or structural priors over the sources.  The simplest baseline, which nevertheless is still commonly used in most communication systems, is the matched filtering solution, which treats the interference as white Gaussian noise.  The linear minimum mean square error (LMMSE) solution, a commonly used technique for signal estimation, is another baseline that leverages (up to) second-order statistics of the underlying source distributions.  More details on these methods are in Appendix \ref{sec:classical rf interference mitigation techniques}.

We also compare with the BASIS separation algorithm (see \S \ref{sec: relationship to other works}), which is the closest learning-based method that resembles our problem formulation. Applying their method as is yielded poor results, and hence we modified the original hyperparamters by tuning the annealing schedule to the best of our abilities for a fair comparison. For more details on this, refer to Appendix \ref{sec:BASIS}.

To study the fidelity of our learned score models, we derive the analytical score function of the QPSK SOI in the symbol domain (i.e., before pulse shaping). We use this analytical score as another comparison to demonstrate the performance of our method if the score was known perfectly. This formulation is closer to a learning-based interference mitigation setting, where we assume perfect knowledge about the SOI model, and rely on a learned interference model.  

\subsection{Source Separation Results}
\label{sec: source separation results}

\begin{figure}[!t]
    \centering
    \includegraphics[scale=0.31]{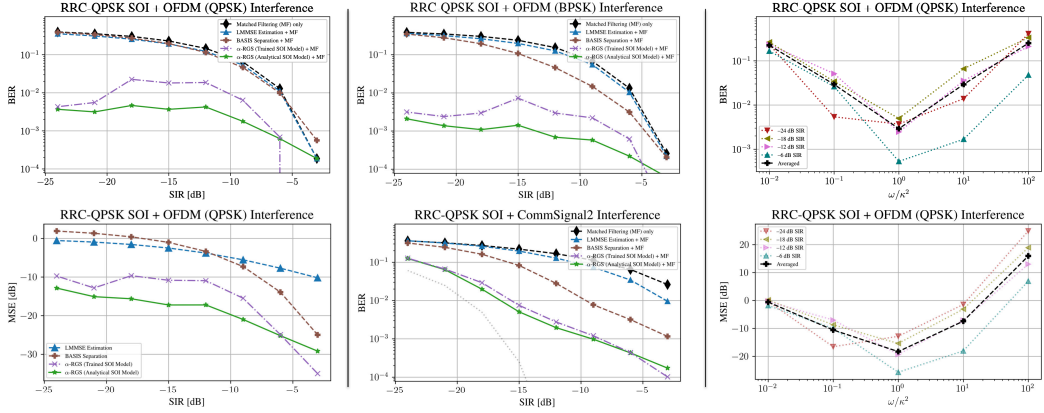}
    \caption{\textbf{Left:} Source separation results for a mixture with RRC-QPSK SOI and OFDM (QPSK) interference.  All curves are obtained by averaging 400 different mixtures per SIR and using $\omega=\kappa^2$.  Our proposed method significantly outperforms traditional and learning-based baselines (BASIS) in terms of BER and MSE across all noise levels.  \textbf{Middle:} Similar comparisons only in BER for mixtures with OFDM (BPSK) and CommSignal2 interference, respectively. BER is slightly higher for the CommSignal2 mixture since it contains background noise that is amplified for large $\kappa$. The \textbf{black dotted line} in the bottom figure denotes the (presumed) BER lower bound assuming the background noise is an additive white Gaussian noise. \textbf{Right:} BER and MSE versus $\omega/\kappa^2$ for different SIR levels. Clearly a good choice in this setting, in the sense of minimum BER and MSE, is $\omega=\kappa^2$.}
    \label{fig: source separation results}
\end{figure}

Figure \ref{fig: source separation results} shows the source separation results for the three different mixture types in \S \ref{sec: experimental setup}, obtained by averaging 100 independent trials per SIR.  Our proposed method (analytical or trained SOI score) significantly outperforms MF, LMMSE and BASIS in terms of both BER and MSE.  As expected, our best results are obtained by leveraging the prior knowledge, in the form of the analytical score for the SOI.  Nevertheless, our learned SOI score can nearly mimic this performance, and despite the slight degradation still outperforms all baselines in terms of BER as well. It should be noted that CommSignal2 contains small amount of background noise, which is amplified at low SIR (high $\kappa$). This noise constrains the minimal achievable BER even under the assumption of only having the residual AWGN present, illustrated by the black dotted line at the bottom middle plot in Figure \ref{fig: source separation results}. We also outperform the baselines in terms of MSE across all SIR levels, an example of which is shown at the bottom left of Figure \ref{fig: source separation results}.  More detailed results are included in Appendix \ref{sec:complete results}.

\paragraph{Characterizing the choice of $\omega$.} We numerically verify that $\omega=\kappa^2$ is a good choice for the $\alpha$-posterior term. To this end, we find a suitable order of magnitude for $\omega$, by varying $\omega/\kappa^2$ between $10^{-2}$ and $10^{2}$ and computing the resulting BER and MSE for four representative SIR levels using a held-out \textit{hyperparameter validation set} of 400 mixtures. The right panel of Figure \ref{fig: source separation results} shows that $\omega=\kappa^2$ achieves the minimum BER and MSE on average. More characterizations are in Appendix \ref{sec:complete results}.

\section{Concluding Remarks}
\label{sec: conclusion}
In this work, we propose $\alpha$-RGS, a method that extends MAP estimation with an $\alpha$-posterior across randomized levels of Gaussian smoothing, which stems from a new objective function, whose extrema points correspond to the modes of the underlying discrete distribution of interest.  Our method relies only on pre-trained diffusion models as priors via a simple randomized algorithm that does not require cumbersome tuning of a special annealing schedule, as is done in existing Langevin-dynamics-based works.  Through simple analytical illustrations, we demonstrate the favorable mode-preserving nature of our objective, and show that $\alpha$-RGS is a generalized extension of the recently proposed SDS loss. Experiments on RF sources demonstrate superior separation performance, with gains of up to 95\% in terms of both BER and MSE over classical and existing score-based SCSS methods.

\paragraph{Limitations and future work.} We believe there is a potential to use our algorithm to develop a general toolkit of algorithms for source separation and inverse problems. However, we do list some \textit{limitations} of our method that will be a focus of future work: 1) inference time ($\sim 5$ min per mixture) requires speed-up in real-time systems; 2) studying settings with more than two mixture components or other mixture models; 3) developing a systematic approach to choose $\omega$ for practitioners; and 4) addressing the robustness of our method for applications to signals with new properties, e.g., gravitational waves.  We further elaborate on the broader impact of our work in Appendix \ref{sec:broader impact}.

\begin{ack}
Research was supported, in part, by the United States Air Force Research Laboratory and the United States Air Force Artificial Intelligence Accelerator under Cooperative Agreement Number FA8750-19-2-1000. The views and conclusions contained in this document are those of the authors and should not be interpreted as representing the official policies, either expressed or implied, of the United States Air Force or the U.S. Government. The U.S. Government is authorized to reproduce and distribute reprints for Government purposes notwithstanding any copyright notation herein.

The authors acknowledge the MIT SuperCloud and Lincoln Laboratory Supercomputing Center for providing HPC resources that have contributed to the research results reported within this paper.

Alejandro Lancho has received funding from the European Union’s Horizon 2020 Research and Innovation Programme under the Marie Sklodowska-Curie Grant Agreement No.~101024432. This work is also supported, in part, by the National Science Foundation (NSF) under Grant No.~CCF-2131115.
\end{ack}

\bibliographystyle{unsrtnat}
\newpage
\bibliography{refs}

\newpage 

 \section*{Supplementary Material}
\appendix

\section{Broader Impact}
\label{sec:broader impact}

\def\nbyp#1{\textcolor{red}{{\bf YP:} #1}}

\subsection{Regarding the Proposed \texorpdfstring{$\alpha$-RGS}{aRGS} Method}
\label{sec:regarding the proposed args method}

\paragraph{Contribution and Distinction from Existing Works.} Our method is in the effort of i) using score-based/diffusion-based methods beyond sampling, and to solve inverse problems, and ii) developing source separation strategies with a focus on sources with underlying discrete properties, which may make it differ from existing works where the usual goal is to obtain perceptually close solutions, which is not a good criterion in the RF domain.

\paragraph{Scalable Approach to Source Separation with Pre-Trained Models.} An attractive property of our method is that by using pre-trained models as independent priors, we minimize the need for expensive compute resources to retrain models to tackle different scenarios/different mixture settings.

\paragraph{Trade-off in Computation Time.} While the training time is reduced, the inference computation time is as of now longer than other methods (e.g., end-to-end training with mixtures as inputs and SOIs as desired outputs), thus making it unsuitable for real-time operation. As we discuss about its potential use for applications, speeding up the algorithmic aspects of the operation is essential in furthering this method towards practical applications.

\paragraph{Ethical Considerations of our Work.} The intersection between generative AI and RF signals is a nascent field and hence, their ethical implications have been less explored. Nevertheless, we believe that this is an area that requires special attention to address issues such as malicious parties learning and spoofing, among others. While these are not the focus and the intent of our work, we believe that works in security in this space are actively exploring such problems. 

\paragraph{Privacy Considerations with Generative Models.} Where privacy and sensitive information is concerned, our work solely focuses on synthetic and public datasets. No sensitive transmissions have been captured, and therefore learned, by our generative models.

\subsection{Regarding Applications to RF Systems}
\label{sec:regarding the applications to rf systems}

\paragraph{Novelty, Relevance and Timeliness.} To the best of our knowledge, this is the first work that uses score-based methods for physical layer transmission in digital communication systems. The interference mitigation problem has significant relevance to enabling reliable operations in next-generation wireless systems. With the growth of heterogeneous wireless networks, there is overcrowding and growing scarcity of wireless spectrum. This work explores interference mitigation solutions with the goal of enabling efficient sharing of the finite resources to allow growth of the wireless ecosystem in a sustainable way.

\paragraph{Proof-of-Concept Exploration.}  Our work demonstrates via a proof-of-concept what might be possible for non-orthogonal access\footnote{That is, neighboring communicating devices that operate of the same, shared physical resources (e.g., time and frequency).} to explore opportunities for new generation of transmission strategies and communication protocols; particularly, the emphasis on the low SIR regime is in line with motivations in lower-power device operations.

\paragraph{Considerations arising from Demonstration via Computational Simulations.}  All our experiments are done via computational simulation and thus, we did not operate in RF bands that we do not have rights to access. Our synthetic simulations may differ from practice where unmodeled physical phenomenon may occur, thus necessitating natural extensions of our work to deal with such impediments.

\section{Additional Characterizations of \texorpdfstring{$\alpha$-RGS}{a-RGS}}
\label{sec:complete characterization of alpha rgs}

We now ``dissect'' \eqref{eq:single noise level loss expectation} in \S\ref{sec: analysis}, and focus on a single term,
\begin{equation}
    \mathcal{L}_{\bsvs}(\btheta) \coloneqq -\E_{\rv{t}, \brv{z}_s} \left[\log p_{\tilde{\brvs}_t}\left(\tilde{\bsvs}_t\left(\btheta\right)\right)\right],
    \label{eq:proposed loss single term}
\end{equation}
where we recall the definition of the Gaussian smoothing model from \S\ref{sec: the smoothing model},
\begin{equation}
    \tilde{\bsvs}_t\left(\btheta\right) = \sqrt{\alpha_t}\bar{\btheta} + \sqrt{1-\alpha_t}\bsvz_s, \quad \bsvz_s \sim \mathcal{N}(0, \mathbf{I}_d).
    \label{eq:smoothing model}
\end{equation}
As detailed in \S\ref{sec: the smoothing model}, when estimating a discrete source $\bsvs$, $\bar{\btheta} = \btheta + \varepsilon_s$ for $\varepsilon \sim \mathcal{N}(0, \sigma^2_s\mathbf{I}),\,\sigma_s \rightarrow 0$, is a continuous surrogate of the discrete estimate $\btheta$, useful for optimization via gradient descent.  The above loss only depends on the prior of the source, and is in particular independent of the data. Therefore, this term serves (and can be viewed) as a regularizer in our inference optimization problem. Although analysis of \eqref{eq:proposed loss single term} is (strictly speaking) not useful in order to make statements about \eqref{eq:single noise level loss expectation}, it is nevertheless informative and insightful to show that the local extrema of \eqref{eq:proposed loss single term} approach the modes of the underlying source distribution $P_{\brvs}$, by solving for the stationary points,
    \begin{equation}
        \E_{\rv{t}, \brv{z}_s} \left[\nabla_{\btheta} \,\log p_{\tilde{\brvs}_t}(\gamma_t\btheta + \sigma_t\bsvz_s)\right] = 0
        \label{eq:minimizer rule}
    \end{equation}
    where we have used Leibniz rule for differentiation under the integral and where we have defined the shorthand notation,
    \begin{equation}
        \gamma_t \coloneqq \sqrt{\alpha_t}\quad \textrm{and}\quad \sigma_t \coloneqq \sqrt{1 - \alpha_t}.
        \label{eq:shorthands}
    \end{equation}
Through our examples we will demonstrate the mode-approaching nature of the solutions to \eqref{eq:minimizer rule} thus establishing another interpretation of the asymptotic behavior of our method---two loss terms whose stationary points are modes of the corresponding source distributions which work together in unison to satisfy the constraints imposed by the observed mixture.  We thereby end up, with high probability, in the correct mass-points of the underlying distributions.

\subsection{Multivariate Normal Sources}

We first start with the analysis of a multivariate normal source and show that the the \textit{exact} mode is obtained as the local extremum of \eqref{eq:proposed loss single term}.

\begin{proposition}
    Let $\brvs \in \mathbb{R}^d$ be a multivariate normal source with mean $\bmu_{\bsvs}$ and covariance matrix $\bSigma_{\bsvs}$.  For the Gaussian smoothing model in \eqref{eq:smoothing model}, the score at timestep $t$ is
    \begin{equation}
        \nabla_{\left(\tilde{\bsvs}_t\left(\bsvs\right)\right)} \log p_{\tilde{\brvs}_t}\left(\tilde{\bsvs}_t\left(\bsvs\right)\right) = -\bSigma_{\tilde{\bsvs}_t}^{-1}\left(\gamma_t\bsvs + \sigma_t\bsvz_t - \bmu_{\tilde{\bsvs}_t}\right),
    \end{equation}
        \begin{equation*}
            \bmu_{\tilde{\bsvs}_t} \coloneqq \gamma_t \bmu_{\bsvs}\label{eq:mut}\quad\text{and}\quad
            \bSigma_{\tilde{\bsvs}_t} \coloneqq \gamma^2_t\bSigma_{\bsvs} + \sigma_t^2\mathbf{I}.
        \end{equation*}
    \label{prop:gaussian}
    Then, the minimizer $\btheta^* = \argmin_{\btheta} \mathcal{L}_{\bsvs}(\btheta)$ is equal to $\bmu_{\bsvs}$, i.e., the mode of the source distribution.
\end{proposition}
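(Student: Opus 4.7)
The plan is to show that $\btheta = \bmu_{\bsvs}$ is the unique stationary point of $\mathcal{L}_{\bsvs}$, and that it is a minimizer, by direct computation with the explicit Gaussian densities. Since $\brvs$ is multivariate normal, the smoothed random variable $\tilde{\brvs}_t = \gamma_t \brvs + \sigma_t \brvz_s$ is also Gaussian with mean $\bmu_{\tilde{\bsvs}_t} = \gamma_t \bmu_{\bsvs}$ and covariance $\bSigma_{\tilde{\bsvs}_t} = \gamma_t^2 \bSigma_{\bsvs} + \sigma_t^2 \mathbf{I}$, as stated in the proposition. Therefore $\log p_{\tilde{\brvs}_t}(\bsv{x})$ is a concave quadratic in $\bsv{x}$ (up to a $\btheta$-independent constant).

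First I would apply the chain rule: $\nabla_{\btheta}\log p_{\tilde{\brvs}_t}(\tilde{\bsvs}_t(\btheta)) = \gamma_t\, \nabla_{\bsv{x}}\log p_{\tilde{\brvs}_t}(\bsv{x})\bigm|_{\bsv{x}=\tilde{\bsvs}_t(\btheta)}$, and substitute the expression for the score given in the proposition. This yields
\begin{equation*}
\nabla_{\btheta}\log p_{\tilde{\brvs}_t}\bigl(\tilde{\bsvs}_t(\btheta)\bigr) = -\gamma_t\, \bSigma_{\tilde{\bsvs}_t}^{-1}\bigl(\gamma_t \btheta + \sigma_t \bsvz_s - \gamma_t \bmu_{\bsvs}\bigr).
\end{equation*}
Next I would take the expectation over $\brv{z}_s \sim \mathcal{N}(0,\mathbf{I})$, which eliminates the noise term by linearity (and by Leibniz's rule for interchange of gradient and expectation, which is justified because the integrand is polynomial in $\bsvz_s$ times a Gaussian tail). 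The result is a linear map applied to $(\btheta - \bmu_{\bsvs})$:
\begin{equation*}
\E_{\brv{z}_s}\!\left[\nabla_{\btheta}\log p_{\tilde{\brvs}_t}\bigl(\tilde{\bsvs}_t(\btheta)\bigr)\right] = -\gamma_t^2\, \bSigma_{\tilde{\bsvs}_t}^{-1}(\btheta - \bmu_{\bsvs}).
\end{equation*}

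Then I would take the expectation over $\rv{t} \sim \mathrm{Unif}\{1,\ldots,T\}$, obtaining
\begin{equation*}
-\nabla_{\btheta}\mathcal{L}_{\bsvs}(\btheta) = -\mathbf{A}\,(\btheta - \bmu_{\bsvs}), \qquad \mathbf{A} := \E_{\rv{t}}\!\left[\gamma_t^2\, \bSigma_{\tilde{\bsvs}_t}^{-1}\right].
\end{equation*}
Since each $\bSigma_{\tilde{\bsvs}_t}^{-1}$ is symmetric positive definite and $\gamma_t^2 \ge 0$ (with at least one $\gamma_t > 0$ for admissible schedules), the average $\mathbf{A}$ is symmetric positive definite, hence invertible. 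Setting the gradient to zero therefore yields the unique stationary point $\btheta^* = \bmu_{\bsvs}$.

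Finally, to confirm $\btheta^*$ is indeed a minimizer rather than a saddle or maximizer, I would compute the Hessian: differentiating the above expression shows that $\nabla^2 \mathcal{L}_{\bsvs}(\btheta) = \mathbf{A}$, which is positive definite and independent of $\btheta$, so $\mathcal{L}_{\bsvs}$ is strictly convex and $\bmu_{\bsvs}$ is its unique global minimizer. There is no substantive obstacle here, since everything is Gaussian and thus closed-form; the only step requiring mild care is justifying the exchange of gradient and expectation over $\brv{z}_s$, which is immediate by dominated convergence given the quadratic growth of the log-density.
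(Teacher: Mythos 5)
Your proposal is correct and follows essentially the same route as the paper: chain rule to pull out the $\gamma_t$ factor, zero-mean elimination of the noise term, and solving the resulting linear stationarity condition to get $\btheta^*=\bmu_{\bsvs}$. The only addition is your explicit Hessian/convexity check confirming the stationary point is a minimizer (and your bookkeeping of the $\gamma_t$ factors is in fact cleaner than the paper's, which drops one in an intermediate display without affecting the conclusion).
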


\begin{proof}
    Since $\bsvs$ is a continuous source, we perform optimization over a continuous space with respect to $\btheta \in \mathbb{R}^d$.\footnote{In this continuous setting, a surrogate distribution is not needed and $\bar{\btheta} = \btheta$ in \eqref{eq:smoothing model}}  Notice that,
    \begin{subequations}
        \begin{align}
            \nabla_{\btheta} \,\log p_{\tilde{\brvs}_t}(\gamma_t\btheta + \sigma_t\bsvz_s) &= \gamma_t\,\nabla_{\left(\tilde{\bsvs}_t\left(\btheta\right)\right)} \log p_{\tilde{\brvs}_t}\left(\tilde{\bsvs}_t\left(\btheta\right)\right)\\
            &= \gamma_t\bSigma_{\tilde{\bsvs}_t}^{-1}\left(\gamma_t\btheta + \sigma_t\bsvz_s - \bmu_{\tilde{\bsvs}_t}\right)
            \label{eq:gaussian gradient theta}
        \end{align}
    \end{subequations}
    Substituting \eqref{eq:gaussian gradient theta} in \eqref{eq:minimizer rule}, we see that the minimizer must satisfy
    \begin{align}
        \mathbb{E}_{\rv{t}}\left[\bSigma_{\tilde{\bsvs}_t}^{-1}\left(\gamma_t\btheta - \bmu_{\tilde{\bsvs}_t}\right)\right] = 0,
    \end{align}
where we have used the fact that $\bsvz_s$ is a zero mean Gaussian realization. Thus, the minimizer is 
\begin{equation}
    \btheta^* = \left(\sum_{t=1}^T \gamma_t\bSigma_{\tilde{\bsvs}_t}^{-1}\right)^{-1}\left(\sum_{t=1}^T\bSigma_{\tilde{\bsvs}_t}^{-1}\bmu_{\tilde{\bsvs}_t}\right),
\end{equation}
which can be simplified further as
\begin{align}
    \btheta^* &= \left(\sum_{t=1}^T \gamma_t\bSigma_{\tilde{\bsvs}_t}^{-1}\right)^{-1}\left(\sum_{t=1}^T\gamma_t\bSigma_{\tilde{\bsvs}_t}^{-1}\right)\bmu_{\bsvs} = \bmu_{\bsvs},
\end{align}
where we have used (\ref{eq:mut}) to substitute $\bmu_{\tilde{\bsvs}_t} = \gamma_t\bmu_{\bsvs}$.
\end{proof}

Thus, we have established that the local extremum of the \eqref{eq:proposed loss single term}  corresponds to the mode in the multivariate Gaussian case. 

\subsection{Tweedie's Formula and Digital RF Sources}
\label{sec:digital rf sources}

We next analyze \eqref{eq:proposed loss single term} in the context of RF source separation by studying sources defined by a digital constellation with symbols from the finite set $\mathcal{A}$. Before proceeding, we first state Tweedie's formula \cite{Robbins1956AnEB}, a tool that will aid us in computing the scores of arbitrary distributions.  For completeness of the exposition, we prove it for the case of scalar random variables below.

\begin{lemma}
\label{lemma}
(Tweedie's formula) Let $\rv{a}$ be a random variable with an arbitrary distribution and let $\rv{z}_{\sigma}$ be a normal random variable with mean $\mu$ and variance $\sigma^2$ independent of $\rv{a}$. The score of the random variable $\rv{x} \coloneqq \rv{a} + \rv{z}_{\sigma}$ is
\begin{equation}
    \nabla_x \log p_{\rv{x}}(x) = \frac{\mu - x}{\sigma^2} + \frac{1}{\sigma^2}\E_{\rv{a}|\rv{x}}\left[a|x = a + z_{\sigma}\right].
\end{equation}
\end{lemma}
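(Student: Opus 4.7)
The plan is to start from the definition of the marginal density of $\rv{x}$ via the convolution of the prior on $\rv{a}$ with the Gaussian density of $\rv{z}_\sigma$, and then to interchange differentiation and integration. Specifically, let $\phi(w;\mu,\sigma^2) \coloneqq (2\pi\sigma^2)^{-1/2}\exp\!\left(-\tfrac{(w-\mu)^2}{2\sigma^2}\right)$. Since $\rv{a}$ and $\rv{z}_\sigma$ are independent, the marginal density is
\begin{equation*}
    p_{\rv{x}}(x) \;=\; \int p_{\rv{a}}(a)\,\phi(x-a;\mu,\sigma^2)\, da.
\end{equation*}
I would then write $\nabla_x \log p_{\rv{x}}(x) = p_{\rv{x}}(x)^{-1}\nabla_x p_{\rv{x}}(x)$ and push the derivative inside the integral (justified by dominated convergence, since the Gaussian kernel and its gradient are bounded by integrable majorants for any fixed $x$).

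Next, I would use the elementary identity $\nabla_x \phi(x-a;\mu,\sigma^2) = -\sigma^{-2}(x - a - \mu)\,\phi(x-a;\mu,\sigma^2)$. Substituting this back produces
\begin{equation*}
    \nabla_x p_{\rv{x}}(x) \;=\; -\frac{1}{\sigma^2}\int (x - a - \mu)\,\phi(x-a;\mu,\sigma^2)\,p_{\rv{a}}(a)\, da.
\end{equation*}
Dividing by $p_{\rv{x}}(x)$ and recognizing via Bayes' rule that $\phi(x-a;\mu,\sigma^2)\,p_{\rv{a}}(a)/p_{\rv{x}}(x) = p_{\rv{a}\mid \rv{x}}(a\mid x)$, the integral becomes an expectation under the posterior. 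Splitting the integrand as $(x-\mu) - a$ and using the normalization of the posterior gives
\begin{equation*}
    \nabla_x \log p_{\rv{x}}(x) \;=\; -\frac{x-\mu}{\sigma^2} + \frac{1}{\sigma^2}\,\E_{\rv{a}\mid\rv{x}}[\,a\mid x\,],
\end{equation*}
which is the claimed formula.

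The computation itself is essentially one-line once differentiation under the integral is justified, so the only real obstacle is that justification. For a clean statement one should assume enough regularity on $p_{\rv{a}}$ (e.g., that $\rv{a}$ is integrable, or more weakly that $p_{\rv{a}}$ has finite first absolute moment on compact sets), since then the Gaussian tail dominates $|\nabla_x \phi|\cdot p_{\rv{a}}$ uniformly on a neighborhood of any fixed $x$. I would state this mild assumption explicitly and then conclude. The vector-valued extension (which is what the paper actually invokes, see Proposition~\ref{prop:gaussian}) follows by an identical argument applied coordinatewise, replacing $\phi$ with the multivariate Gaussian density.
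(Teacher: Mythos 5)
Your proof is correct and follows essentially the same route as the paper's: write $p_{\rv{x}}$ as the convolution of $p_{\rv{a}}$ with the Gaussian density, differentiate under the integral using the identity for the Gaussian's gradient, and recognize the resulting ratio as the posterior $p_{\rv{a}\mid\rv{x}}$ via Bayes' rule. Your explicit attention to the regularity needed to justify the interchange (and to the coordinatewise vector extension) is a welcome addition that the paper leaves implicit.
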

\begin{proof}
Since $\rv{x}$ is a sum of two independent random variable, it follows a convolution distribution given by,
\begin{equation}
    p_{\rv{x}}(x) = (p_{\rv{a}} \ast p_{\rv{z}_{\sigma}})(x).
\end{equation}
Therefore,
\begin{align}
    \nabla_x \log (p_{\rv{a}} \ast p_{\rv{z}_{\sigma}})(x) &= \frac{1}{(p_{\rv{a}} \ast p_{\rv{z}_{\sigma}})(x)}(p_{\rv{a}} \ast \nabla_{x}\, p_{\rv{z}_{\sigma}})(x)\\
        &= \frac{1}{(p_{\rv{a}} \ast p_{\rv{z}_{\sigma}})(x)}\int p_{\rv{a}}(t) \nabla_{x}\, p_{\rv{z}_{\sigma}}(x-t)\,{\rm d}t\\
        &= \frac{1}{(p_{\rv{a}} \ast p_{\rv{z}_{\sigma}})(x)}\int p_{\rv{a}}(t)p_{\rv{z}_{\sigma}}(x-t)\frac{(\mu - x + t)}{\sigma^2}\,{\rm d}t \label{eq:derivative of gaussian}\\
        &= \frac{\mu - x}{\sigma^2} + \frac{1}{\sigma^2}\int t p_{\rv{a}|\rv{a} + \rv{z}_{\sigma}}(t|x)\,{\rm d}t\\
        &= \frac{\mu - x}{\sigma^2} + \frac{1}{\sigma^2}\mathbb{E}[a|x = a + z_\sigma].
\label{eq:steins-id}
\end{align}
where \eqref{eq:derivative of gaussian} follows from,
\begin{equation}
    \nabla_{x}\, p_{\rv{z}_{\sigma}}(x - t) = -p_{\rv{z}_{\sigma}}(x - t)\frac{(\mu - x + t)}{\sigma^2}.
\end{equation}
\end{proof}

\begin{proposition}
    Let $\rvs$ be a scalar random variable representing a symbol drawn uniformly from the finite constellation set $\mathcal{A} \subset \mathbb{C}$. For the Gaussian smoothing model in \eqref{eq:smoothing model} where, at timestep $t$, 
    \begin{equation*}
        \tilde{\svs}_t\left(\svs\right) = \gamma_t \svs + \sigma_t \svz_t,
    \end{equation*}
    the score is
    \begin{equation}
        \nabla_{\left(\tilde{\svs}_t\left(\svs\right)\right)} \log p_{\tilde{\rvs}_t}\left(\tilde{\svs}_t\left(\svs\right)\right) = \frac{1}{\sigma^2_t}\left(-\tilde{\svs}_t\left(\svs\right) + \sum_{a \in \mathcal{A}}a\cdot \phi_t(a, \tilde{s}_t(s))\right),
    \label{eq:qam score}
    \end{equation}
    where,
    \begin{equation}
        \phi_t(a, \tilde{s}_t(s)) \coloneqq  \frac{\exp\{-\frac{|\tilde{\svs}_t\left(\svs\right) - \gamma_t a|^2}{\sigma^2_t}\}}{\sum_{\bar{a} \in \mathcal{A}} \exp\{-\frac{|\tilde{\svs}_t\left(\svs\right) - \gamma_t \bar{a}|^2}{\sigma^2_t}\}}
    \label{eq:softmax}.
    \end{equation}
    \label{prop:qam}
\end{proposition}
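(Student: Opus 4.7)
The plan is to apply Tweedie's formula (Lemma~\ref{lemma}) directly, treating the scaled symbol $\gamma_t s$ as the ``signal'' and $\sigma_t z_t$ as the Gaussian perturbation. Writing $\tilde{\svs}_t(\svs) = (\gamma_t \svs) + \sigma_t \svz_t$ and noting that $\sigma_t \svz_t \sim \mathcal{N}(0, \sigma_t^2)$ is independent of $\gamma_t \svs$, Tweedie's formula with $\mu = 0$ and noise variance $\sigma_t^2$ immediately yields
\begin{equation*}
    \nabla_{\tilde{\svs}_t} \log p_{\tilde{\rvs}_t}\left(\tilde{\svs}_t(\svs)\right) = \frac{-\tilde{\svs}_t(\svs)}{\sigma_t^2} + \frac{1}{\sigma_t^2} \E\!\left[\gamma_t \rvs \,\big|\, \tilde{\svs}_t(\svs)\right].
\end{equation*}
This reduces the problem to evaluating the conditional expectation of the (scaled) symbol given its noisy observation.

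Next, I would compute $\E[\gamma_t \rvs \mid \tilde{\svs}_t(\svs)]$ by Bayes' rule. Since $\rvs$ is uniform over the finite constellation $\mathcal{A}$, the prior $P_{\rvs}(a) = 1/|\mathcal{A}|$ cancels in numerator and denominator, so the posterior is simply proportional to the Gaussian likelihood:
\begin{equation*}
    P_{\rvs \mid \tilde{\rvs}_t}(a \mid \tilde{\svs}_t(\svs)) = \frac{\exp\left\{-|\tilde{\svs}_t(\svs) - \gamma_t a|^2 / \sigma_t^2\right\}}{\sum_{\bar a \in \mathcal{A}} \exp\left\{-|\tilde{\svs}_t(\svs) - \gamma_t \bar a|^2 / \sigma_t^2\right\}} = \phi_t(a, \tilde{s}_t(s)).
\end{equation*}
Then $\E[\gamma_t \rvs \mid \tilde{\svs}_t(\svs)] = \gamma_t \sum_{a \in \mathcal{A}} a\, \phi_t(a, \tilde{s}_t(s))$. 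Substituting back into Tweedie and simplifying the $\gamma_t/\sigma_t^2$ prefactor together with the $-\tilde{\svs}_t(\svs)/\sigma_t^2$ term will give the claimed expression, up to checking the stated form carries a factor $1/\sigma_t^2$ rather than $\gamma_t/\sigma_t^2$ in front of the conditional mean (which would reflect absorbing $\gamma_t$ into the summand $a$; I will verify the bookkeeping matches the statement as written).

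The main technical wrinkle, and the only non-mechanical step, is extending Tweedie's formula (stated in Lemma~\ref{lemma} for real scalars) to the complex-valued setting, since $\mathcal{A} \subset \C$. I would handle this either by treating $\tilde{\svs}_t$ as a pair of real coordinates (real and imaginary parts are independent Gaussian perturbations of the real and imaginary parts of $\gamma_t s$, so the Wirtinger/real-vector score decomposes additively and the same convolution argument goes through component-wise), or by noting that the proof of the lemma only uses the Gaussian density identity $\nabla_x p_{\rvz_\sigma}(x-t) = -p_{\rvz_\sigma}(x-t)(x-t)/\sigma^2$, which holds verbatim for circularly symmetric complex Gaussians with the modulus-squared $|\cdot|^2$ in the exponent, exactly matching the form of $\phi_t$ in \eqref{eq:softmax}. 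Aside from this reduction, the proof is a direct two-line application of the lemma plus Bayes' rule with a uniform prior, so I expect no further obstacles.
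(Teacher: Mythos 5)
Your proposal is correct and follows essentially the same route as the paper's own proof: apply Tweedie's formula (Lemma~\ref{lemma}) with $\rv{s}'=\gamma_t\rvs$ as the signal and $\sigma_t\rvz_t$ as the zero-mean Gaussian perturbation, then evaluate the posterior over the (scaled) constellation by Bayes' rule with the uniform prior cancelling, yielding the softmax weights $\phi_t$. The bookkeeping concern you flag is legitimate and worth noting: the conditional mean is $\E[\gamma_t\rvs\mid\tilde{\svs}_t]=\gamma_t\sum_{a}a\,\phi_t(a,\tilde{s}_t(s))$, so a factor $\gamma_t$ should multiply the sum in \eqref{eq:qam score}; the paper's proof computes the posterior correctly but drops this $\gamma_t$ in the final substitution, and your explicit treatment of the complex-valued extension of the lemma is also more careful than the paper's, which applies the real scalar version verbatim.
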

\begin{proof}
    We will use Lemma \ref{lemma} to compute the score, for which the only quantity we need to compute is the conditional expectation.  Let $\rv{s}' \coloneqq \gamma_t\rvs$. Since $\rvs$ is discrete, $P_{\rvs'}$ is also a uniform distribution over the symbol set $\mathcal{A}_t \coloneqq \{\gamma_ta\,|\, a\in \mathcal{A}\}$. By Bayes' rule,
    \begin{align}
        		    \mathbb{P}\left[\rv{s}'=\gamma_ta | \rv{s}' + \sigma_t \rv{z}_s = x\right] &= \frac{\mathbb{P}\left[\rv{s}' + \sigma_t \rv{z}_s = x| \rv{s}'=\gamma_ta\right]\mathbb{P}\left[\rv{s}'=\gamma_ta\right]}{\sum_{\bar{a} \in \mathcal{A}}\mathbb{P}\left[\rv{s}' + \sigma_t \rv{z}_s = x| \rv{s}'=\gamma_t\bar{a}\right]\mathbb{P}\left[\rv{s}'=\gamma_t\bar{a}\right]} \\
              &= \frac{\mathbb{P}\left[\rv{z}_s = \frac{x - \gamma_ta}{\sigma_t}\right]\cdot\frac{1}{|\mathcal{A}|}}{\sum_{\bar{a} \in \mathcal{A}}\mathbb{P}\left[\rv{z}_s = \frac{x - \gamma_t\bar{a}}{\sigma_t}\right]\cdot\frac{1}{|\mathcal{A}|}} \\
            &= \frac{\exp\{-\frac{|x - \gamma_t a|^2}{\sigma^2_t}\}}{\sum_{\bar{a} \in \mathcal{A}} \exp\{-\frac{|x - \gamma_t\bar{a}|^2}{\sigma^2_t}\}}\\
            &= \phi_t(a, x).
    \end{align}
Therefore, from Lemma \ref{lemma},
\begin{align*}
    \nabla_{\left(\tilde{\svs}_t\left(\svs\right)\right)} \log p_{\tilde{\rvs}_t}\left(\tilde{\svs}_t\left(\svs\right)\right) = \frac{1}{\sigma^2_t}\left(-\tilde{\svs}_t\left(\svs\right) + \sum_{a \in \mathcal{A}}a\cdot \phi_t\left(a, \tilde{\svs}_t\left(\svs\right)\right)\right)
\end{align*}
\end{proof}

\begin{wrapfigure}{r}{0.5\textwidth}
\centering
    \includegraphics[scale=0.225]{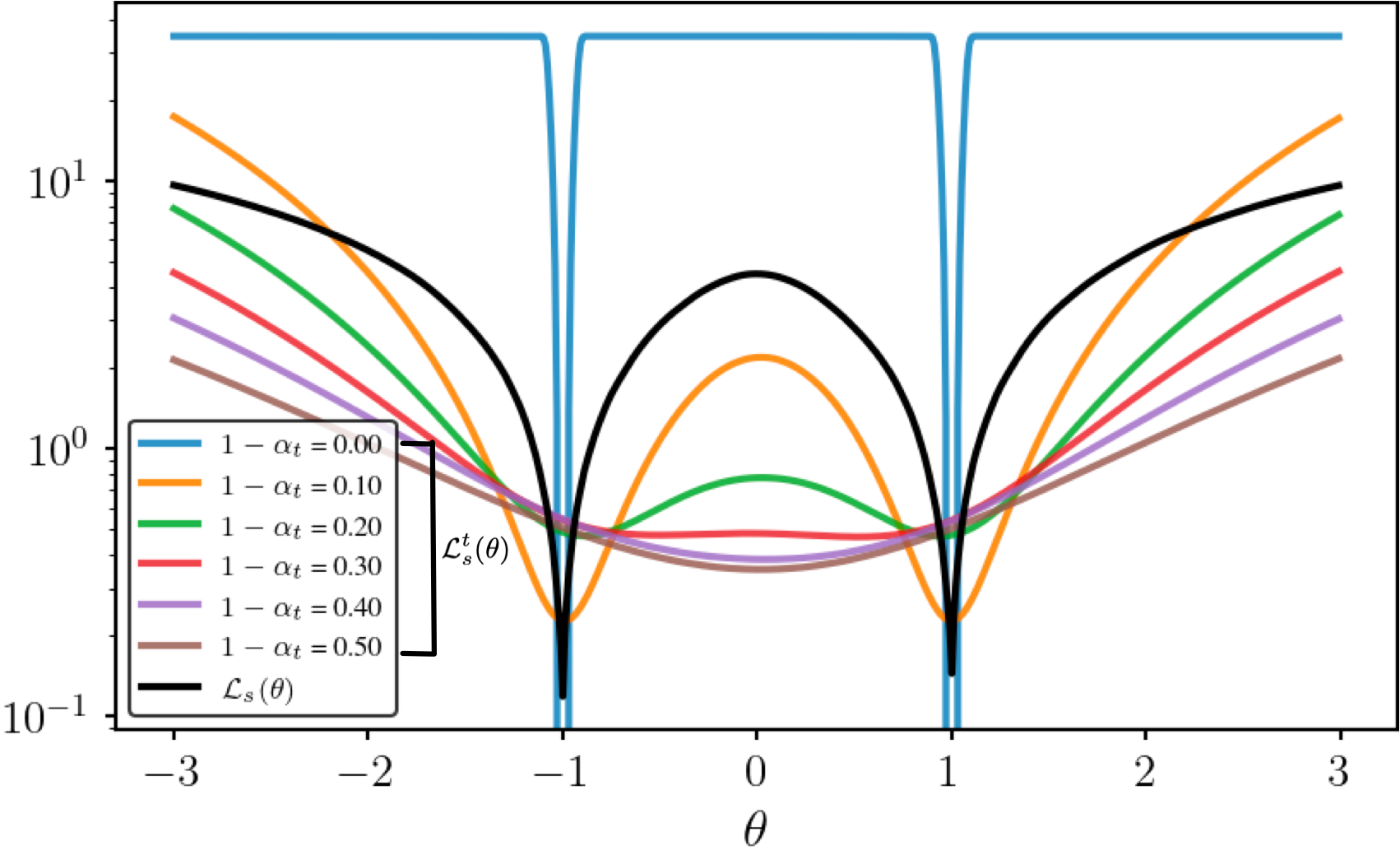}
    \caption{Simulating \eqref{eq:proposed loss single term} on a BPSK source.  The loss at individual timestamps is visualized in addition to the total loss.  The minima are at the modes of the BPSK source distribution, $-1$ and $+1$.  Larger noise levels allow for exploration between modes and smaller noise levels sharpen the mode-seeking behavior.}
    \label{fig: proposed loss bpsk}
\end{wrapfigure}
As an illustrative example, consider a BPSK source with modes at $-1$ and $+1$.  To solve \eqref{eq:minimizer rule} for $\btheta$, we require the score of the BPSK source at different non-zero levels of Gaussian smoothing.  Note that since the source is discrete, the score is undefined if no smoothing is applied. Though this smoothened score can be computed using \eqref{eq:qam score}, the stationary points of \eqref{eq:minimizer rule} cannot be computed in closed-form.  Hence, in order to study the behavior of the solutions, we simulate \eqref{eq:proposed loss single term}, as shown in Figure \ref{fig: proposed loss bpsk}.  The colored curves plot the loss for a single noise level, i.e., when $T=1$, while the black curve computes the loss across multiple levels of smoothing, by computing an average over the single noise level curves.  It is evident from these curves, that while the minima might not lie at the modes $-1$ and $+1$ for the single noise level curves, by averaging across multiple noise levels, the solution(s) to \eqref{eq:minimizer rule} approach the mode(s) of the source distribution. As pointed out in \S\ref{sec: analysis}, an important caveat is that this behavior is only observed for a good choice of the noise level range.\footnote{This only corresponds to choosing an appropriate lower bound on $1 - \alpha_T$ and should not be confused with tuning a noise schedule.}  Larger noise levels aid in moving between the modes and smaller noise levels help resolving the solution.  Only using larger noise levels, for example, can smoothen out the landscape to the point that the modes are no longer discernible, thus resulting in erroneous estimates.  We leverage the noise levels from pre-trained diffusion models that have learned the statistical structure sufficiently well, thus avoiding cumbersome tuning of the noise level range.

\subsection{Gaussian Mixture Source Model}

Finally, we consider a scalar Gaussian mixture model (GMM) that can often model arbitrary complicated scalar distributions.  We derive the score for a 2-component GMM and then generalize the result to an arbitrary $K$-component GMM.

\begin{proposition}
        Consider a two-component scalar Gaussian mixture source,
    \begin{equation}
    p_{\rvs}(s) = \lambda\, \mathcal{N}(s; \mu_1, \sigma_1^2) + (1 - \lambda)\, \mathcal{N}(s; \mu_2, \sigma_2^2), \quad \lambda\in(0,1).
    \label{eq:2-comp-gmm}
    \end{equation}
    Under the Gaussian smoothing model in \eqref{eq:smoothing model},
    \begin{align}
    \nabla_{\tilde{\svs}_t(s)} \log p_{\tilde{\rvs}_t}(\tilde{\svs}_t(s)) = \frac{1}{Z(\tilde{\svs}_t(s))}&\left[\lambda\,c_1(\tilde{\svs}_t(s))\,\left(\frac{\gamma_t\mu_1 - \tilde{\svs}_t(s)}{\gamma_t^2\sigma_1^2 + \sigma_t^2}\right)\right.\nonumber\\
    &\left.\quad +\, (1 - \lambda)\,c_2(\tilde{\svs}_t(s))\,\left(\frac{\gamma_t\mu_2 - \tilde{\svs}_t(s)}{\gamma_t^2\sigma_2^2 + \sigma_t^2}\right)\right],
    \label{eq: gmm score}
\end{align}
where
\begin{align}
    c_1(\tilde{\svs}_t(s)) &\coloneqq \mathcal{N}(\tilde{\svs}_t(s); \gamma_t\mu_1, \gamma_t^2\sigma_1^2 + \sigma_t^2),\\
    c_2(\tilde{\svs}_t(s)) &\coloneqq\mathcal{N}(\tilde{\svs}_t(s); \gamma_t\mu_2, \gamma_t^2\sigma_2^2 + \sigma_t^2),\\  
    Z(\tilde{\svs}_t(s)) &\coloneqq \lambda\,c_1(\tilde{\svs}_t(s)) + (1 - \lambda)\,c_2(\tilde{\svs}_t(s)).
\end{align}
\end{proposition}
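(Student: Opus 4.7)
The plan is to exploit the fact that convolving a Gaussian mixture with a Gaussian yields another Gaussian mixture, reducing the score computation to elementary differentiation of a sum of Gaussian densities. First, I would note that $\tilde{\svs}_t(\svs) = \gamma_t \svs + \sigma_t \svz_s$ is a realization of $\tilde{\rvs}_t = \gamma_t \rvs + \sigma_t \rvz_s$, a sum of two independent random variables, so the density of $\tilde{\rvs}_t$ is the convolution $p_{\gamma_t \rvs} * p_{\sigma_t \rvz_s}$. Because $\rvs$ has the two-component GMM density in \eqref{eq:2-comp-gmm}, $\gamma_t \rvs$ is a mixture of $\mathcal{N}(\gamma_t \mu_i, \gamma_t^2 \sigma_i^2)$ for $i \in \{1, 2\}$, and convolving each component with $\mathcal{N}(0, \sigma_t^2)$ gives $\mathcal{N}(\gamma_t \mu_i, \gamma_t^2 \sigma_i^2 + \sigma_t^2) = c_i(\cdot)$. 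Hence $p_{\tilde{\rvs}_t}(x) = \lambda\, c_1(x) + (1 - \lambda)\, c_2(x) = Z(x)$.

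Second, I would compute the score directly from $\log p_{\tilde{\rvs}_t}(x) = \log Z(x)$ using the elementary identity $\nabla_x \mathcal{N}(x; m, \tau^2) = \mathcal{N}(x; m, \tau^2)\cdot (m - x)/\tau^2$. Applying this to each mixture component yields
\begin{equation*}
\nabla_x Z(x) = \lambda\, c_1(x)\, \frac{\gamma_t \mu_1 - x}{\gamma_t^2 \sigma_1^2 + \sigma_t^2} + (1 - \lambda)\, c_2(x)\, \frac{\gamma_t \mu_2 - x}{\gamma_t^2 \sigma_2^2 + \sigma_t^2},
\end{equation*}
and dividing by $Z(x)$ and evaluating at $x = \tilde{\svs}_t(\svs)$ delivers exactly the claimed expression for the score.

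As a consistency check, I would re-derive the same formula via Tweedie's identity (Lemma~1 above): because $\rvz_s$ has mean zero, the score equals $-x/\sigma_t^2 + (1/\sigma_t^2)\,\mathbb{E}[\gamma_t \rvs \mid \tilde{\rvs}_t = x]$, where the Bayes posterior that component $i$ produced $x$ is $\lambda_i c_i(x)/Z(x)$ and the within-component conditional mean of $\gamma_t \rvs$ is given by the standard bivariate-Gaussian conditioning formula; a short algebraic rearrangement recovers the same expression and cross-validates the result. There is no substantive obstacle here: the derivation is essentially book-keeping, and the only step requiring care is tracking the scaling carefully --- $\gamma_t$ appears in the means and $\gamma_t^2$ in the variances of the post-smoothing components. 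The extension to an arbitrary $K$-component GMM, mentioned in the text preceding the proposition, follows immediately by replacing the two-term sum with a $K$-term sum weighted by the mixing probabilities $\lambda_i$, with all other steps unchanged.
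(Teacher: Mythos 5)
Your proof is correct, but your primary argument takes a genuinely different and more economical route than the paper's. The paper proves this proposition by invoking Tweedie's formula (Lemma~\ref{lemma}): it derives the posterior density of the scaled clean source $\rv{s}'=\gamma_t\rvs$ given the smoothed observation via Bayes' rule and the Gaussian product identities, assembles the conditional expectation $\mathbb{E}[\rv{s}'\,|\,\rv{s}'+\rv{z}_{\sigma_t}=\tilde{\svs}_t(s)]$ as in \eqref{eq:gmm-expectation}, and only then substitutes into \eqref{eq:steins-id} to obtain the score. You instead observe that once $p_{\tilde{\rvs}_t}=Z$ is known in closed form as a two-component GMM with components $c_i$, the score is simply $\nabla_x\log Z(x)=Z'(x)/Z(x)$, and the identity $\nabla_x\,\mathcal{N}(x;m,\tau^2)=\mathcal{N}(x;m,\tau^2)\,(m-x)/\tau^2$ finishes the computation in one line; Tweedie's formula is relegated to a consistency check. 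Both arguments hinge on the same key fact --- that Gaussian smoothing maps the mixture \eqref{eq:2-comp-gmm} to a mixture with components $\mathcal{N}(\gamma_t\mu_i,\gamma_t^2\sigma_i^2+\sigma_t^2)$ --- which the paper also establishes en route. What the paper's longer detour buys is the explicit posterior over mixture components and the within-component conditional means (its Eq.~\eqref{eq:conditional pdf}), quantities of independent interest for denoising interpretations; what your route buys is brevity and the immediate, transparent generalization to $K$ components that the paper states as a separate proposition. Your scaling bookkeeping ($\gamma_t$ in the means, $\gamma_t^2$ in the variances) is correct throughout.
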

\begin{proof}
    We need to compute the score of the following distribution,
\begin{equation}
    p_{\tilde{\rvs}_t}(\tilde{\svs}_t(s)) = p_{\gamma_t\rv{s} + \sigma_t\rv{z}}(\gamma_t\rv{s} + \sigma_t \rv{z}_s=\tilde{\svs}_t(s)),
    \label{eq:general distribution}
\end{equation}
Our goal, to this end, is to leverage Lemma \ref{lemma} to compute the score, thereby requiring an expression for the conditional expectation in \eqref{eq:steins-id}. Since $\rv{s}$ and $\rv{z}_s$ are independent, \eqref{eq:general distribution} can be written as a convolution between two distributions,
\begin{equation}
    p_{\tilde{\rvs}_t}(\tilde{\svs}_t(s)) = (p_{\rv{s}'} \ast p_{\rv{z}_{\sigma_t}})(\rv{s}' + \rv{z}_{\sigma_t} = \tilde{\svs}_t(s)),
    \label{eq:gmm-convolution}
\end{equation}
where
\begin{align}
    \rv{s}' \coloneqq \gamma_t\rv{s} \quad \textrm{and}\quad\rv{z}_{\sigma_t} \coloneqq \sigma_t\rv{z}_s.
    \label{eq:random variables gmm}
\end{align}
In order to compute the conditional expectation, we first show that,
\begin{align}
    &p_{\rv{s}' + \rv{z}_{\sigma_t}|\rv{s}'}(\rv{s}' + \rv{z}_{\sigma_t}=\tilde{\svs}_t(s)|\rv{s}'=a')p_{\rv{s}'}(\rv{s}'=a')\nonumber\\
    &= \frac{1}{Z(\tilde{\svs}_t(s))}\left[\lambda\, c_1(\tilde{\svs}_t(s))\,\mathcal{N}\left(a; \frac{\gamma_t^2\sigma_1^2\tilde{\svs}_t(s) + \gamma_t\sigma_t^2\mu_1}{\gamma_t^2\sigma_1^2 + \sigma_t^2}, \frac{\gamma_t^2\sigma_t^2\sigma_1^2}{\gamma_t^2\sigma_1^2 + \sigma_t^2}\right)\right.\nonumber\\
    &\left.\quad\quad\quad\quad+ (1 - \lambda)\, c_2(\tilde{\svs}_t(s))\,\mathcal{N}\left(a; \frac{\gamma_t^2\sigma_2^2\tilde{\svs}_t(s) + \gamma_t\sigma_t^2\mu_2}{\gamma_t^2\sigma_2^2 + \sigma_t^2}, \frac{\gamma_t^2\sigma_t^2\sigma_2^2}{\gamma_t^2\sigma_2^2 + \sigma_t^2}\right)\right],
    \label{eq:conditional pdf}
\end{align}
where
\begin{align*}
    c_1(\tilde{\svs}_t(s)) &\coloneqq \mathcal{N}(\tilde{\svs}_t(s); \gamma_t\mu_1, \gamma_t^2\sigma_1^2 + \sigma_t^2),\\
    c_2(\tilde{\svs}_t(s)) &\coloneqq \mathcal{N}(\tilde{\svs}_t(s); \gamma_t\mu_2, \gamma_t^2\sigma_2^2 + \sigma_t^2),\\  
    Z(\tilde{\svs}_t(s)) &\coloneqq \lambda\,c_1(\tilde{\svs}_t(s)) + (1 - \lambda)\,c_2(\tilde{\svs}_t(s)).
\end{align*}

We start by deriving the density functions for the random variables in \eqref{eq:random variables gmm}. Since the convolution between two Gaussians is a Gaussian,
\begin{align}
    p_{\rv{s}'}(a) &= \lambda\, \mathcal{N}(a; \gamma_t\mu_1, \gamma_t^2\sigma_1^2) + (1 - \lambda)\, \mathcal{N}(a; \gamma_t\mu_2, \gamma_t^2\sigma_2^2),\label{eq:scaled source pdf}\\
    p_{\rv{z}_{\sigma_t}}(w) &= \mathcal{N}(w;0, \sigma_t^2),
\end{align}
from which it follows, for example through the identities derived in \cite[Sec~1]{bromiley2003products}), that
\begin{equation}
    p_{\tilde{\rvs}_t}(\tilde{\svs}_t(s)) = \lambda\, \mathcal{N}(\tilde{\svs}_t(s); \gamma_t\mu_1, \gamma_t^2\sigma_1^2 + \sigma_t^2) + (1 - \lambda)\, \mathcal{N}(\tilde{\svs}_t(s); \gamma_t\mu_2, \gamma_t^2\sigma_2^2 + \sigma_t^2).
\end{equation}
We conclude the derivation of the conditional distribution by using Bayes' rule,
\begin{align}
    p_{\rv{s}'|\rv{s}' + \rv{z}_{\sigma_t}}&(\rv{s}'=a|\rv{s}' + \rv{z}_{\sigma_t}=\tilde{\svs}_t(s)) = \frac{p_{\rv{s}' + \rv{z}_{\sigma_t}|\rv{s}'}(\rv{s}' + \rv{z}_{\sigma_t}=\tilde{\svs}_t(s)|\rv{s}'=a)p_{\rv{s}'}(\rv{s}'=a)}{\int p_{\rv{s}' + \rv{z}_{\sigma_t}|\rv{s}'}(\rv{s}' + \rv{z}_{\sigma_t}=\tilde{\svs}_t(s)|\rv{s}'=a')p_{\rv{s}'}(\rv{s}'=a')\,{\rm d}a'}.
\end{align}
whereby simplifying the terms in the numerator further, we have
\begin{align}
    p_{\rv{s}' + \rv{z}_{\sigma_t}|\rv{s}'}(\rv{s}' + \rv{z}_{\sigma_t}=\tilde{\svs}_t(s)|\rv{s}'=a) &= p_{\rv{z}_{\sigma_t}}(\rv{z}_{\sigma_t} = \tilde{\svs}_t(s) - a) = \mathcal{N}(a; \tilde{\svs}_t(s), \sigma_t^2).
    \label{eq:posterior}
\end{align}
Finally, by following the Gaussian product identities in \cite[Sec~1]{bromiley2003products}, between the distribution in \eqref{eq:posterior} and \eqref{eq:scaled source pdf} we reach the result in \eqref{eq:conditional pdf}.

Now we can compute the expectation as,
\begin{align}
    \mathbb{E}\left[\rv{s}'|\rv{s}' + \rv{z}_{\sigma_t} = \tilde{\svs}_t(s)\right]= \frac{1}{Z(\tilde{\svs}_t(s))}&\left[\lambda\,c_1(\tilde{\svs}_t(s))\,\left(\frac{\gamma_t^2\sigma_1^2\tilde{\svs}_t(s) + \gamma_t\sigma_t^2\mu_1}{\gamma_t^2\sigma_1^2 + \sigma_t^2}\right)\right.\nonumber\\
    &\left.\quad +\, (1 - \lambda)\,c_2(\tilde{\svs}_t(s))\,\left(\frac{\gamma_t^2\sigma_2^2\tilde{\svs}_t(s) + \gamma_t\sigma_t^2\mu_2}{\gamma_t^2\sigma_2^2 + \sigma_t^2}\right)\right].
    \label{eq:gmm-expectation}
\end{align}
Using Lemma \ref{lemma} we can express the score as,
\begin{align*}
    \nabla_{\tilde{\svs}_t(s)} \log p_{\tilde{\rvs}_t}(\tilde{\svs}_t(s)) = \frac{1}{Z(\tilde{\svs}_t(s))}&\left[\lambda\,c_1(\tilde{\svs}_t(s))\,\left(\frac{\gamma_t\mu_1 - \tilde{\svs}_t(s)}{\gamma_t^2\sigma_1^2 + \sigma_t^2}\right)\right.\nonumber\\
    &\left.\quad +\, (1 - \lambda)\,c_2(\tilde{\svs}_t(s))\,\left(\frac{\gamma_t\mu_2 - \tilde{\svs}_t(s)}{\gamma_t^2\sigma_2^2 + \sigma_t^2}\right)\right].
\end{align*}
\end{proof}

We can generalize the score to an arbitrary $K$-component GMM by exploiting symmetries in \eqref{eq: gmm score}.
\begin{proposition}
    Consider a $K$-component Gaussian mixture source,
    \begin{equation}
        p_{\rv{s}}(s) = \sum_{i=1}^K \lambda_i\, \mathcal{N}(s; \mu_i, \sigma_i^2),\quad \sum_{i=1}^K \lambda_i = 1.
        \label{eq:k-comp-gmm}
    \end{equation}
    For the Gaussian smoothing model in \eqref{eq:smoothing model}, the score at timestep $t$ is,
    \begin{align}
    \nabla_{\tilde{\svs}_t(s)} \log p_{\tilde{\rvs}_t}(\tilde{\svs}_t(s)) = \frac{1}{Z(\tilde{\svs}_t(s))}\sum_{i=1}^K \lambda_i\,c_i(\tilde{\svs}_t(s))\,\left(\frac{\gamma_t\mu_i - \tilde{\svs}_t(s)}{\gamma_t^2\sigma_i^2 + \sigma_t^2}\right),
    \label{eq: k component gmm score}
\end{align}
where
\begin{align}
            c_i(\tilde{\svs}_t(s)) &\coloneqq\mathcal{N}(\tilde{\svs}_t(s); \gamma_t\mu_i, \gamma_t^2\sigma_i^2 + \sigma_t^2),\\
            Z(\tilde{\svs}_t(s)) &\coloneqq \sum_{i=1}^K \lambda_i\,c_i(\tilde{\svs}_t(s)).
\end{align}
\end{proposition}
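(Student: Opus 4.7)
The plan is to follow exactly the same blueprint as the proof for the 2-component case, relying on Tweedie's formula (Lemma 1) and exploiting the closure of Gaussian mixtures under convolution with a Gaussian. Specifically, I would set $\rv{s}' \coloneqq \gamma_t \rv{s}$ and $\rv{z}_{\sigma_t} \coloneqq \sigma_t \rv{z}_s$, so that $\tilde{\rvs}_t(\rv{s}) = \rv{s}' + \rv{z}_{\sigma_t}$. By Tweedie's formula (applied with $\mu = 0$), computing the score reduces to evaluating the posterior mean $\mathbb{E}[\rv{s}' \mid \rv{s}' + \rv{z}_{\sigma_t} = \tilde{\svs}_t(s)]$.

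The key ingredient is to compute the posterior density of $\rv{s}'$ given $\tilde{\rvs}_t(s)$. First I would write the scaled prior as $p_{\rv{s}'}(a) = \sum_{i=1}^K \lambda_i \,\mathcal{N}(a; \gamma_t \mu_i, \gamma_t^2 \sigma_i^2)$, and the marginal of the smoothed variable as $p_{\tilde{\rvs}_t}(\tilde{\svs}_t(s)) = \sum_{i=1}^K \lambda_i\, c_i(\tilde{\svs}_t(s)) = Z(\tilde{\svs}_t(s))$, using the fact that the convolution of two Gaussians is Gaussian (so each mixture component convolves cleanly with $\rv{z}_{\sigma_t}$). Then Bayes' rule yields a posterior that is itself a $K$-component Gaussian mixture with responsibility weights $\lambda_i c_i(\tilde{\svs}_t(s))/Z(\tilde{\svs}_t(s))$ and component means given by the standard Gaussian product identity, namely $\bigl(\gamma_t^2 \sigma_i^2 \tilde{\svs}_t(s) + \gamma_t \sigma_t^2 \mu_i\bigr)/(\gamma_t^2 \sigma_i^2 + \sigma_t^2)$. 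This is the direct index-$i$ analogue of the two-term expression derived in the proof of the 2-component proposition.

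Taking the expectation against this posterior gives the summation
\[
\mathbb{E}[\rv{s}' \mid \tilde{\rvs}_t(s) = \tilde{\svs}_t(s)] = \frac{1}{Z(\tilde{\svs}_t(s))} \sum_{i=1}^K \lambda_i\, c_i(\tilde{\svs}_t(s)) \, \frac{\gamma_t^2 \sigma_i^2 \tilde{\svs}_t(s) + \gamma_t \sigma_t^2 \mu_i}{\gamma_t^2 \sigma_i^2 + \sigma_t^2}.
\]
Substituting this into Tweedie's formula and algebraically combining with the $-\tilde{\svs}_t(s)/\sigma_t^2$ term (i.e., factoring $\tilde{\svs}_t(s)$ inside each summand using $\sum_i \lambda_i c_i/Z = 1$) yields, after simplification per summand, the compact form $(\gamma_t \mu_i - \tilde{\svs}_t(s))/(\gamma_t^2 \sigma_i^2 + \sigma_t^2)$ advertised in the statement.

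The main obstacle is not conceptual—the argument is a direct generalization of the two-component derivation—but rather bookkeeping: ensuring the algebra that merges the $-\tilde{\svs}_t(s)/\sigma_t^2$ term from Tweedie with each posterior-mean summand collapses cleanly to the stated responsibility-weighted form. A cleaner alternative that avoids this manipulation entirely is to bypass Tweedie and differentiate $\log p_{\tilde{\rvs}_t}$ directly: since $p_{\tilde{\rvs}_t} = Z(\tilde{\svs}_t(s))$ is an explicit finite sum of Gaussian densities, $\nabla \log Z$ can be computed by the chain rule term-by-term, and each $\nabla \log c_i(\tilde{\svs}_t(s)) = (\gamma_t \mu_i - \tilde{\svs}_t(s))/(\gamma_t^2 \sigma_i^2 + \sigma_t^2)$ emerges immediately. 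This second route is arguably the shortest and makes the result transparent.
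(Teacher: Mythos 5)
Your proposal is correct and mirrors the paper's own treatment: the paper proves the two-component case in detail via Tweedie's formula, Gaussian convolution identities, and Bayes' rule, and then obtains the $K$-component result by the same index-$i$ generalization you describe (your per-summand cancellation $-\tilde{\svs}_t(s)(\gamma_t^2\sigma_i^2+\sigma_t^2) + \gamma_t^2\sigma_i^2\tilde{\svs}_t(s) + \gamma_t\sigma_t^2\mu_i = \sigma_t^2(\gamma_t\mu_i - \tilde{\svs}_t(s))$ indeed collapses cleanly). Your suggested shortcut of differentiating $\log Z$ directly is also valid and somewhat slicker, but it is a minor variation rather than a genuinely different argument.
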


\begin{figure}[!t]
    \centering
    \begin{subfigure}[t]{0.5\textwidth}
        \centering
        \includegraphics[scale=0.45]{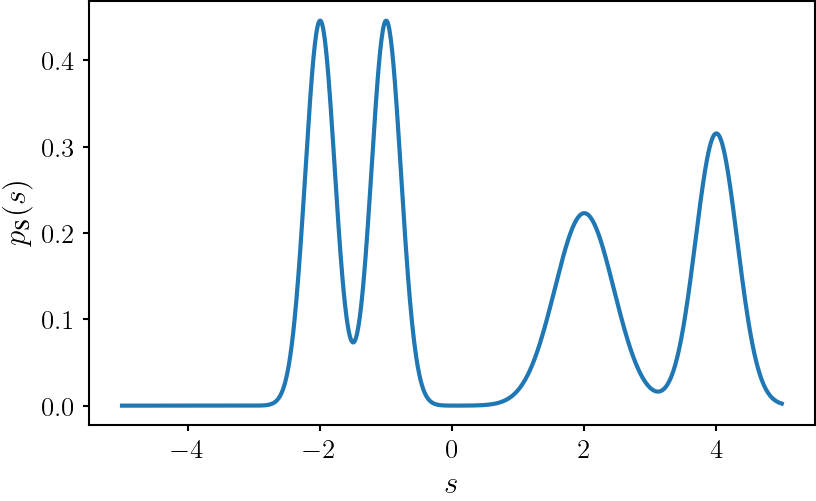}
    \end{subfigure}%
    ~
    \begin{subfigure}[t]{0.5\textwidth}
        \centering
        \includegraphics[scale=0.215]{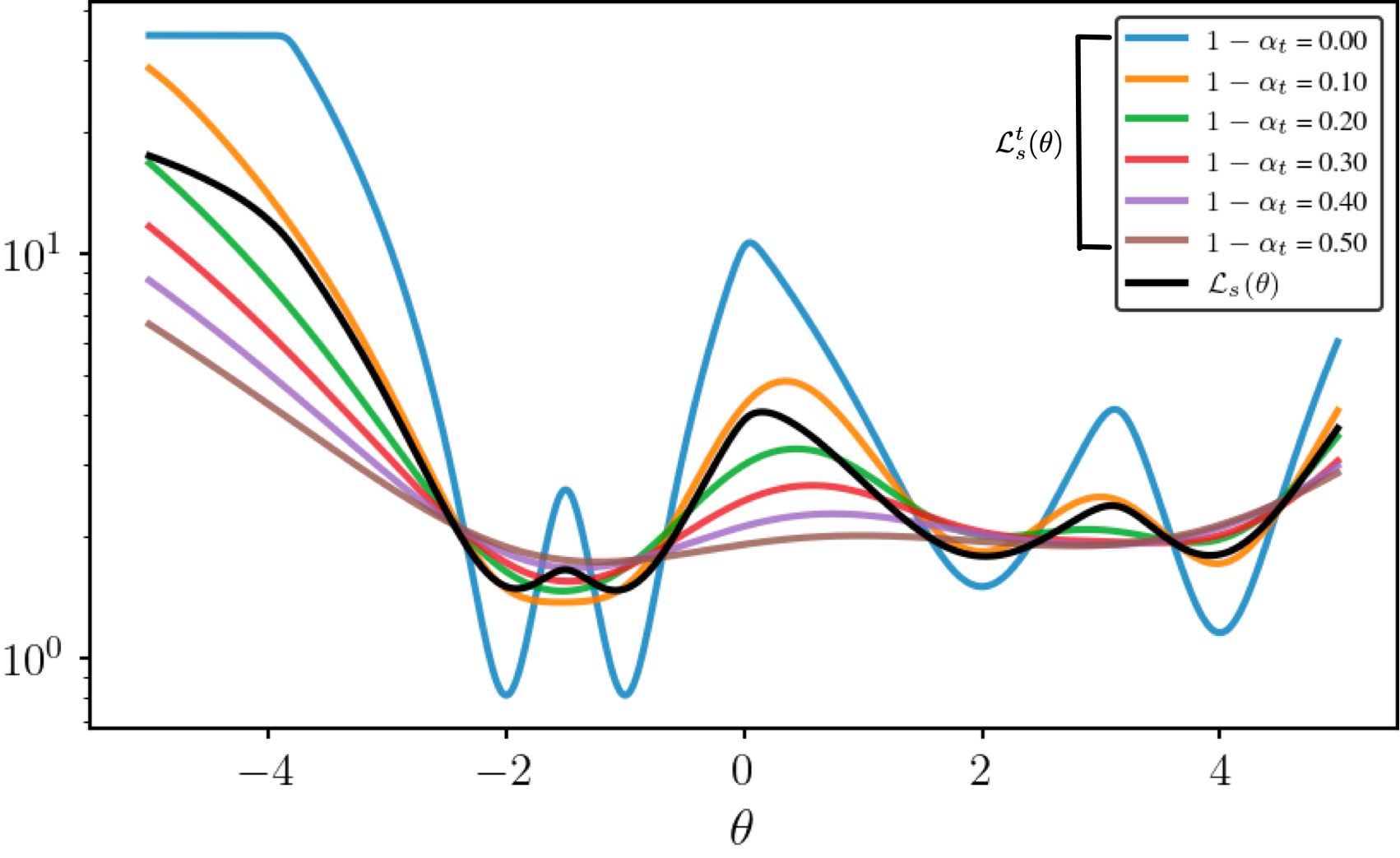}
    \end{subfigure}%
    \caption{\textbf{Left:} A GMM source with two equiprobable modes at $-1$ and $-2$.  Two smaller modes are present at $+2$ and $+4$. \textbf{Right:} The minima of \eqref{eq:proposed loss single term} lie (approximately) at the modes of the source distribution (black curve). Colored curves correspond to \eqref{eq:proposed loss single term} evaluated with $T=1$.}
    \label{fig:gmm curves}
\end{figure}

Similar to the BPSK case, since a closed-form expression for the solution to \eqref{eq:minimizer rule} cannot be obtained, we simulate \eqref{eq:proposed loss single term} for a four-component GMM with two large equiprobable modes and two smaller modes.  As shown in the colored curves on the right in Figure \ref{fig:gmm curves}, we again notice similar behavior where the modes are sharper at lower noise levels, with the sharpness decreasing at large noise levels.  Hence, gradient-based methods can leverage the randomly chosen larger noise levels to move between modes and use the smaller noise levels to resolve the estimate.  On average, the minima of the loss approach the modes of the original GMM source, as shown by the black curve on the right in Figure \ref{fig:gmm curves}.

\section{Datasets}
\label{sec:datasets}

In this section, we particularize the signal models introduced in \S\ref{sec:finite alphabet signal processing} to RF signals. We first consider a single-carrier signal modulating, e.g., QPSK symbols, which can be expressed as,
\begin{equation}
    s(t) = \sum_{p=-\infty}^{\infty} \; c_{p, \ell}\;g(t - pT_s).
    \label{eq:digital comm single carrier}
\end{equation}
The second form corresponds to multi-carrier signals, such as OFDM signals, where multiple symbols are modulated in parallel,
\begin{equation}
    b(t) = \sum_{p=-\infty}^{\infty} \sum_{\ell=0}^{L-1} \; c_{p, \ell}\,\exp{\{j 2 \pi \ell t/L\}}. 
    \label{eq:digital comm multicarrier}
\end{equation}
Both types of signals can be grouped in a single family of signals represented as,
\begin{equation}
    u(t) = \sum_{p=-\infty}^{\infty} \sum_{\ell=0}^{L-1} \; c_{p, \ell}\;g(t - pT_s, \ell)\,\exp{\{j 2 \pi \ell t/L\}}. 
    \label{eq:digital comm}
\end{equation}
We use $s(t)$ to represent the signal of interest (SOI) and $\bsvs$ as the vector representation for a collection of $N$ consecutive samples thereof.  Similarly, the interference is represented by $b(t)$ or $\bsvb$. 

\subsection{Terminology}

Before we describe the construction of our datasets, we will provide some terminology that we use in our description.  This terminology is meant to supplement the details already provided in \S\ref{sec:digital communication signals} in the main paper.

\paragraph {Symbol constellation:} A digital constellation describes the underlying scheme for mapping bits to symbols.  Typically used digital constellations (the mapping between bits and symbols) include binary phase shift keying (BPSK) and quadrature phase shift keying (QPSK). Specific BPSK and QPSK examples are depicted in Figure \ref{fig:constellations}.
\paragraph{Pulse/Symbol shaping function:} This corresponds to $g(\cdot, \cdot)$ in \eqref{eq:digital comm} and is meant to help limit the bandwidth of the signal. Each symbol $c_{p, l}$ is multiplied with a pulse shaping function with different time offsets and a main lobe width $T_s$, helping smoothen the signal and remove high frequencies. In this work, we will make use of the root-raised cosine (RRC) pulse shaping for our single-carrier QPSK SOI. On the other had, OFDM signals do not use an explicit pulse shaping function (as shown in \eqref{eq:digital comm multicarrier}), i.e., $g(t, \cdot)$ corresponds to a rectangular function in \eqref{eq:digital comm}.
    \begin{figure}[!th]
        \centering
        \begin{subfigure}[t]{0.24\textwidth}
            \centering
            \tcbox[colback=white, left=0mm, right=0mm, top=0mm, bottom=0.5mm]{
            \includegraphics[scale=0.14]{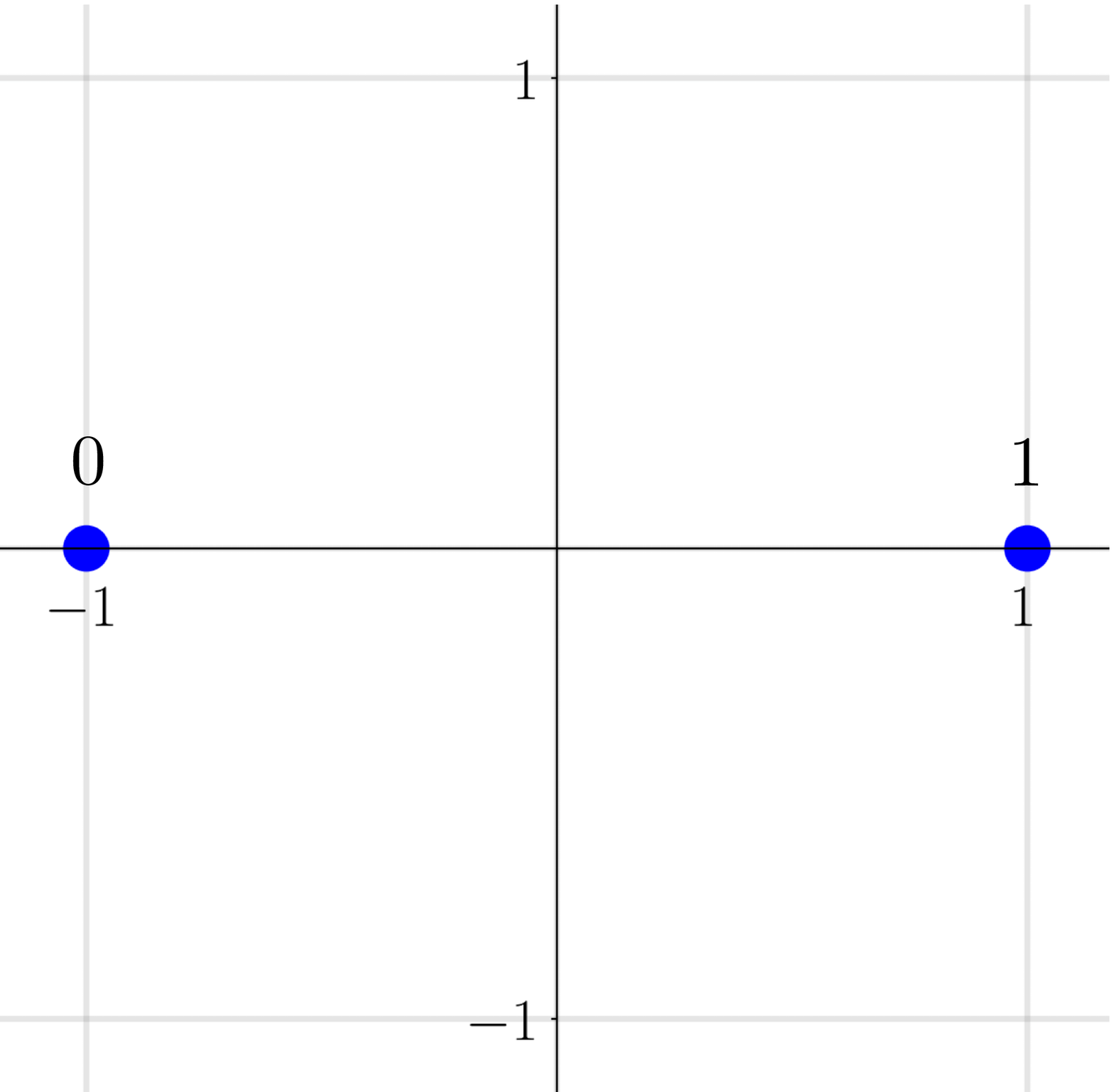}}
            \caption{BPSK constellation.}
            \label{fig:bps const}
        \end{subfigure}
        \begin{subfigure}[t]{0.24\textwidth}
            \centering
            \tcbox[colback=white, left=0mm, right=0mm, top=0mm, bottom=0.5mm]{
            \includegraphics[scale=0.14]{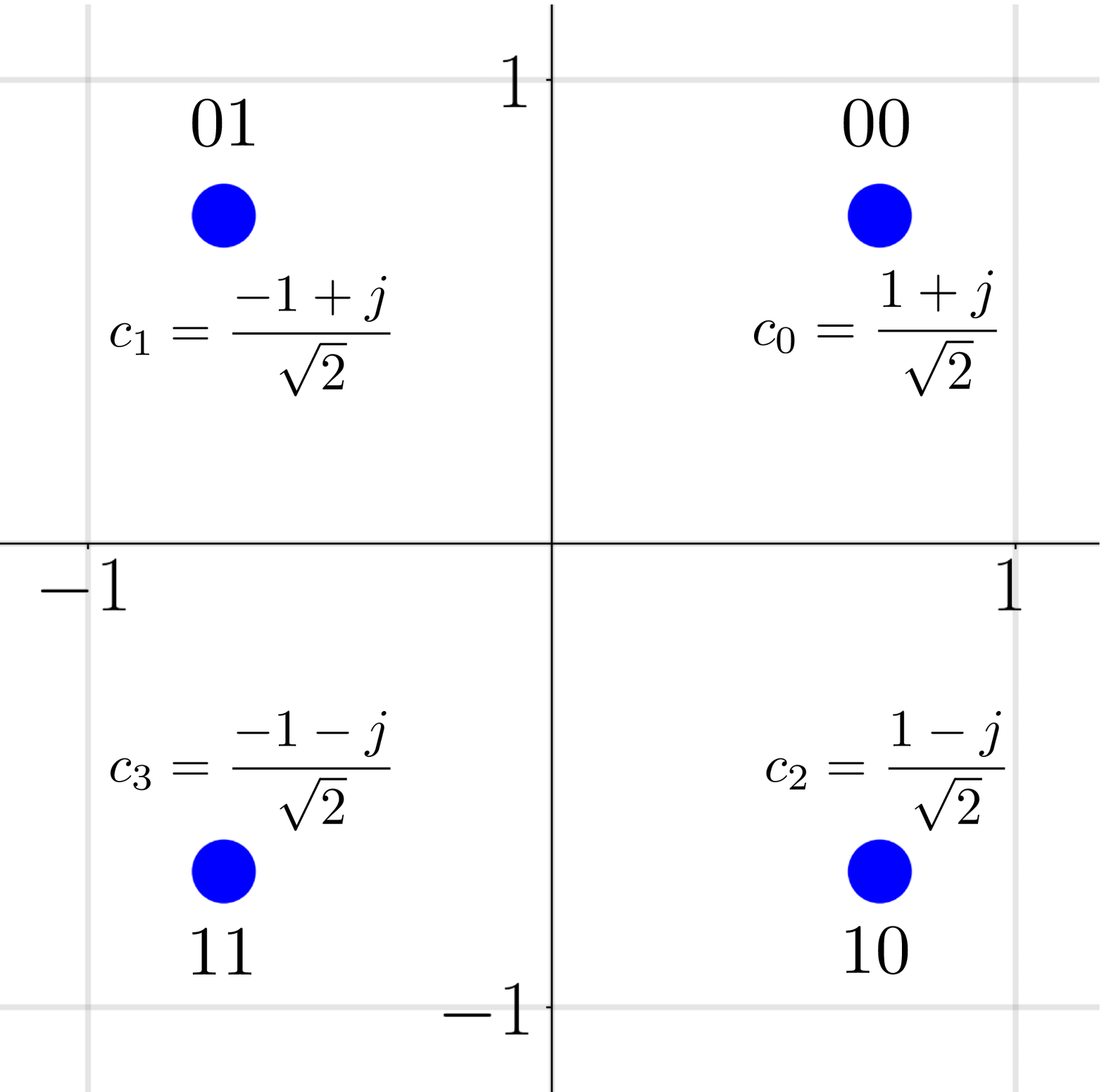}}
            \caption{QPSK constellation.}
            \label{fig:qpsk const}
        \end{subfigure}
        \begin{subfigure}[t]{0.24\textwidth}
            \centering
            \tcbox[colback=white, left=0mm, right=0mm, top=0mm, bottom=0.5mm]{
            \includegraphics[scale=0.14]{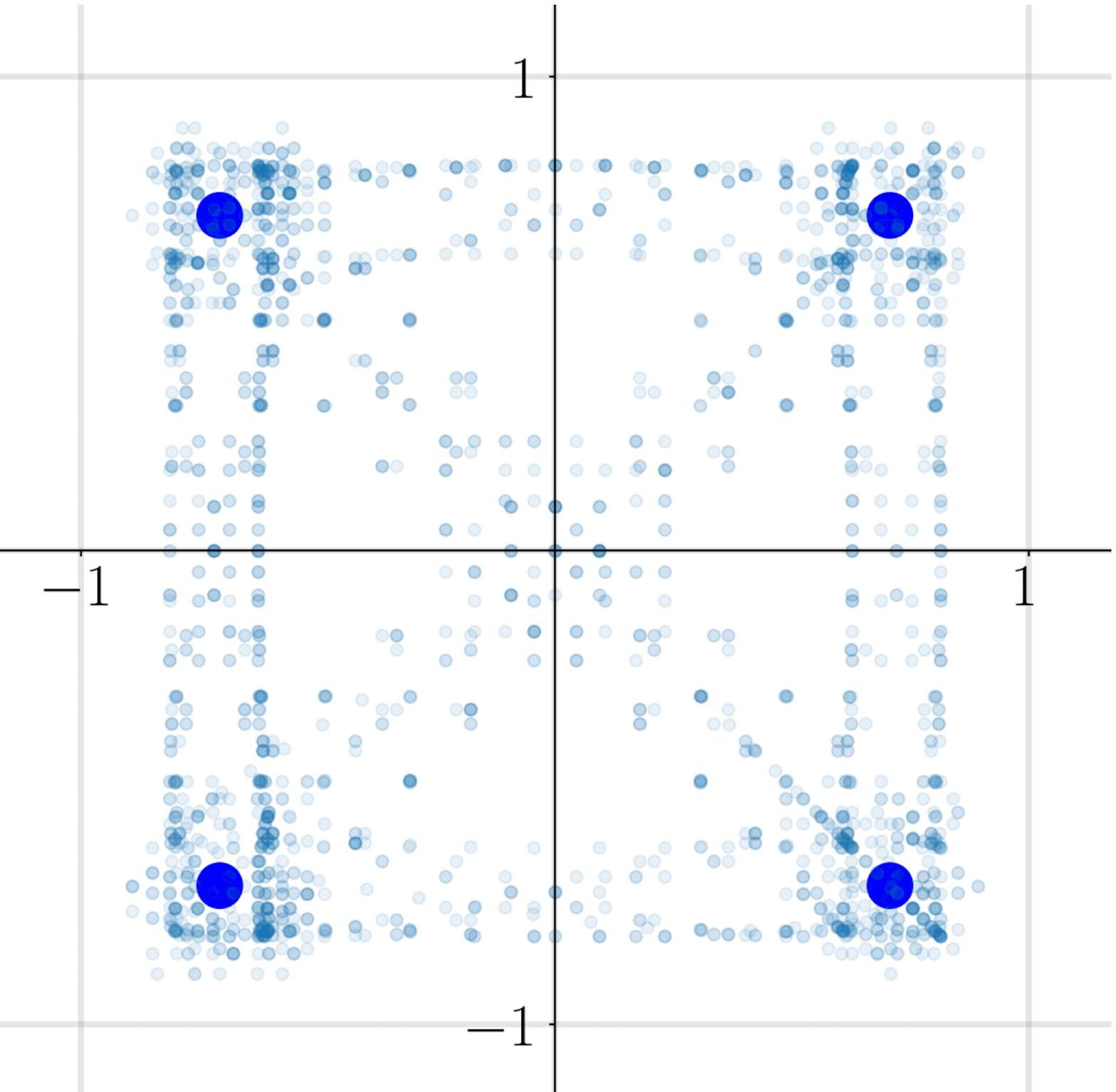}}
            \caption{Effect of RRC filtering.}
            \label{fig:qpsk rrc const}
        \end{subfigure}
        \begin{subfigure}[t]{0.24\textwidth}
            \centering
            \tcbox[colback=white, left=0mm, right=0mm, top=0mm, bottom=0.5mm]{
            \includegraphics[scale=0.14]{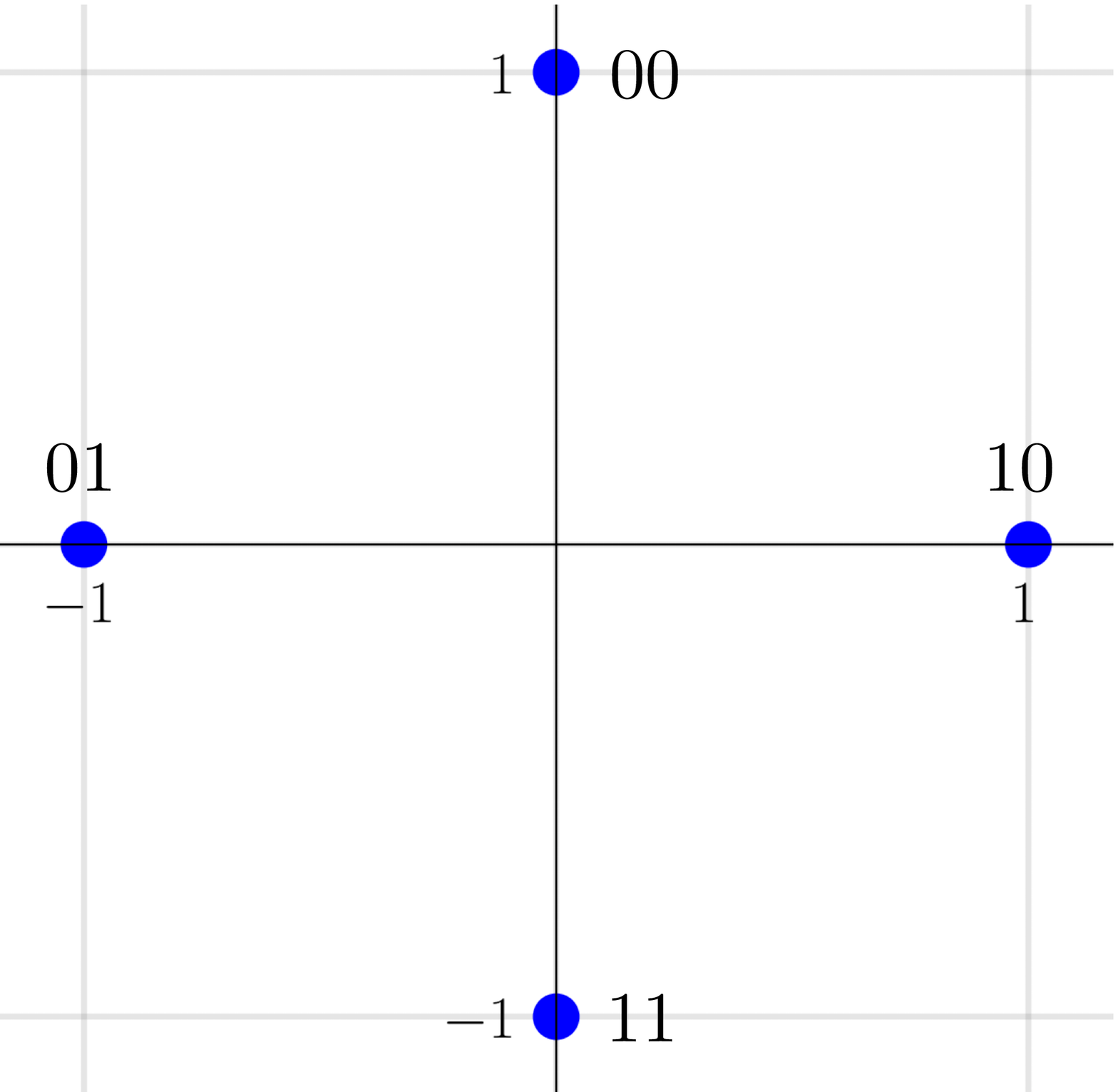}}
            \caption{Effect of $45^\circ$ phase.}
            \label{fig:qpsk const rot}
        \end{subfigure}
        \caption{Discrete constellations for a BPSK and QPSK signal.  Application of the RRC filter in the time domain leads to interpolation between constellation point as shown in (c).  Phase offsets in the waveform domain can be corrected by looking at the rotated constellation as in (d).}
        \label{fig:constellations}
    \end{figure}
\paragraph{Total number of subcarriers:} This corresponds to $L$ in \eqref{eq:digital comm} and specifically to OFDM signals of the form in \eqref{eq:digital comm multicarrier}.  The subcarriers are complex exponentials of the form $\exp{\{j 2 \pi \ell/L\}}, \ell \in \{0, \dots, L-1\}$ with spacing $2\pi/L$ in the (angular) frequency domain, and are orthogonal to each other. Note that the weighted summation with complex exponentials can be interpreted as computing the \textit{inverse discrete Fourier transform} (IDFT) of the pulse-shaped symbols, for which the inverse fast Fourier transform (IFFT) is used during implementation.  Thus, in an interference-free setting, decoding the symbols from $b(t)$ can be performed by computing the FFT of $b(t)$.
\paragraph{Starting point:} When decoding an RF waveform to extract the underlying symbols, we need them to be \textit{time-synchronized} to a specified frame of reference known as the starting point, $\tau$.  If synchronized correctly, this indicates the number of time samples to stream in before bit decoding can begin.  If unsynchronized, this corresponds to a shift in time that results in ``non-natural'' looking decoded constellations at all other time shifts except for the correct time-shift $\tau$. Here, decoding of $u(t)$ can be performed either via an FFT for OFDM, or sampling after matched filtering for a single-carrier signal.

\subsection{Synthetic and Over-the-air RF Datasets}
\label{sec: signal types}

Below we describe the details relating to the RF signals used in our single-channel source separation (SCSS) experiments. To simulate representative types of RF signals, we generated single-carrier QPSK signals with RRC pulse shaping and OFDM signals using the NVIDIA Sionna library \cite{sionna}. We also  used over-the-air recordings corresponding to ``CommSignal2'' from the dataset in the RF challenge. We do not have explicit knowledge on CommSignal2 source model as it is not provided in the challenge. Hence, we are unable to explicitly describe the pulse shaping function of such a signal type---leaving learning of such properties to our data-driven diffusion models instead.

\paragraph{RRC-QPSK.}We use QPSK signals with root raised cosine (RRC) pulse shaping \cite[Sec~11.3]{lapidoth2017foundation} as the SOI across all our experiments.  These signals are used in a variety of different protocols ranging from low-power radios to cellular signals and even for satellite communication.  For our experiments, we generated a dataset of 100,000 RRC-QPSK waveforms with the following parameters:
\begin{itemize}
    \item \textbf{Modulation scheme}: QPSK (2 bits per symbol)
    \item \textbf{Pulse shaping function}: RRC, roll-off factor 0.5, spanning $\sim$8 symbols
    \item \textbf{Default starting point}: $\tau_s=8$ 
    \item \textbf{QPSK-specific details}:
    \begin{itemize}
        \item[$\circ$] \textbf{Oversampling rate}: $16$ [samples/symbol], using zero-padding between symbols.
    \end{itemize}
\end{itemize}

\paragraph{OFDM (QPSK/BPSK).}For the OFDM signals (specifically, cyclic prefix OFDM), we generated waveforms with the following parameters (so as to simulate an 802.11n WiFi waveform):
\begin{itemize}
    \item \textbf{Modulation scheme}: Either QPSK (2 bits per symbol) or BPSK (1 bit per symbol)
    \item \textbf{Total number of subcarriers (FFT size)}: 64
    \item \textbf{Symbol shaping}: None as per \eqref{eq:digital comm multicarrier} or rectangular window as per \eqref{eq:digital comm}
    \item \textbf{Other notes}: Random time offset and random phase rotation for each time series realization
    \item \textbf{OFDM-specific details}:
    \begin{itemize}
        \item[$\circ$] \textbf{Number of non-zero subcarriers}: 56 (null at DC subcarrier ($\ell=0$))
        \item[$\circ$] \textbf{Length of cyclic prefix}: 16 (25\% of FFT size)
        \item[$\circ$] \textbf{Number of non-zero symbols}: 56 symbols per 80 samples (1792 symbols in a 2,560 window)
        \item[$\circ$] \textbf{Symbol length}: 80 (64 subcarriers + cyclic prefix of length 16)
    \end{itemize}
\end{itemize}

In our experiments, we choose to use OFDM signals as one class of interference signals. This means that, given a new realization, we assume that the starting point of a new OFDM symbol (i.e., $\tau_b$) is \emph{unknown}. Further, we assume no access to knowledge about the parameters outlined above, such as the number of subcarriers, the FFT size and the length of cyclic prefix. Hence, we seek a data-driven approach where characteristics of this signal should be learned through data.

\paragraph{CommSignal2.}CommSignal2 is a dataset comprising over-the-air recordings of type of communication waveforms, provided in \cite{rfchallenge}. This dataset contains $150$ frames each of length $43,560$ samples. For purposes of this work, we use the first $100$ frames in creating our training set for training the diffusion model, and we extract windows of relevant lengths ($2560$ samples) from it, to create a dataset of size $\sim 160, 000$ samples. The last $50$ frames are reserved for creating our test set.  Similar to the OFDM case, we also perform random time offsets and phase rotations\footnote{Namely, multiplication of the signal with a complex exponential with a random (imaginary) exponent.} for each time series realization. 

\section{Diffusion Models for RF Signals}
\label{sec:diffusion models for RF signals}

We train diffusion models to learn the statistical structure inherent to digital RF signals.  As detailed in Appendix \ref{sec:datasets}, these signals exhibit both continuous and discrete properties. To this end, we adopt an architecture based on  an open-sourced implementation\footnote{\url{https://github.com/lmnt-com/diffwave}} of the DiffWave \cite{kong2020diffwave} diffusion model, initially developed for speech synthesis.  

Broadly speaking, as shown in \cite[Figure~2, Appendix~C]{kong2020diffwave}, DiffWave employs $R$ residual blocks with dilated convolutions, where the output of block $i-1$ serves as an input to block $i$, $i \in \{0, \dots, R-1\}$.  The dilated convolutions assist in the learning of long range temporal and periodic structures. The dilations first start small and successively get larger, such that the dilation at block $i$ is given by $2^{i\, \textrm{mod}\, m}$, where $m$ is the dilation cycle length. For example, if the dilation periodicity is $m = 10$, then in block $i=9$ the dilation is $512$ and in block $10$ the dilation is reset to $1$.  This allows the network to efficiently tradeoff between learning local and global temporal structures.  All residual blocks use the same number of channels, $C$.

\paragraph{Modifications.}Several changes have been made in order to facilitate training with RF signals. First, since we are dealing with complex-valued continuous waveforms, we train on two channel signals where the real and imaginary components of the RF signals are concatenated in the channel dimension.  Second, while the open-sourced implementation uses an $\ell_1$ loss for training, we train with an MSE (squared $\ell_2$) loss to match the training objective in \cite{ho2020denoising}.  We monitor the validation MSE loss, and once the loss stops decreasing substantially, we choose to stop training early.  Lastly, we had to increase the channel dimension $C$ to learn more complicated RF signals, e.g., OFDM (QPSK).  We train all our models on 2 $\times$ NVIDIA 3090 GPUs.  Additionally, during data loading, we perform random time shifts and phase rotations on the OFDM (BPSK), OFDM (QPSK) and CommSignal2 signals.  Physically, these simulate transmission impairments in RF systems. The hyperparameters that we use for training each model are shown in Table \ref{table:diffusion training}.

\begin{table}[]
\caption{Hyperparameters used for training our diffusion models.}
\label{table:diffusion training}
\begin{tabular}{lllll}
\toprule
                                & QPSK    & OFDM (BPSK) & OFDM (QPSK) & CommSignal2 \\
                                \midrule
Number of residual blocks ($R$) & $30$      & $30$          & $30$          & $30$          \\
Dilation cycle length ($m$)     & $10$      & $10$          & $10$          & $10$          \\
Residual channels ($C$)         & $64$      & $128$         & $256$         & $128$         \\
Random shift + phase rotation   & \xmark      & \checkmark         & \checkmark         & \checkmark          \\
Batch size                      & $128$     & $128$         & $64$          & $128$         \\
Learning rate                   & $5\mathrm{e}{-4}$    & $5\mathrm{e}{-4}$        & $5\mathrm{e}{-4}$        & $1\mathrm{e}{-4}$        \\
Early stopping iteration        & $360,000$ & $220,000$     & $340,000$     & $90,000$  \\
\bottomrule
\end{tabular}
\end{table}

\paragraph{Metrics.}In the image domain, metrics such as the Fr\'echet Inception Distance (FID) \cite{heusel2017gans} can be used to assess the quality of generative models. However, such perceptual metrics are less relevant to the digital communications domain.  In the RF domain, the fidelity is measured by the ability to extract the underlying transmitted bits.  Hence, in the context of our synthetic datasets, we probe generated samples and compare the estimated received symbols with the underlying constellation.  However, for real world signals such as CommSignal2, for which we do not know the system parameters, we have no other means to assess the fidelity apart from looking at time-domain structure.  This is another motivation for studying RF source separation, as it provides us with a framework to assess the quality of the learned statistical priors.

\subsection{RRC-QPSK}

\begin{figure}[!t]
        \centering
        \begin{subfigure}[t]{0.50\textwidth}
            \centering
            \tcbox[colback=white, left=0mm, right=1.5mm, top=0mm, bottom=0.5mm]{\includegraphics[scale=0.45]{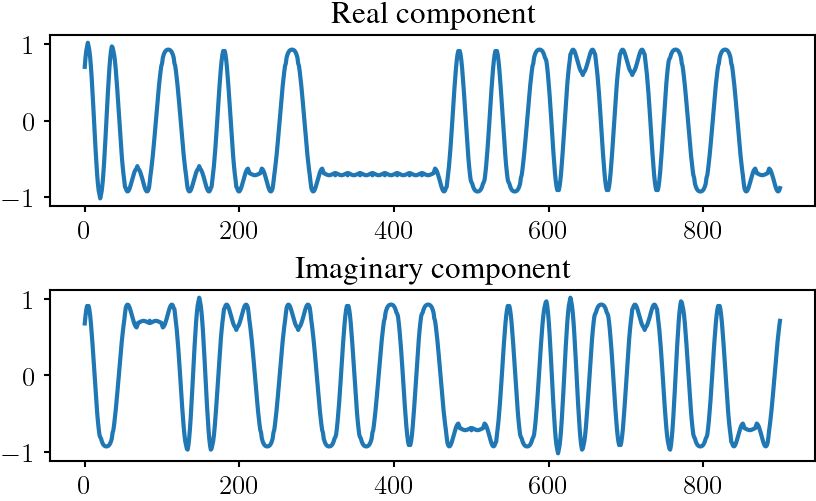}}
        \end{subfigure}%
        ~
        \begin{subfigure}[t]{0.50\textwidth}
            \centering
            \tcbox[colback=white, left=0mm, right=1.5mm, top=0mm, bottom=0.5mm]{\includegraphics[scale=0.45]{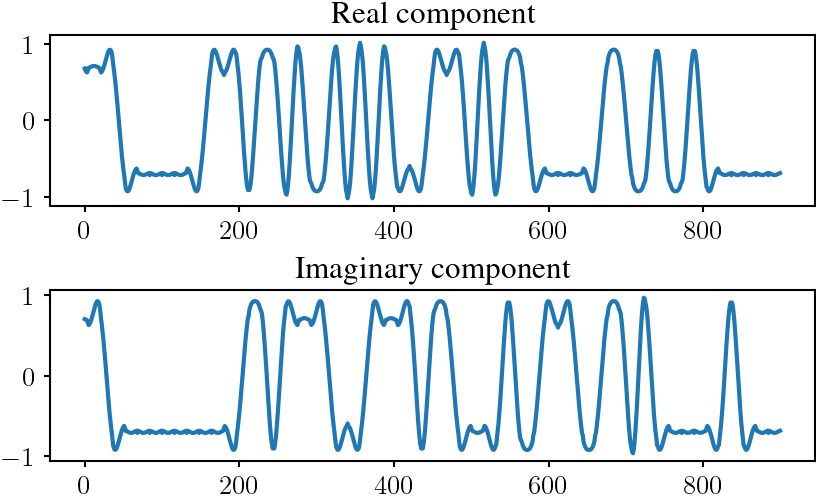}}
        \end{subfigure}
        \caption{\textbf{Left:} A ground truth RRC-QPSK time-domain waveform. \textbf{Right:} A sample generated by our trained RRC-QPSK diffusion model. Evidently, the generated waveform resembles the true one.}
        \label{fig:qpsk waveform}
    \end{figure}

We train a diffusion model on the RRC-QPSK dataset (see Appendix \ref{sec: signal types}).  While a QPSK signal has four distinct constellation points, application of the RRC filter results in interpolation between points as shown in Figure \ref{fig:qpsk rrc const}.  Figure \ref{fig:qpsk waveform} shows an example of a ground truth RRC-QPSK waveform on the left.  On the right we show a RRC-QPSK sample generated by our diffusion model.  While it is hard to judge whether it has learned the underlying discrete structure, visually, the waveform seems to have characteristics similar to the ground truth. 

\begin{figure}[!htb]
\centering
    \includegraphics[scale=0.33]{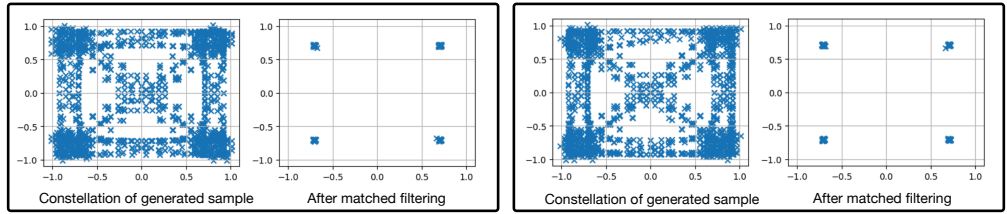}
    \caption{Two samples generated from the diffusion model trained on RRC-QPSK samples. Within each box, the image on the left is underlying constellation of the realization generated by the diffusion model.  Notice that the RRC filtering results in interpolation between the QPSK constellation points.  After applying MF, the effects of pulse shaping are reversed and the original QPSK constellation is recovered.}
    \label{fig: qpsk samples}
\end{figure}

In Figure \ref{fig: qpsk samples}, we show that by probing generated samples for the underlying (interpolated) symbols, we do indeed recover the constellation for an RRC-QPSK signal.  Note that we can do such probing since we know the parameters and signal generation model for an RRC-QPSK signal.  Furthermore, by performing MF and removing the effects of pulse shaping we are able to recover the original QPSK constellation, thus demonstrating that the diffusion model has indeed successfully learned the discrete constellation along with the pulse shaping function from data samples.

\subsection{OFDM (BPSK and QPSK)}

\begin{figure}[!thb]
    \centering
    \begin{subfigure}[t]{0.5\textwidth}
        \centering
        \tcbox[colback=white, left=0mm, right=1.5mm, top=0mm, bottom=0.5mm]{\includegraphics[scale=0.47]{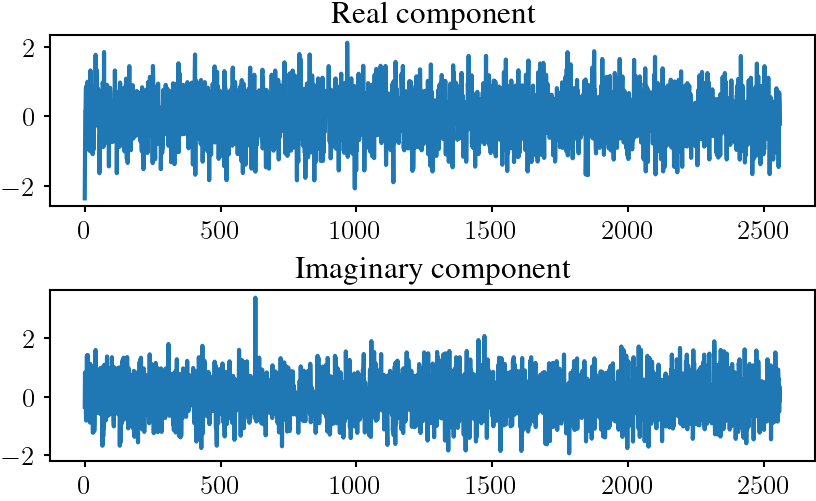}}
    \end{subfigure}%
    ~
    \begin{subfigure}[t]{0.5\textwidth}
        \centering
        \tcbox[colback=white, left=0mm, right=4.5mm, top=4mm, bottom=0.5mm]{\includegraphics[scale=0.48]{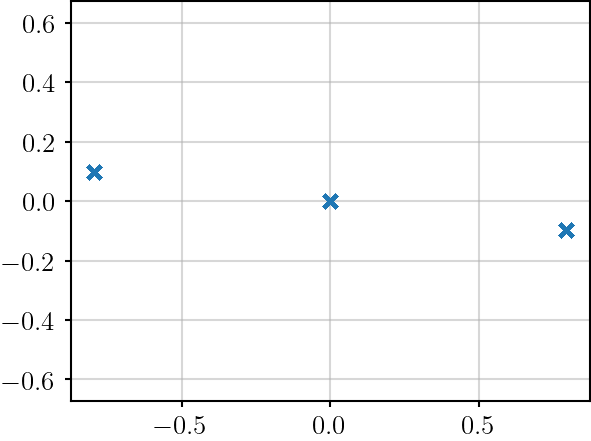}}
    \end{subfigure}%
    \caption{\textbf{Left:} A complex-valued OFDM (BPSK) source augmented with a time shift of 8 samples and with a random phase rotation plotted in the time domain.  In the time domain, the discrete structure is not discernible and the time-domain waveforms visually look like Gaussian noise. \textbf{Right:} Extracting the underlying (rotated) symbols from the waveform on the left by demodulating the OFDM signal using oracle knowledge about the FFT size, cyclic prefix, and time shift.}
    \label{fig: gt bpsk}
\end{figure}

We train two OFDM diffusion models, on the OFDM (BPSK) and the OFDM (QPSK) dataset respectively (see Appendix \ref{sec: signal types}). Figure \ref{fig: gt bpsk}, shows an example of an OFDM (BPSK) signal from our dataset.  On the left we plot the real and imaginary components of the waveform.  These components visually look like Gaussian noise and it is not evident that there is actually inherent structure to these signals.  As mentioned in  Appendix \ref{sec: signal types}, the OFDM signals can be demodulated using oracle knowledge about the FFT size and the cyclic prefix.  Additionally, since the signal undergoes a random time shift and phase rotation, the underlying (rotated) BPSK constellation will only be visible when the time shift has been compensated for, i.e., the signal has been time-synchronized, as shown on the right in Figure \ref{fig: gt bpsk}. 

\begin{figure}[!t]
    \centering
    \begin{subfigure}[t]{\textwidth}
        \centering
        \tcbox[colback=white, left=0mm, right=1.5mm, top=0mm, bottom=0.5mm]{\includegraphics[scale=0.60]{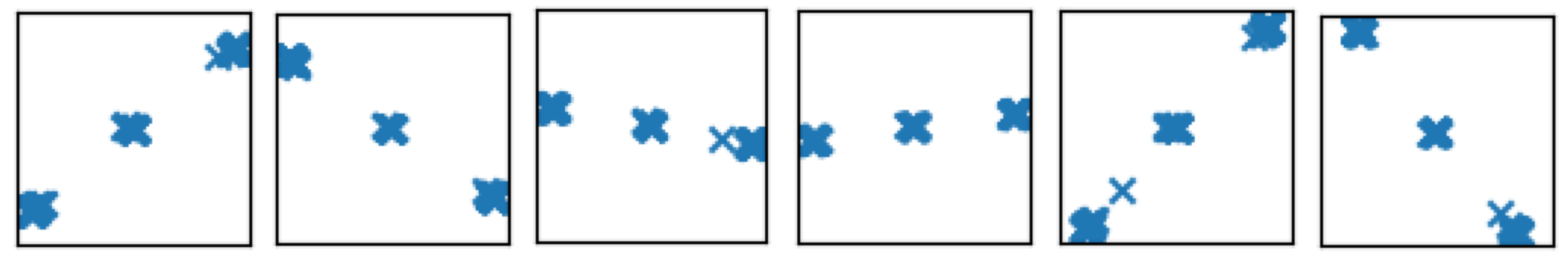}}
        \vspace{0.2em}
    \end{subfigure}
    \begin{subfigure}[t]{\textwidth}
        \centering
        \tcbox[colback=white, left=0mm, right=1.5mm, top=0mm, bottom=0.5mm]{\includegraphics[scale=0.6]{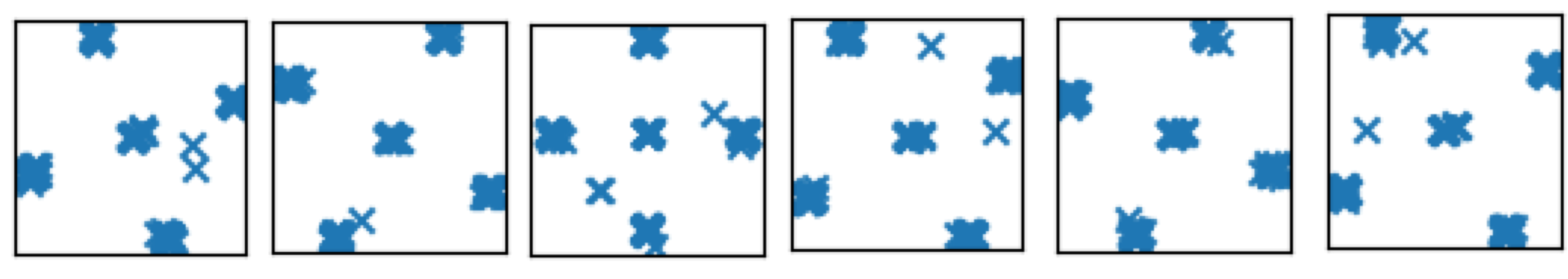}}
    \end{subfigure}%
    \caption{\textbf{Top:} Probing six generated samples from the OFDM (BPSK) diffusion model recovers the underlying BPSK constellation after time synchronization. \textbf{Bottom:} Probing six generated samples from the OFDM (QPSK) diffusion model recovers the underlying BPSK constellation after time synchronization.}
    \label{fig: ofdm samples}
\end{figure}

The top row of Figure \ref{fig: ofdm samples} shows the recovered constellation for six generated OFDM (BPSK) symbols and the bottom row shows the same for six generated OFDM (QPSK) signals.  Note that since there are some null symbols, i.e., unused subcarriers in the frequency domain, there is an additional ``constellation point'' at the origin.  In general the recovered constellations are clean, except for a few symbols that lie slightly off the constellation. When converting back to bits the symbols are mapped to the nearest constellation point.

\subsection{CommSignal2}

The CommSignal2 dataset was recorded over-the-air and hence, unlike the synthetic datasets, we cannot probe it without knowledge of the true system parameters and signal generation model, which is not provided in \cite{rfchallenge}.  These complications serve as one of our motivations to develop data-driven source separation models.  In the real-world, CommSignal2 could interfere with an SOI such as a QPSK signal for which the signal generation model is known.  A learned prior for CommSignal2 could help mitigate such interference.

As shown in Figure \ref{fig: commsignal2 samples}, we observe that the generated CommSignal2 waveforms generally display similar global temporal structure to the ground truth waveforms.  To further demonstrate that our diffusion model has truly learned the statistical structure of the signal, we carry out source separation experiments and demonstrate that leveraging this diffusion model outperforms conventional and learning-based source separators.

\begin{figure}[!t]
    \centering
    \begin{subfigure}[t]{0.5\textwidth}
        \centering
        \tcbox[colback=white, left=0mm, right=1.5mm, top=0mm, bottom=0.5mm]{\includegraphics[scale=0.61]{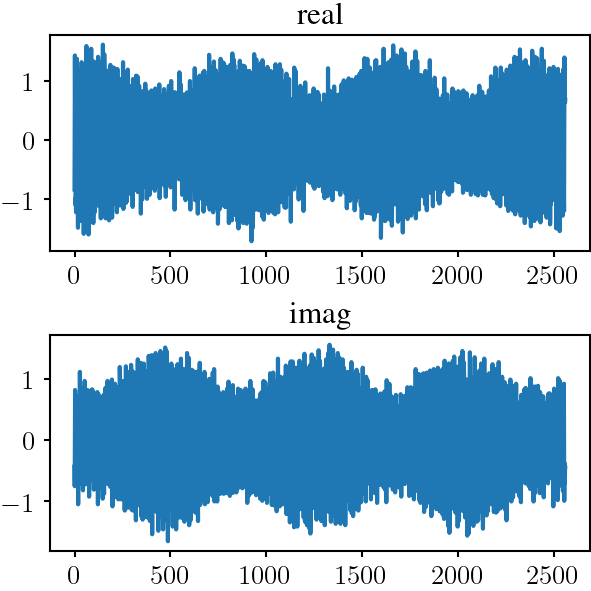}}
    \end{subfigure}%
    ~
    \begin{subfigure}[t]{0.5\textwidth}
        \centering
        \tcbox[colback=white, left=0mm, right=1.5mm, top=0mm, bottom=0.5mm]{\includegraphics[scale=0.6]{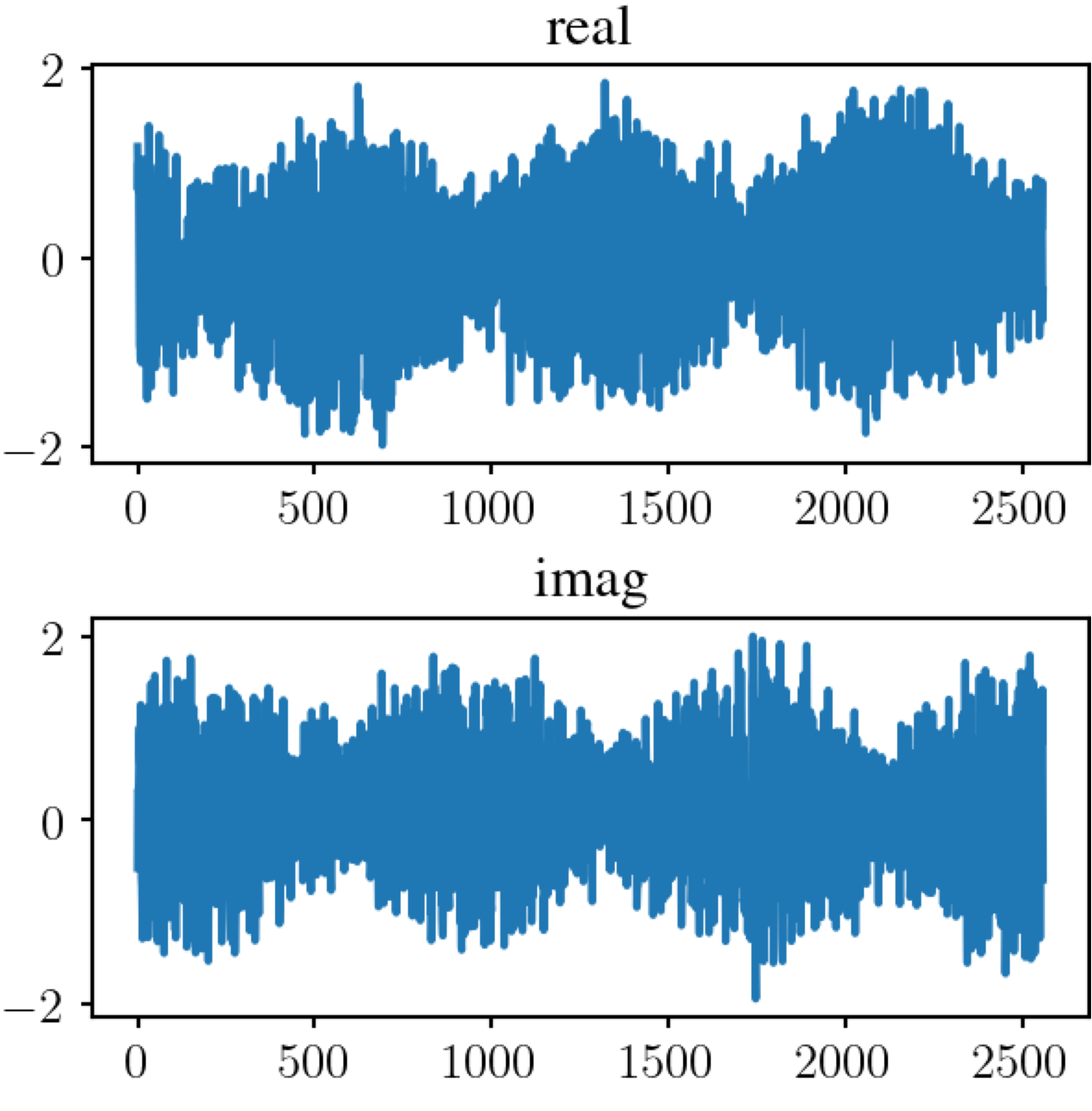}}
    \end{subfigure}%
    \caption{\textbf{Left:} Ground truth CommSignal2 waveform from the dataset. \textbf{Right:} A generated CommSignal2 waveform from the learned diffusion model.}
    \label{fig: commsignal2 samples}
\end{figure}

\begin{figure}[!h]
    \centering
    \tcbox[colback=white, left=0mm, right=1.5mm, top=0mm, bottom=0.5mm]{\includegraphics[width=0.9\textwidth]{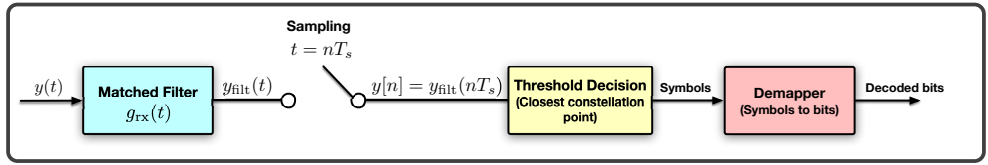}}
    \caption{Block diagram of the \textbf{matched filtering demodulation} pipeline used in our work. (First symbol at $\tau_s=8$, symbol period $T_s=16$, as detailed in Appendix \ref{sec:datasets}}
    \label{fig:matched filtering demodulation}
\end{figure}

\section{Classical RF Interference Mitigation Techniques}
\label{sec:classical rf interference mitigation techniques}

\subsection{Matched Filtering}
\label{sec:mf}

Matched filtering (MF) is a signal processing technique commonly used in the demodulation of RF waveforms. It exploits knowledge about the signal waveform to enhance the detection and recovery of the transmitted symbols/bits, and is optimal in the maximum-SNR sense for signals contaminated with additive Gaussian noise.

The basic principle involves filtering (or similarly viewed as correlating) the received sampled RF waveform with a known reference waveform called the ``matched filter''. The goal is to maximize the signal-to-interference-plus-noise ratio (SINR) at the filtered output, which consequently minimizes the error probability in the subsequent symbol detection when the noise is Gaussian. In the following segment, we develop the theory of the matched filter, with particular focus to our problem formulation. 

Consider the baseband RRC-QPSK signal as described in Appendix \ref{sec:datasets}. Suppose, for the purposes of this exposition, that we adopt a simple additive white Gaussian noise (AWGN) channel model, thereby representing our received signal as
\begin{align}
    y(t) &= \sum_{p} c_p \, g_{\textrm{tx}}(t-pT_s) + w(t)\\
    &=g_{\textrm{tx}}(t) * \sum_p c_p\, \delta(t-pT_s) + w(t),
\end{align}
where  $c_p$ are the symbols from a QPSK constellation (see Figure \ref{fig:qpsk const}), $*$ denotes the convolution operator, $\delta(\cdot)$ is the dirac delta fucnction, and $w(t)\sim\mathcal{N}(0, \sigma_{\textrm{AWGN}}^2)$ is the additive noise in the observed signal, statistically independent of all $\{c_p\}$. Of particular interest in this formulation is the transmit pulse shaping function $g_{\textrm{tx}}(t)$, where we chose to use the RRC function. (This is due its favorable properties in practice; however, the optimal transmit filter choices and waveform designs are outside the scope of this work, and we defer readers to related resources for more information on this topic \cite{lapidoth2017foundation, heath2017introduction}.)

At the receiver, we seek a receiver filter, $g_{\textrm{rx}}(t)$, such that the filtered and sampled output
\begin{align}
    y_\text{filt}(t) &= \underbrace{g_{\textrm{rx}}(t) * g_{\textrm{tx}}(t)}_{:=g(t)} * \sum_p c_p \delta(t-pT_s) + g_{\textrm{rx}}(t) *w(t) \\
    y[n] = y_\text{filt}(nT_s) &= \sum_{p} c_p \, g((n-p)T_s) + \underbrace{\int w(\tau) \, g_{\textrm{rx}}(nT_s-\tau){\rm d}\tau}_{:=v[n]} \\
    &= \underbrace{c_n \, g(0)}_{:= y_s[n]} + \underbrace{\sum_{p\neq n} c_n \, g((n-p)T_s) + v[n]}_{:= y_v[n]}
    \label{eq:matched filtered output}
\end{align}
would maximize the output SINR. In other words, we are looking to maximize
\begin{equation}
    \textrm{SINR} = \frac{\mathbb{E}\left[|y_s[n]|^2\right]}{\mathbb{E}\left[|y_v[n]|^2\right]} = \frac{\mathbb{E}\left[|c_n|^2\right]|g(0)|^2}{\mathbb{E}\left[|c_n|^2\right]\sum_{p\neq n}|g(pT_s)|^2 + \sigma_{\textrm{AWGN}}^2\int|G_{\textrm{rx}}(f)|^2 {\rm d}f}
\end{equation}
(where $G_{\textrm{rx}}(f)$ is the Fourier transform of $g_{\textrm{rx}}(t)$)
via an appropriate choice of $g(t)$---and thereby, $g_{\textrm{rx}}(t)$. This can be done by finding an upper bound on the SINR that reaches equality for the appropriate filter choices. 
Ultimately, one such choice is $g_{\textrm{rx}}(t)=g^*_{\textrm{tx}}(-t)$---termed as the \textit{matched filter}---that leads to a maximized SINR. In the case of an RRC pulse shaping function (which is real and symmetric), the matched filter is also the same RRC function. 

In this work, we refer to matched filtering more broadly to also include the detector/decoding step, as shown in Figure \ref{fig:matched filtering demodulation}. As part of the demodulation pipeline, the filtered output is sampled (as in \eqref{eq:matched filtered output}), and then mapped to the closest symbol in a predefined constellation (in the Euclidean distance sense). Finally, we can map these complex-valued symbols back to their corresponding bits to recover the underlying information. We use this as a standard demodulation/detection pipeline for our RRC-QPSK SOI following the corresponding signal separation/interference mitigation step (where applicable). 

Demodulation with matched filtering is optimal for waveforms in the presence of additive Gaussian noise. However, in our signal separation problem, we consider the presence of an additive interference, which is not necessarily Gaussian. Thus, exploiting the non-Gaussian characteristics of the interference would likely lead to enhanced decoding performance. 

\subsection{LMMSE Estimation}
\label{sec:lmmse}

Recall that our observation model is
\begin{equation*}
    \bsvy = \bsvs + \kappa \bsvb,
\end{equation*}
where we assume $\bsvs$ and $\bsvb$ are zero-mean and that they are statistically independent. The linear minimum mean square error (LMMSE) estimator is the estimator $\widehat{\bsvs} = \boldsymbol{W}_{\textrm{LMMSE}} \bsvy$, such that
\begin{equation}
    \boldsymbol{W}_{\textrm{LMMSE}} = \argmin_{\boldsymbol{W}\in\C^{T\times T}} \mathbb{E}\left[\| \bsvs - \boldsymbol{W}\bsvy\|^2_2\right].
    \label{eq:lmmse equation}
\end{equation}
In this case, the optimal linear transformation (in the sense of \eqref{eq:lmmse equation}) can be written as
\begin{equation*}
    \boldsymbol{W}_{\textrm{LMMSE}} = \boldsymbol{C}_{sy}\, \boldsymbol{C}^{-1}_{yy} = \boldsymbol{C}_{ss}\, (\boldsymbol{C}_{ss}+\kappa^2\boldsymbol{C}_{bb})^{-1}
\end{equation*}
where $\boldsymbol{C}_{sy}\coloneqq \mathbb{E}[\bsvs \bsvy^\textrm{H}]$ corresponds to the cross-covariance between $\bsvs$ and $\bsvy$, $\boldsymbol{C}_{yy}$, $\boldsymbol{C}_{ss}$, $\boldsymbol{C}_{bb}$ are the auto-covariance of $\bsvy$, $\bsvs$ and $\bsvb$ respectively. The second equality is obtained by statistical independence, thereby $\boldsymbol{C}_{sy} = \boldsymbol{C}_{ss}$, $\boldsymbol{C}_{yy} = \boldsymbol{C}_{ss}+\kappa^2 \boldsymbol{C}_{bb}$.

In relevant literature, this may also be referred to as linear MMSE receivers \cite[Sec~8.3.3]{tse2005fundamentals}. Note that we aim to mitigate the effects of an additive interference channel. For the purposes of this work, we use LMMSE as one of the baselines as a signal separation (interference mitigation) method. Thereafter, we assess performance based on the squared error between $\widehat{\bsvs}$ with the ground truth $\bsvs$. To obtain the underlying bits, we perform a standard matched filtering operation on the estimator $\widehat{\bsvs}$. 

To implement the LMMSE estimator described above, one simply requires the covariances $\boldsymbol{C}_{yy}$, $\boldsymbol{C}_{ss}$, $\boldsymbol{C}_{bb}$. Under the problem formulation, one could potentially obtain the sample covariance, whose precision scales with the number of samples used. In the case of the synthetic signals, we assume oracle knowledge of the covariance when computing this baseline, so as to eliminate confounding effects from poor estimation of these statistics. However, for CommSignal2, we assume that the signal is zero-mean (validated with the empirical mean), and compute the autocorrelation matrix for time windows of length $2560$, across 10,000 sample realizations drawn randomly from the frames provided in the dataset. The empirical covariance estimate that we used will be provided in the accompanying code/data repository. 

Note that in this problem, the interference also has random time offsets, which has to be accounted for when computing or estimating the covariance of our interference signals. The works in
\cite{lee2022exploiting,lancho2022} address the effects of such random time offsets in detail, and discuss the suboptimality of the above LMMSE in such a setting. We acknowledge that without an additional time synchronization step, the LMMSE estimator adopted herein may be modeling the interference as time series from a stationary process. 

We remark that the LMMSE estimator is optimal if the components were Gaussian. However, as digital communication signals contain some underlying discreteness and undergo unknown time-shifts, these signals are typically non-Gaussian (and often, even far from Gaussian). Hence, better performance can generally be obtained through nonlinear methods. 

\section{BASIS for RF Source Separation}
\label{sec:BASIS}
\setcounter{algorithm}{1}
  \begin{algorithm}[!b]
    \caption{BASIS Separation (original, as proposed in \cite{jayaram2020source})}
    \label{alg: basis}
    \begin{algorithmic}[1]
      \Function{separation}{$\bsvy$, $\kappa$, $N$, $\{\sigma_t^2\}_{t=1}^{T}$, $\bpsi_s^{(0)}$, $\bpsi_b^{(0)}$} \Comment{Specially tuned noise schedule $\{\sigma_t\}_{t=1}^{T}$}
        \For{$t\gets 1, T$}
            \State $\bpsi_s^{(t)} \gets \bpsi_s^{(t-1)}$, \,$\bpsi_b^{(t)} \gets \bpsi_b^{(t-1)}$, $\eta_t \gets f(\sigma^2_t)$ \Comment{Learning rate $\eta_t$}
            \For{$i\gets 0, N-1$}
                \State $\boldsymbol{\epsilon}_s, \boldsymbol{\epsilon}_b \sim \mathcal{N}(0, \mathbf{I})$
                \State $\widehat{\bsvz}_s,\, \widehat{\bsvz}_b = r_{\phi_{\bsvs}}\Bigl(\bpsi_s^{(t)},\, t\Bigr), \, r_{\phi_{\bsvb}}\Bigl(\bpsi_t^{(t)}, t\Bigr)$  \Comment{Compute scores at noise level $\sigma_t$}
                \State $\bpsi_s^{(t)} \gets \bpsi_s^{(t)} - \eta_t \underbrace{\left(\widehat{\bsvz}_s / \sigma_t\right)}_{\textrm{approx. score}} - \eta_t\left(\bsvy - \bpsi_s^{(t)} - \kappa \bpsi_b^{(t)}\right)/\sigma_t^2 + \sqrt{2 \eta_t}\boldsymbol{\epsilon}_s$
                \State $\bpsi_b^{(t)} \gets \bpsi_b^{(t)} - \eta_t \left(\widehat{\bsvz}_b / \sigma_t\right) - \eta_t\kappa \left(\bsvy - \bpsi_s^{(t)} - \kappa \bpsi_b^{(t)}\right)/\sigma_t^2 + \sqrt{2 \eta_t}\boldsymbol{\epsilon}_b$
            \EndFor
        \EndFor
    \State \textbf{return} $\widehat{\bsvs} = \bpsi_s^{(T)},\, \widehat{\bsvb} = \bpsi_b^{(T)}$
    \EndFunction
    \end{algorithmic}
  \end{algorithm}

We use the BASIS method \cite{jayaram2020source} as a learning-based benchmark for SCSS in our experiments. Similar to our method, BASIS leverages pre-trained generative models as statistical priors and does not rely on end-to-end (supervised) training with paired data. The manner in which these priors are used for separation, however differs:
\begin{enumerate}
    \item \textbf{Soft constraint vs.\ hard constraint:} In the context of our problem formulation, we are interested in decomposing a mixture $\bsvy = \bsvs + \kappa \bsvb$ into its constituent components.  As described in \S\ref{sec: proposed method} our proposed method recovers the components by extending a MAP estimation rule with a hard constraint given by,
    \begin{align}
        \begin{aligned}
            \widehat{\bsvs} = \argmax_{ \bsvs\in\mathcal{S}\,\,\text{s.t}\,\,\bsvy = \bsvs + \kappa \bsvb } p_{\brvs|\brvy}(\bsvs|\bsvy), 
        \end{aligned}
    \end{align}
    where the resulting estimate of $\bsvb$ is $\widehat{\bsvb} = (\bsvy - \widehat{\bsvs})/\kappa$.  The result of imposing this constraint is a MAP objective that leverages two priors but only requires optimization with a single variable,
    \begin{subequations}
        \begin{align}
            \begin{aligned}
                \argmax_{\bsvs\in\mathcal{S}\,\,\textrm{s.t}\,\,\bsvy = \bsvs + \kappa \bsvb} & \, p_{\brvs|\brvy}(\bsvs|\bsvy)
            \end{aligned} &= \begin{aligned}
                \argmax_{\bsvs\in\mathcal{S}\,\,\textrm{s.t}\,\,\bsvy = \bsvs + \kappa \bsvb} & \, p_{\brvy|\brvs}(\bsvy|\bsvs) P_{\brvs}(\bsvs) 
            \end{aligned}\label{eq:regular posterior}\\
                &= \argmin_{\bsvs \in \mathcal{S}} - \log P_{\brvs}(\bsvs) - \log p_{\brvb}\left((\bsvy - \bsvs)/\kappa\right).\label{eq:regular bayes}
        \end{align}
    \end{subequations} 
    
    In contrast, BASIS strives to estimate the underlying components by \textit{sampling from the posterior}.  In particular, they do not enforce a hard constraint during sampling and instead assume a likelihood of the form
    \begin{equation}
        p_{\brvy|\brvs, \brvb}^{\textrm{BASIS}}\left(\bsvy|\bsvs, \bsvb\right) = \mathcal{N}\left(\bsvy;\bsvs + \kappa \bsvb, \gamma^2\mathbf{I}\right),
    \end{equation}
    which actually corresponds to an alternate mixture model with auxilliary noise $\bsv{w} \sim \mathcal{N}(0, \gamma^2\mathbf{I})$ such that,
    \begin{equation}
        \bsvy = \bsvs + \kappa \bsvb + \bsv{w}.
        \label{eq:soft constraint}
    \end{equation}
    Thus, the estimates $\widehat{\bsvs}^{\,\textrm{BASIS}}$ and $\widehat{\bsvb}^{\,\textrm{BASIS}}$ are obtained by sampling from the posterior,
    \begin{equation}
        p_{\brvs, \brvb |\brvy}^{\textrm{BASIS}}(\bsvs, \bsvb|\bsvy) = p_{\brvy|\brvs, \brvb}^{\textrm{BASIS}}\left(\bsvy|\bsvs, \bsvb\right)P_{\brvs}(\bsvs)p_{\brvb}(\bsvb).
        \label{eq:basis posterior}
    \end{equation}
    Notice that sampling from this posterior requires computing \textit{two separate estimates} due to the soft constraint in \eqref{eq:soft constraint}.
    \item \textbf{Annealed Langevin dynamics vs.\ randomized levels of Gaussian smoothing:}  The aforementioned BASIS estimates satisfy $\bsvy =\widehat{\bsvs}^{\,\textrm{BASIS}} + \kappa \widehat{\bsvb}^{\,\textrm{BASIS}}$ only when $\gamma \rightarrow 0$ in the soft constraint in \eqref{eq:soft constraint} enforced via the likelihood term.  As shown in Algorithm \ref{alg: basis}, the idea behind BASIS is to leverage annealed Langevin dynamics sampling to sample from the posterior by slowly decreasing the value of $\gamma$ towards zero by \textit{tuning a specially chosen noise schedule}.  While generally a separate noise schedule is required per prior, the authors suggest sharing the same schedule across both priors.  Apart from tuning the noise schedule, the learning rate schedule is another  hyperparameter that must also be tuned.  In contrast, our algorithm leverages the existing noise schedule from pre-trained diffusion models in a randomized fashion \emph{without any additional tuning}.
    \item \textbf{Optimization via sampling vs.\ optimization via gradient descent:} Starting with initializations $\bpsi^{(0)}_s$ and $\bpsi^{(0)}_b$ for the SOI and interference respectively, BASIS refines the estimate such that $\bpsi^{(t)}_s$ approximates a sample drawn from the distribution of the smoothened source $p_{\tilde{\brvs}_t}$.   Langevin dynamics leverages the approximate score of $p_{\tilde{\brvs}_t}$, to update the estimate by \textit{sampling from higher density regions}.  A similar update is performed for the interference. The estimates are constrained to satisfy \eqref{eq:soft constraint} through the Gaussian likelihood.  
    
    In constrast, as described in Algorithm 1 (see \S\ref{sec: map estimation at multiple noise levels}), we obtain an estimate $\btheta^{(t)}$ of the unsmoothened SOI for each randomly drawn noise level.  The hard constraint allows us to readily get an estimate for the interference as $(y - \btheta^{(t)})/\kappa$.  As described in \S\ref{sec: map estimation at multiple noise levels}, we then update the SOI estimate using gradient descent, with a rule guided by \eqref{eq:regular bayes}, using the \textit{gradients of the log densities} with respect to $\btheta$ (which is functionally equivalent to the scaled scores). The ideal update gradient is,
    \begin{equation}
    \nabla_{\btheta} \mathcal{L}(\btheta) \coloneqq -\E_{\rv{t}, \rvz_t} \left[\sqrt{\alpha_t}\, S_{\tilde{\brvs}_t}\Bigl(\tilde{\bsvs}_t\left(\btheta\right)\Bigr)\right] + \frac{\omega}{\kappa}\E_{\rv{u}, \rvz_u}\left[\sqrt{\alpha_u} S_{\tilde{\brvb}_u}\left(\tilde{\bsvb}_u\left(\btheta, \bsvy\right)\right)\right].\label{eq:proposed loss gradient}
\end{equation}
\end{enumerate}

\begin{figure}[!t]
    \centering
    \begin{subfigure}[t]{0.5\textwidth}
        \centering
        \includegraphics[scale=0.45]{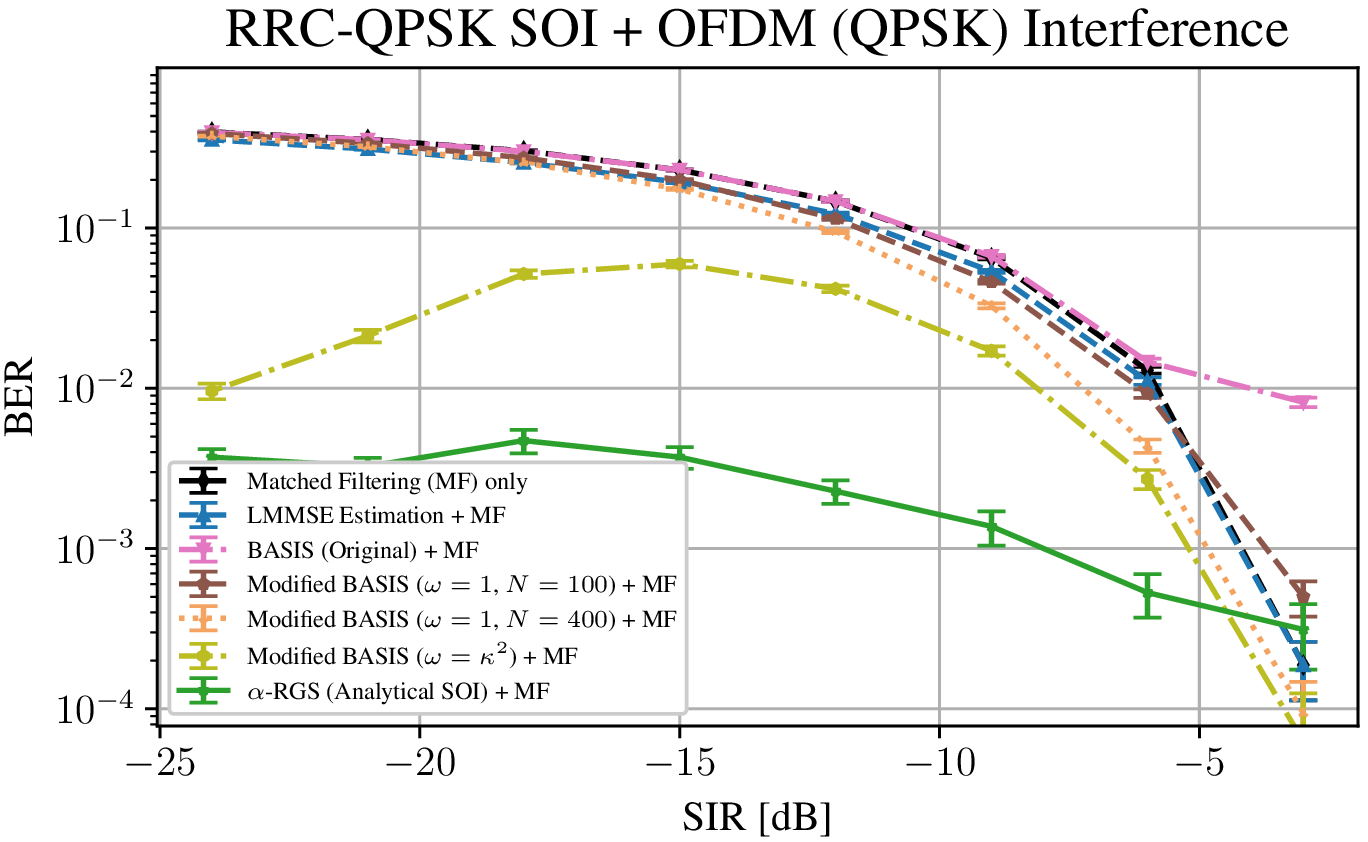}
    \end{subfigure}%
    ~
    \begin{subfigure}[t]{0.5\textwidth}
        \centering
        \includegraphics[scale=0.45]{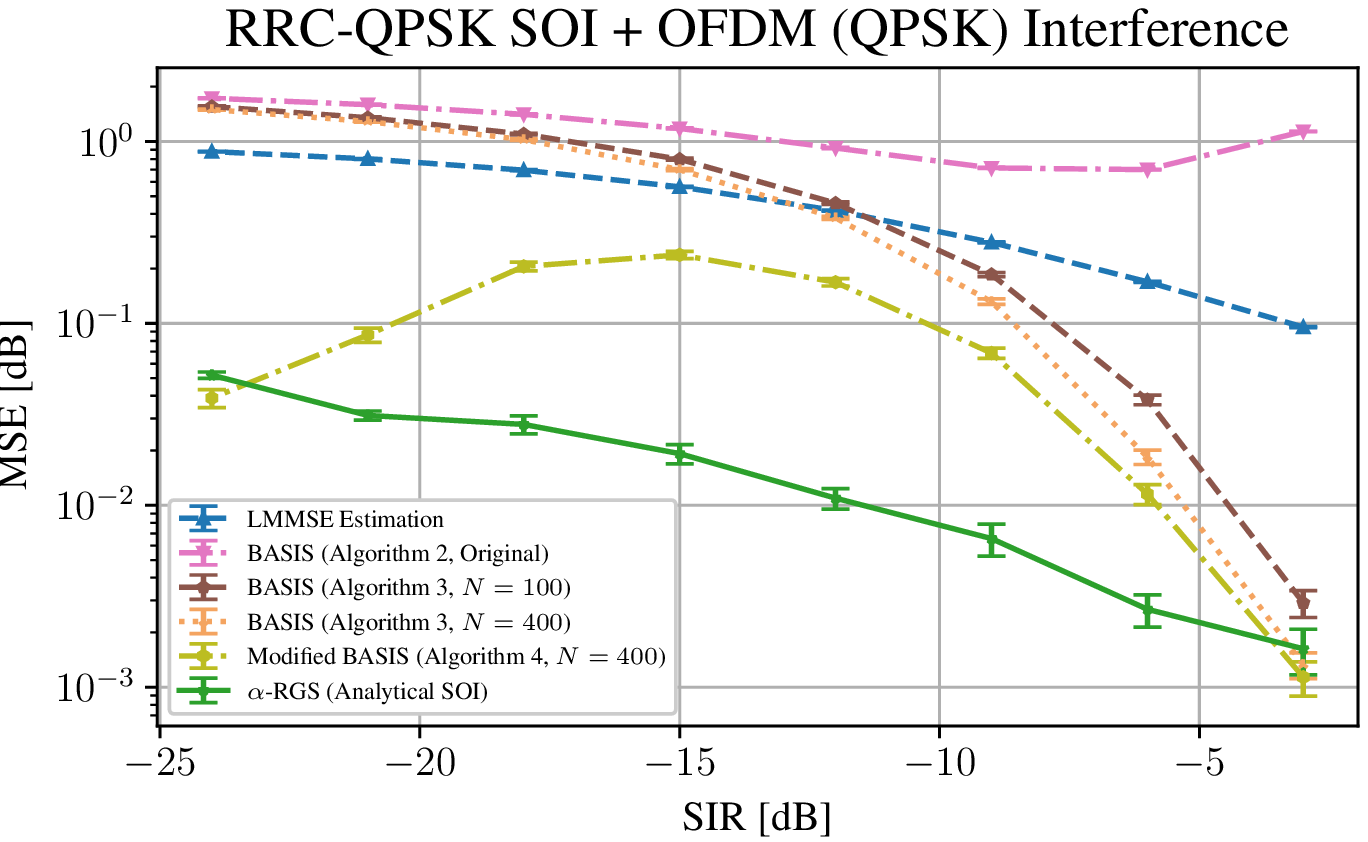}
    \end{subfigure}%
    \caption{Comparing BASIS and Modified BASIS against conventional RF source separation methods. The vanilla BASIS algorithm performs worse than MF, while the Modified BASIS algorithm, which is inspired by our usage of an $\alpha$-posterior, is able to beat LMMSE. Still, our method outperforms all baselines.}
    \label{fig:basis ablation}
\end{figure}

    \begin{algorithm}[!tb]
    \caption{BASIS Separation, using \eqref{eq:regular bayes}}
    \label{alg: basis map}
    \begin{algorithmic}[1]
      \Function{separation}{$\bsvy$, $\kappa$, $N$, $\{\sigma_t^2\}_{t=1}^{T}$, $\btheta^{(0)}$} \Comment{Specially tuned noise schedule $\{\sigma_t\}_{t=1}^{T}$}
        \For{$t\gets 1, T$}
            \State $\btheta^{(t)} \gets \btheta^{(t-1)}$, $\eta_t \gets f(\sigma^2_t)$ \Comment{Learning rate $\eta_t$}
            \For{$i\gets 0, N-1$}
                \State $\boldsymbol{\epsilon} \sim \mathcal{N}(0, \mathbf{I})$
                \State $\widehat{\bsvz}_s,\, \widehat{\bsvz}_b = r_{\phi_{\bsvs}}\Bigl(\btheta^{(t)},\, t\Bigr), \, r_{\phi_{\bsvb}}\Bigl(\left(\bsvy - \btheta^{(t)}\right)/\kappa, t\Bigr)$  \Comment{Compute scores at noise level $\sigma_t$}
                \State $\btheta^{(t)} \gets \btheta^{(t)} \underbrace{- \eta_t \left(\widehat{\bsvz}_s / \sigma_t\right) + \tfrac{1}{\kappa}\eta_t \left(\widehat{\bsvz}_b / \sigma_t\right)}_{\text{from MAP objective}} + \sqrt{2 \eta_t}\boldsymbol{\epsilon}$
            \EndFor
        \EndFor
    \State \textbf{return} $\widehat{\bsvs} = \btheta^{(T)},\, \widehat{\bsvb} = \left(\bsvy - \btheta^{(T)}\right)/\kappa$
    \EndFunction
    \end{algorithmic}
  \end{algorithm}

    \begin{algorithm}[!tb]
    \caption{Modified BASIS Separation by introducing $\alpha$-posterior}
    \label{alg: modified basis}
    \begin{algorithmic}[1]
      \Function{separation}{$\bsvy$, $\kappa$, $N$, $\{\sigma_t^2\}_{t=1}^{T}$, $\btheta^{(0)}$} \Comment{Specially tuned noise schedule $\{\sigma_t\}_{t=1}^{T}$}
        \For{$t\gets 1, T$}
            \State $\btheta^{(t)} \gets \btheta^{(t-1)}$, $\eta_t \gets f(\sigma^2_t)$ \Comment{Learning rate $\eta_t$}
            \For{$i\gets 0, N-1$}
                \State $\boldsymbol{\epsilon} \sim \mathcal{N}(0, \mathbf{I})$
                \State $\widehat{\bsvz}_s,\, \widehat{\bsvz}_b = r_{\phi_{\bsvs}}\Bigl(\btheta^{(t)},\, t\Bigr), \, r_{\phi_{\bsvb}}\Bigl(\left(\bsvy - \btheta^{(t)}\right)/\kappa, t\Bigr)$  \Comment{Compute scores at noise level $\sigma_t$}
                \State $\btheta^{(t)} \gets \btheta^{(t)} \underbrace{- \eta_t \left(\widehat{\bsvz}_s / \sigma_t\right) + \tfrac{\omega}{\kappa}\eta_t \left(\widehat{\bsvz}_b / \sigma_t\right)}_{\text{from MAP objective with $\alpha$-posterior}} + \sqrt{2 \eta_t}\boldsymbol{\epsilon}$
            \EndFor
        \EndFor
    \State \textbf{return} $\widehat{\bsvs} = \btheta^{(T)},\, \widehat{\bsvb} = \left(\bsvy - \btheta^{(T)}\right)/\kappa$
    \EndFunction
    \end{algorithmic}
  \end{algorithm}
  
We first implement BASIS in its original form, with our best attempt at finding an appropriate learning rate. As shown in Figure \ref{fig:basis ablation}, on QPSK (SOI) + OFDM (QPSK) mixture, this leads to worse performance than matched filtering---likely due to poor matching of the observation model in \eqref{eq:soft constraint}.  Note that we are limited in the annealing schedule by the set of possible noise levels, based on the training setup of our diffusion models. Thus, we use the diffusion model training noise schedule discretized into $T=50$ timesteps (see \S\ref{sec: experimental setup}) and additionally use $N=100$. We seek to use the largest set of noise levels possible, consistent with approaches in most Langevin sampling based works.  We tuned the learning rate schedule to the best of our abilities and found that $\eta_t = f(\sigma^2_t) =  (2\mathrm{e}{-8})\sigma^2_t/\sigma^2_1$, gave the best results. All curves use an analytical score model for the SOI.
  
To make an even more conservative comparison (w.r.t. our proposed method), we replace the posterior using \eqref{eq:basis posterior}, which corresponds to the formulation in \eqref{eq:soft constraint}, with a hard constraint as in our setup. This method, as further described in Algorithm \ref{alg: basis map},  leads to better performance, even outperforming LMMSE in some cases.  We also experiment with $N=400$ (for a total of $20,000$ iterations), and do not observe significant gains. Empirically we found this method to be typically stuck in local optimum which is in line with our characterization earlier in \S\ref{sec: analysis}.

Finally, we augment the BASIS algorithm with an $\alpha$-posterior with $\alpha=\omega$, as described in Algorithm  \ref{alg: modified basis}. As shown in Figure \ref{fig:basis ablation}, the $\alpha$-posterior augmented method obtains better performance than vanilla BASIS, but is generally unable to reach the performance of our method. We emphasize again that in contrast to our method, the aforementioned BASIS-based methods follow a prescribed annealing schedule, that can require cumbersome tuning for optimal performance. The implementations of Algorithm \ref{alg: basis map} and Algorithm \ref{alg: modified basis} both use a learning rate schedule of $\eta_t = f(\sigma^2_t) =  (2\mathrm{e}{-6})\sigma^2_t/\sigma^2_1$

\section{Complete Results}
\label{sec:complete results}

In this section we will provide the complete results of our study. We first present another baseline based on simulating the reverse diffusion process \cite{ho2020denoising} as a denoiser, that complements the baselines already described in \S\ref{sec: experimental setup}.

\paragraph{Reverse Diffusion.} Given a mixture $\bsvy$, we interpret the SOI as (scaled) additive noise on top of the interference $\bsvb$. Since the reverse diffusion process can be interpreted as an iterative denoiser (see \S\ref{sec: diffusion models}), we run a small chain of reverse diffusion for 10 timesteps starting at a noise variance of 0.005 and ending with variance of 0.0001, which was chosen based on different trials. Similar ideas have been used in prior works involving inverse problems for images \cite{kadkhodaie2021stochastic, kulikov2022sinddm}. We note that it is generally cumbersome to find the optimal reverse diffusion noise range and it might in fact be dependent on the specific mixture's SIR.

\paragraph{Extracting the underlying bits.} Our primary metric for assessing the quality of the estimated SOI is through the BER between the estimated bits and the truly encoded bits. The estimate of the SOI typically equals the true SOI with a very small amount of noise due to the continuous nature of the optimization problem.  Thus, we can expect the underlying constellation to contain some symbols concentrated around, but not exactly at the constellation points.  This can be modeled as the true SOI corrupted with a small amount of Gaussian noise, for which MF is the optimal technique to decode the bits by mapping the symbols to the closest constellation point (see Section \ref{sec:mf}). 

\paragraph{The analytical SOI score.} We use an RRC-QPSK signal as the SOI across all our source separation experiments.  As detailed in Section \ref{sec:digital rf sources}, the score for a smoothened QPSK source can be analytically computed via Proposition \ref{prop:qam}, where it is more amenable to model and compute it in the symbol space (i.e., as i.i.d.~symbols). Nevertheless, for the problem of separating the SOI from an interference source, we have to consider the components jointly in the time domain. Thus, we relate the time-domain representation to the symbols via $\bsvs = \boldsymbol{H}\bsv{a}$, where $\boldsymbol{H}$ represents the RRC filter matrix and $\bsv{a}$ is a vector of symbols.  To compute this analytical score, we smooth the symbols via a Gaussian smoothing model at noise level corresponding to timestep $t$,
\begin{equation}
    \tilde{\bsvs}_t =  \boldsymbol{H}\tilde{\bsv{a}}_t \coloneqq \boldsymbol{H}(\gamma_t \bsv{a} + \sigma_t \bsvz_s).
\end{equation}
With this relation, we express the score of the smoothened source as,
\begin{align}
    \nabla_{\tilde{\bsvs}_t} \log p_{\tilde{\bsvs}_t}(\tilde{\bsvs}_t) &= \boldsymbol{H} \cdot \nabla_{\tilde{\bsv{a}}_t} \log p_{\tilde{\bsv{a}}_t}(\tilde{\bsv{a}}_t) \\
    &= \boldsymbol{H} \cdot \left(\frac{1}{\sigma^2_t}\left(-\tilde{\bsv{a}}_t + \sum_{\bsv{a} \in \mathcal{A}^K}\bsv{a}\odot \phi_t\left(\bsv{a}; \tilde{\bsv{a}}_t\right)\right)\right)    \label{eq:score analytical}\\
    &\approx \frac{1}{\sigma^2_t}\left(-\tilde{\bsvs}_t + \boldsymbol{H} \sum_{\bsv{a} \in \mathcal{A}^K}\bsv{a} \odot \phi_t\left(\bsv{a}; \boldsymbol{H}^{\dagger}\tilde{\bsvs}_t\right)\right).
    \label{eq:score approx}
\end{align}
where $\odot$ represents element-wise product, $\phi_t$ is the softmax-like operator defined in \eqref{eq:softmax} that is applied element-wise here, and $\boldsymbol{H}^{\dagger}$ is the pseudo-inverse of $\boldsymbol{H}$. Note that \eqref{eq:score analytical} is obtained by applying \eqref{eq:qam score} to a vector of i.i.d.~smoothened QPSK symbols. In our implementation, the estimate of these smoothened symbols $\tilde{\bsv{a}}_t$ is obtained by reversing the RRC filter using $\boldsymbol{H}^{\dagger}$, as in \eqref{eq:score approx}.

On the other hand, our RRC-QPSK diffusion model is trained directly on the time-domain waveform, and can be used directly to separate the SOI waveform from the mixture without conversion to the symbol domain.  This once again sheds light on the practicality of using data-driven methods to circumvent otherwise computationally challenging statistical modeling technical problems.

\begin{figure}[!t]
    \centering
    \begin{subfigure}[t]{0.5\textwidth}
        \centering
        \includegraphics[scale=0.48]{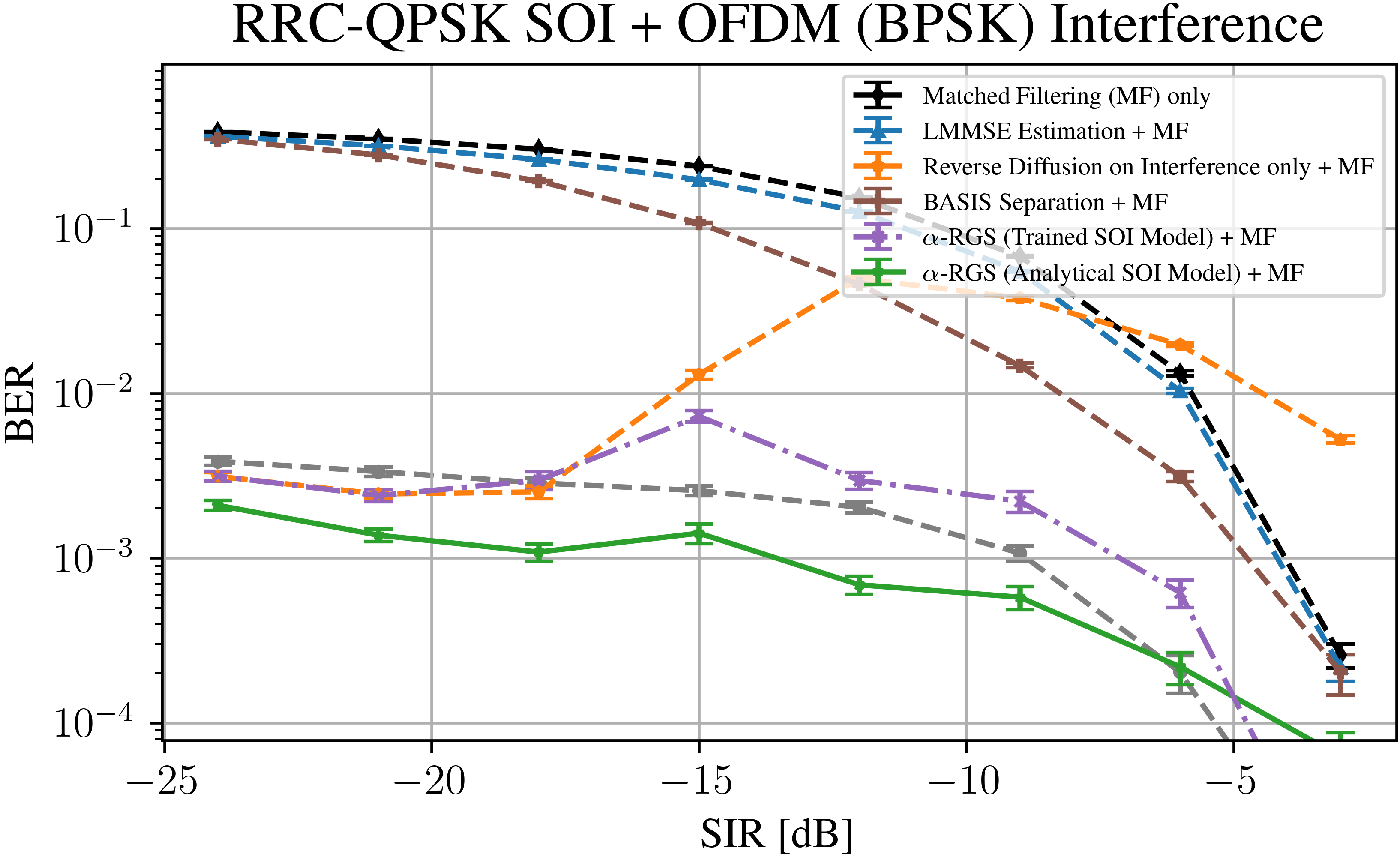}
    \end{subfigure}%
    ~
    \begin{subfigure}[t]{0.5\textwidth}
        \centering
        \includegraphics[scale=0.48]{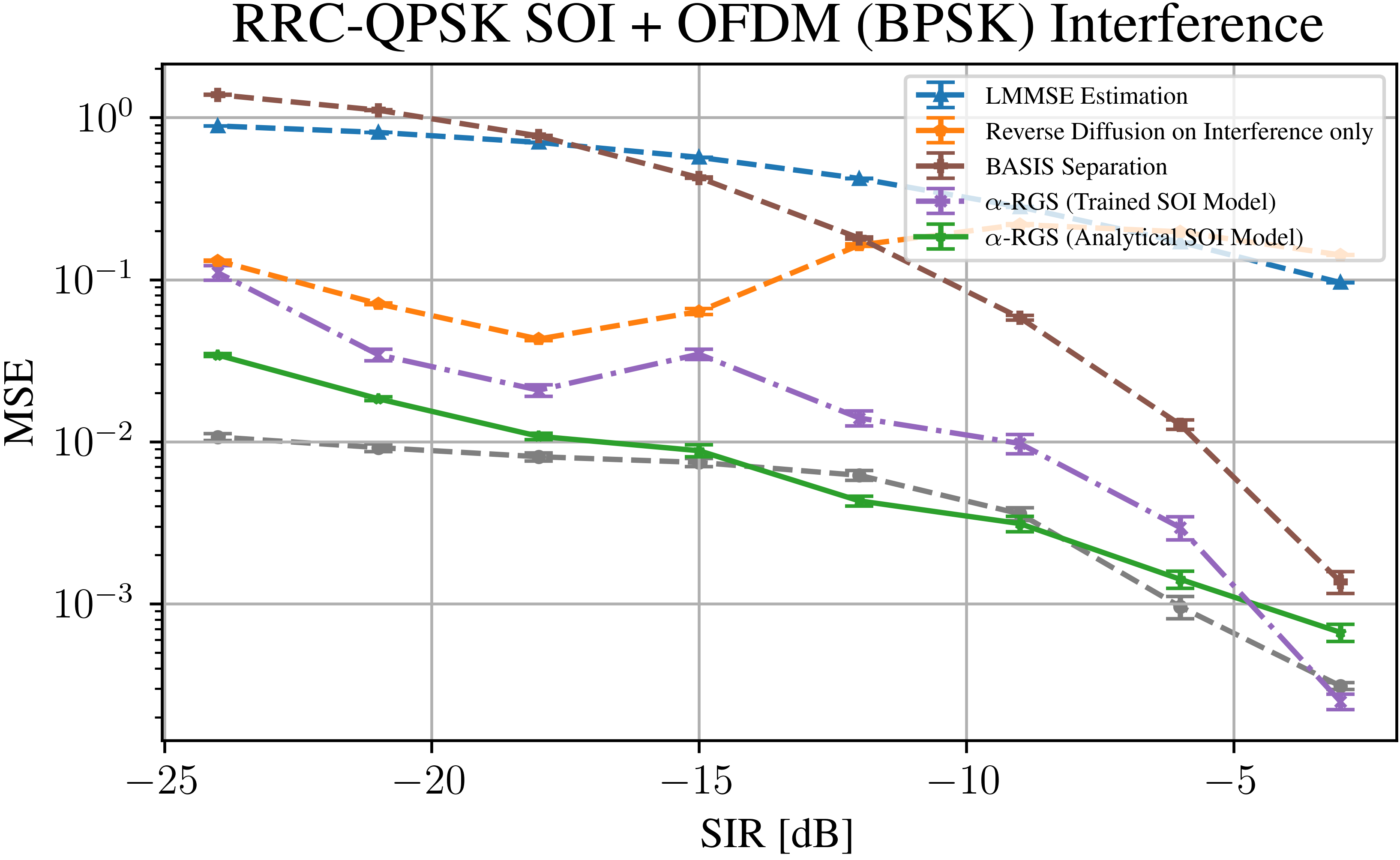}
    \end{subfigure}
    \begin{subfigure}[t]{0.5\textwidth}
        \centering
        \includegraphics[scale=0.48]{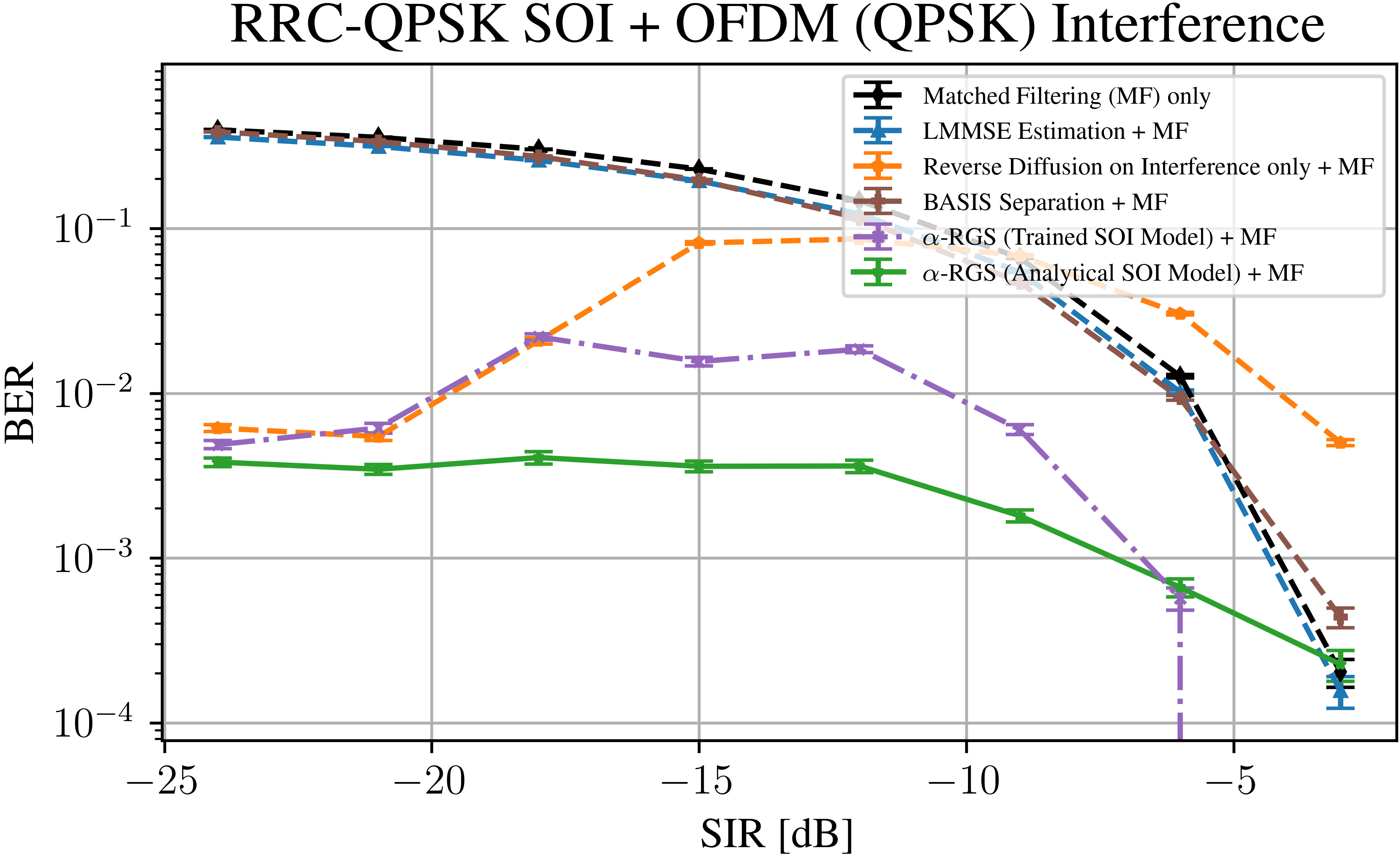}
    \end{subfigure}%
    ~
    \begin{subfigure}[t]{0.5\textwidth}
        \centering
        \includegraphics[scale=0.48]{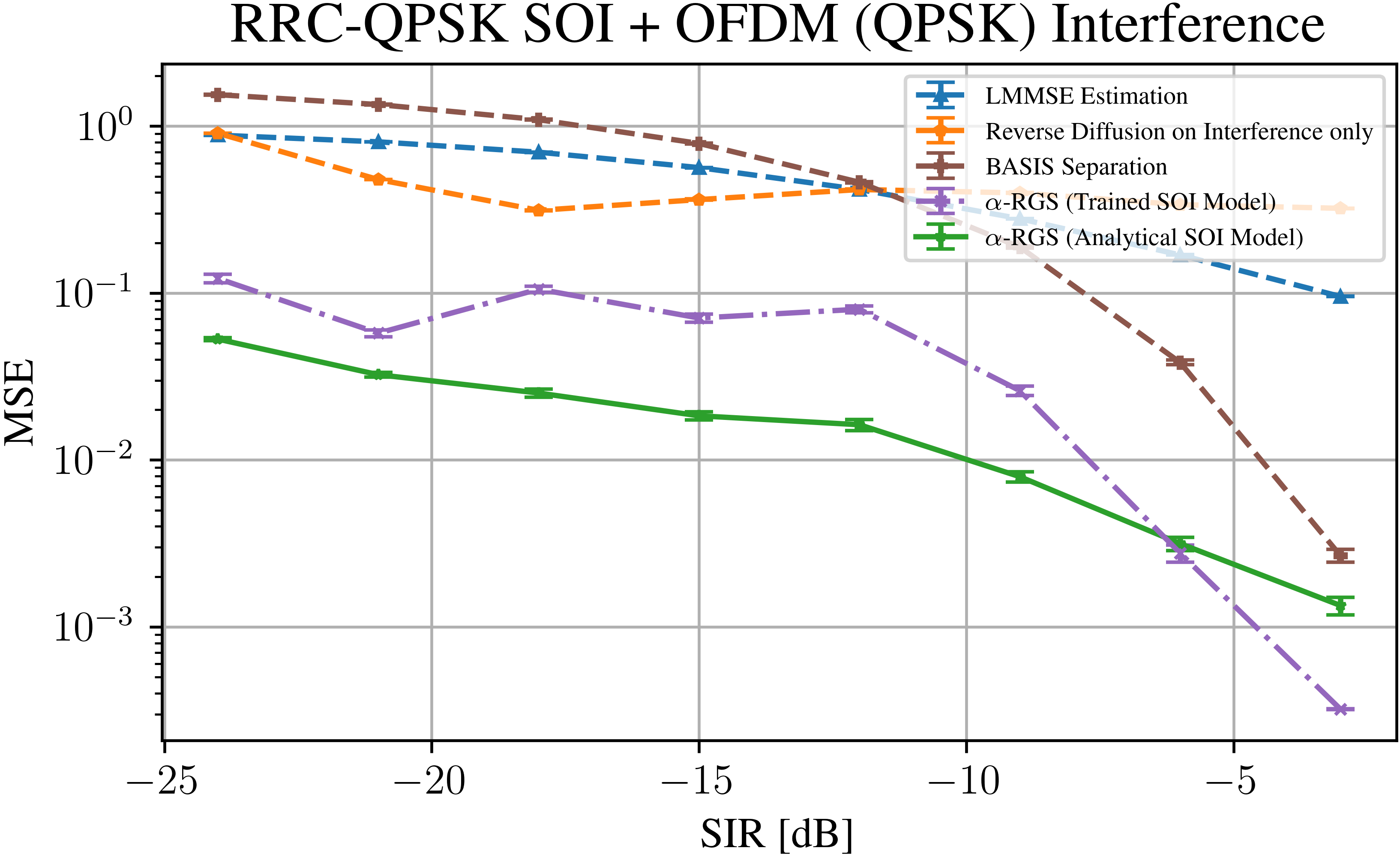}
    \end{subfigure}
    \caption{Comparing our method against various baselines on separating (\textbf{top}) RRC-QPSK + OFDM (BPSK) mixtures and (\textbf{bottom}) RRC-QPSK + OFDM (QPSK) mixtures. Our model that uses an analytical SOI score outperforms all baselines in terms of BER.  Reverse diffusion is competitive at low SIRs since the interference dominates the mixture in this regime and hence iterative denoising to separate the SOI is effective. The trained SOI diffusion models significantly outperform all baselines at high SIR. The reason the analytical SOI performs slightly worse is due to the approximations we make in \eqref{eq:score approx}.}
    \label{fig:ofdm bpsk results}
\end{figure}

\begin{figure}[!t]
    \centering
    \begin{subfigure}[t]{0.5\textwidth}
        \centering
        \includegraphics[scale=0.48]{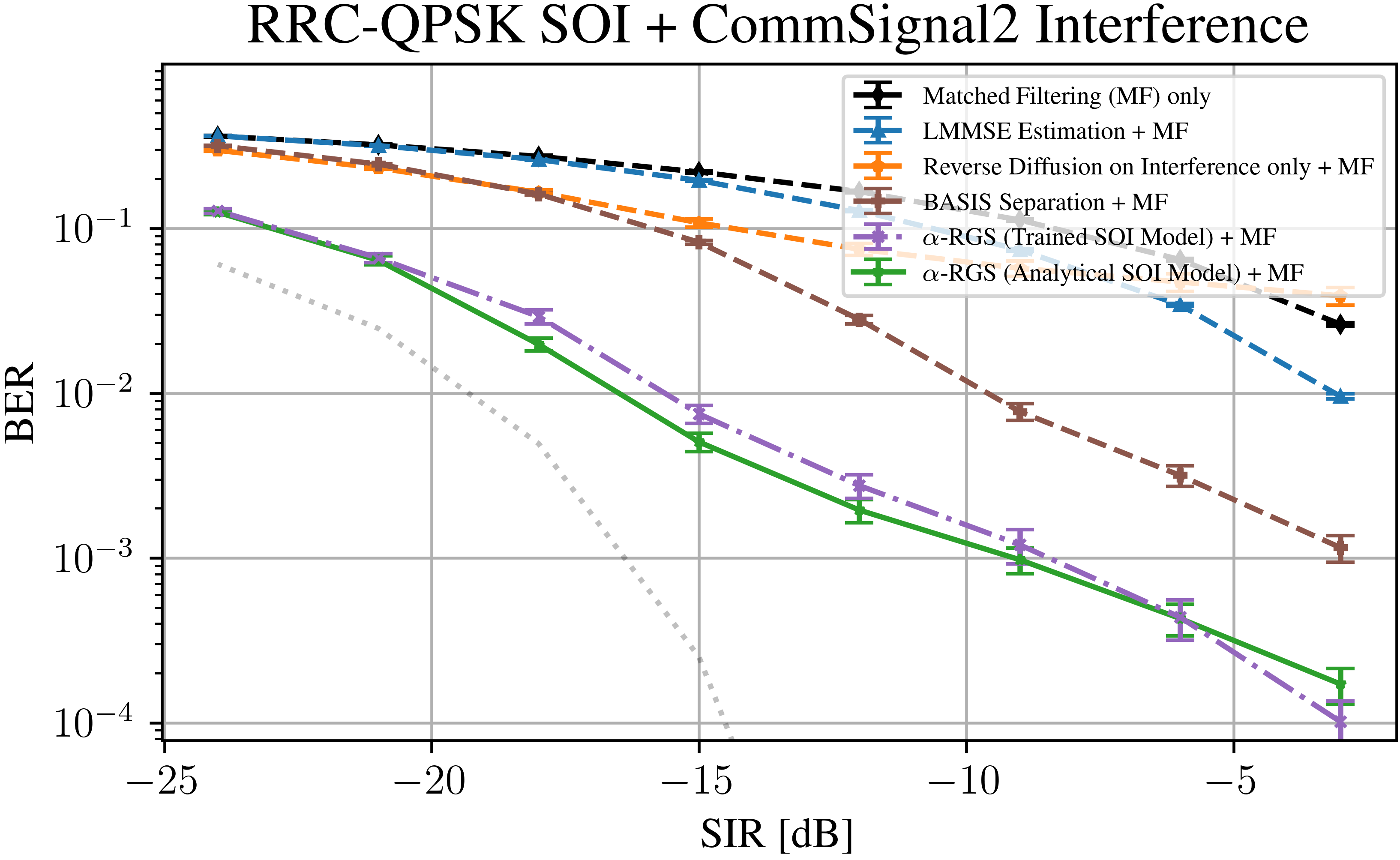}
    \end{subfigure}%
    ~
    \begin{subfigure}[t]{0.5\textwidth}
        \centering
        \includegraphics[scale=0.48]{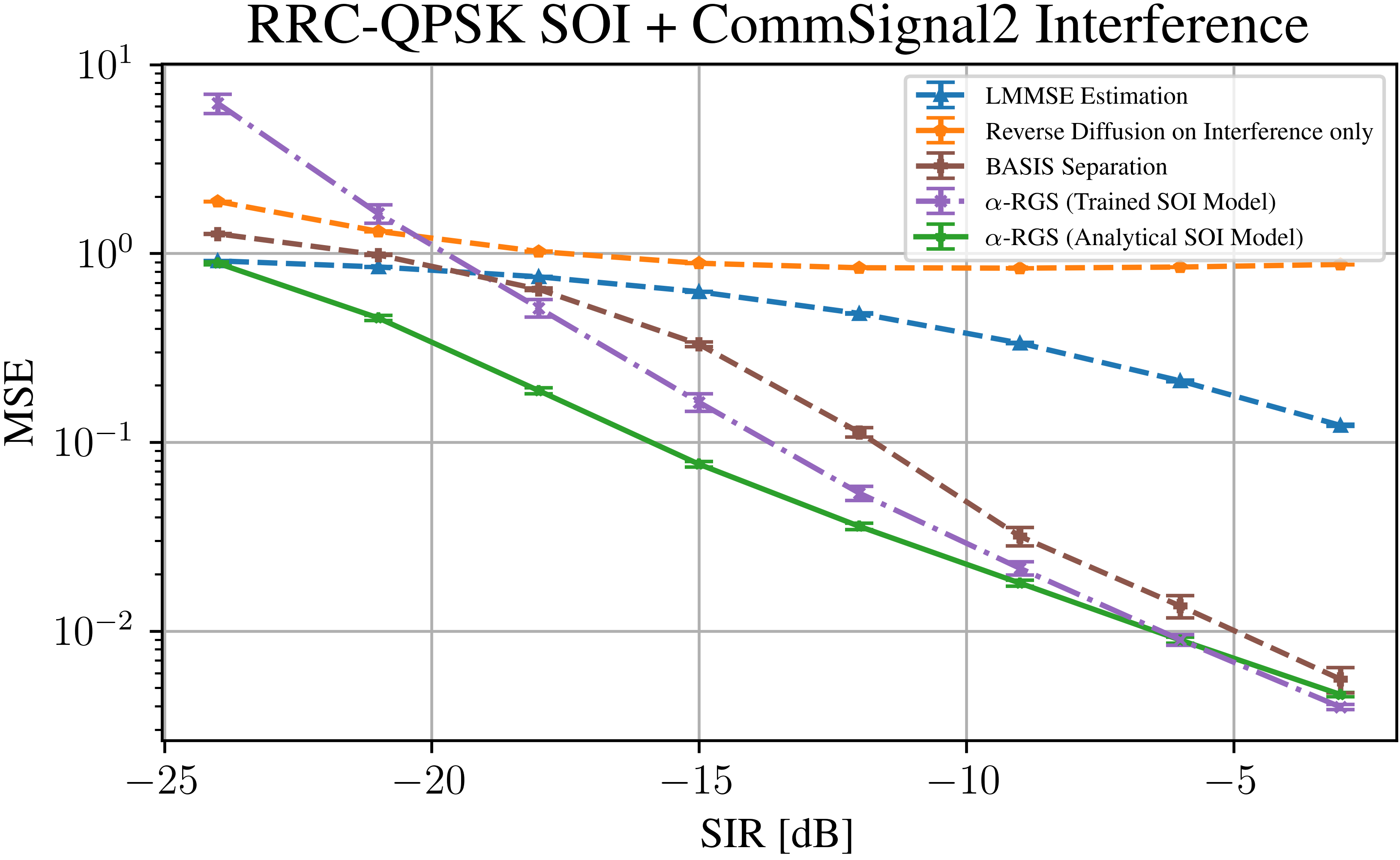}
    \end{subfigure}
    \caption{Comparing our method against various baselines on separating RRC-QPSK + CommSignal2 mixtures. Our method significantly outperforms all baselines in terms of BER.  The large MSE at low SIRs suggests that there is background noise present in the CommSignal2 source, which is amplified at lower SIR (large $\kappa$). We try to estimate the amount of noise and model it as additive Gaussian noise.  Accounting for this noise could presumably lead to a lower bound on the BER, shown by dotted black line on the left.}
    \label{fig:commsignal2 results}
\end{figure}

\paragraph{Results.} Figures \ref{fig:ofdm bpsk results} and \ref{fig:commsignal2 results} show the complete source separation results for mixtures with OFDM and CommSignal2 as interference, respectively.  Our model that uses an analytical SOI score for the SOI and a diffusion-based score for the interference generally performs the best and outperforms all baselines.  Furthermore, we show that using a learned SOI score still outperforms all baselines in terms of BER, despite the slight degradation.  The trained SOI score models consistently outperform all methods at high SIR in terms of BER since the SOI diffusion model was trained with RRC-QPSK samples as opposed to the approximations made in \eqref{eq:score approx} in implementing the analytical score.

Our method also outperforms baselines in terms of MSE for OFDM mixtures as shown in Figure \ref{fig:ofdm bpsk results}.  The performance at low SIR in the context of CommSignal2 mixtures is not the same. As shown in Figure \ref{fig:commsignal2 results}, the large MSE at low SIRs suggest that there is background noise present in the CommSignal2 sources that is amplified for large values of $\kappa$, i.e., the interference is actually of the form $\bsvb + \bsv{w}$ for some background noise $\bsv{w}$.  We validated that this was indeed the case, by visualizing samples in both the time-domain and frequency domain using the RF challenge demo notebook\footnote{\url{https://github.com/RFChallenge/rfchallenge_singlechannel_starter/blob/main/notebook/Demo.ipynb}}.  As shown in Figure \ref{fig:commsignal2}, we noticed segments of lower magnitude at the start and end, which we believe to be background noise that is not part of intended communication signal.  During source separation, this presumably results in a noisier estimate of the SOI in comparison to mixtures with no additional background noise.  We estimated the SNR to be $16.9$ dB by averaging across multiple samples. The dotted black curve on the left of Figure \ref{fig:commsignal2 results} is a presumable lower bound on the BER by accounting for the magnitude of the background noise and modeling it as additive white Gaussian noise.  

\begin{figure}[!h]
        \centering
        \begin{subfigure}[t]{0.5\textwidth}
            \centering
            \includegraphics[scale=0.47]{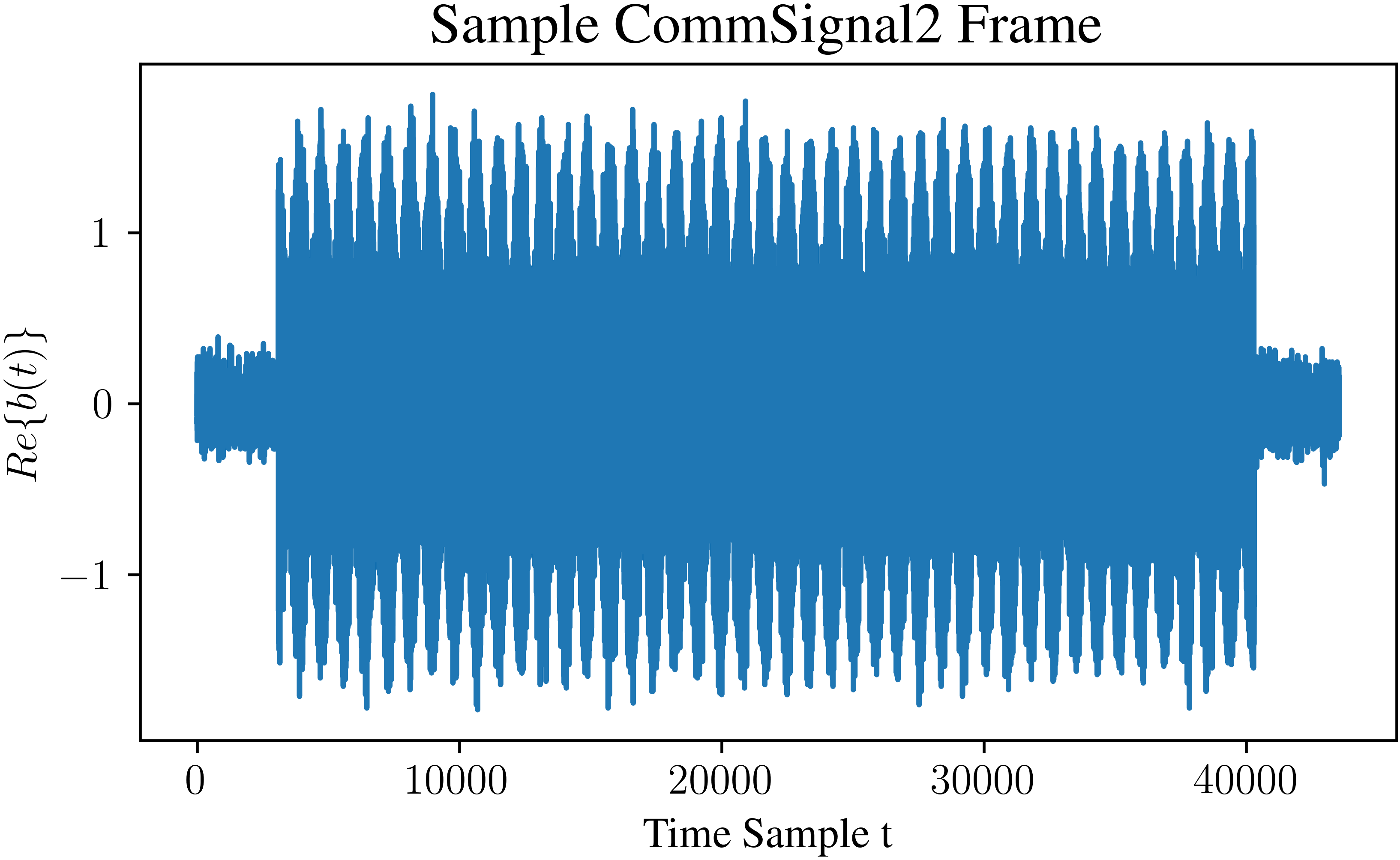}
        \end{subfigure}%
        ~
        \begin{subfigure}[t]{0.5\textwidth}
            \centering
            \includegraphics[scale=0.47]{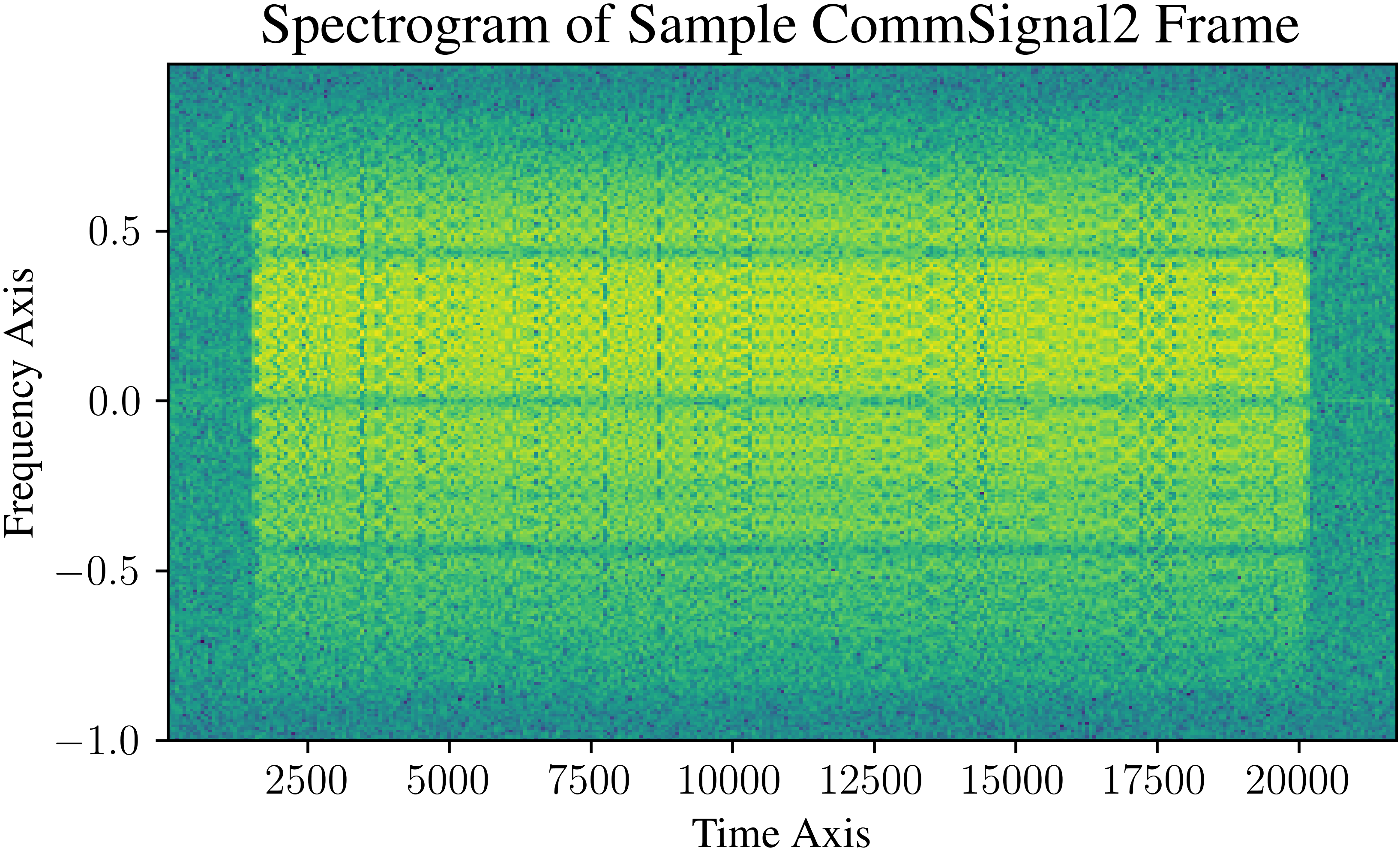}
        \end{subfigure}
        \caption{Time-domain and frequency-domain plots of a single frame from the CommSignal2 dataset.  \textbf{Left:} In the time-domain we observe segments of lower magnitude at the start and end, which we believe to be background noise. \textbf{Right:} In the frequency-domain we observe the bandlimited communication signal with spectrally flat features in the regions we identified as background noise.}
        \label{fig:commsignal2}
    \end{figure}

Nevertheless, across all experiments, we demonstrate that our method sets a new state-of-the-art for applications such as interference mitigation where the interference can be learned from data recordings and the desired information can be decoded with knowledge of the SOI demodulation pipeline.

\begin{figure}[!h]
    \centering
    \begin{subfigure}[t]{0.5\textwidth}
        \centering
        \includegraphics[scale=0.48]{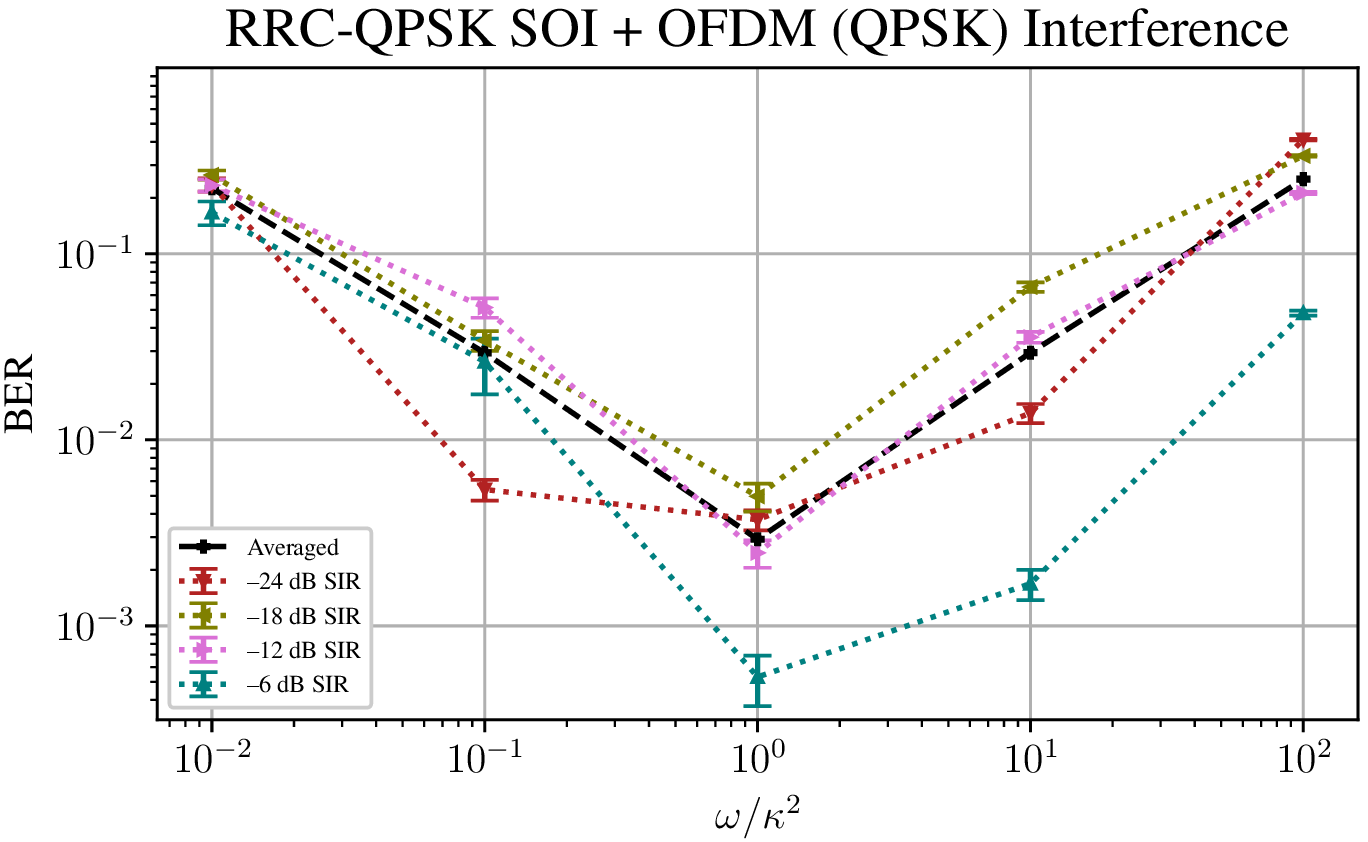}
    \end{subfigure}%
    ~
    \begin{subfigure}[t]{0.5\textwidth}
        \centering
        \includegraphics[scale=0.48]{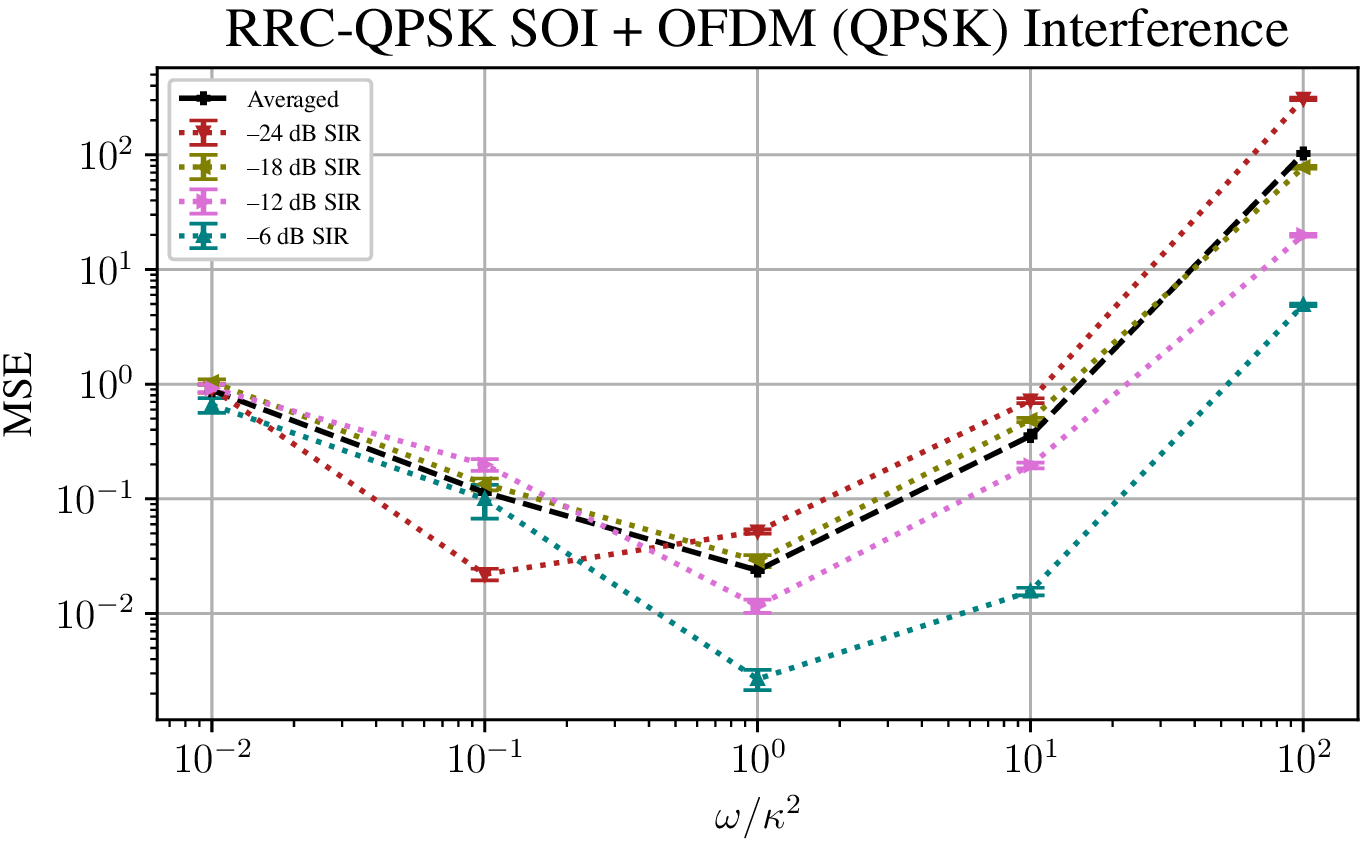}
    \end{subfigure}
    \begin{subfigure}[t]{0.5\textwidth}
        \centering
        \includegraphics[scale=0.48]{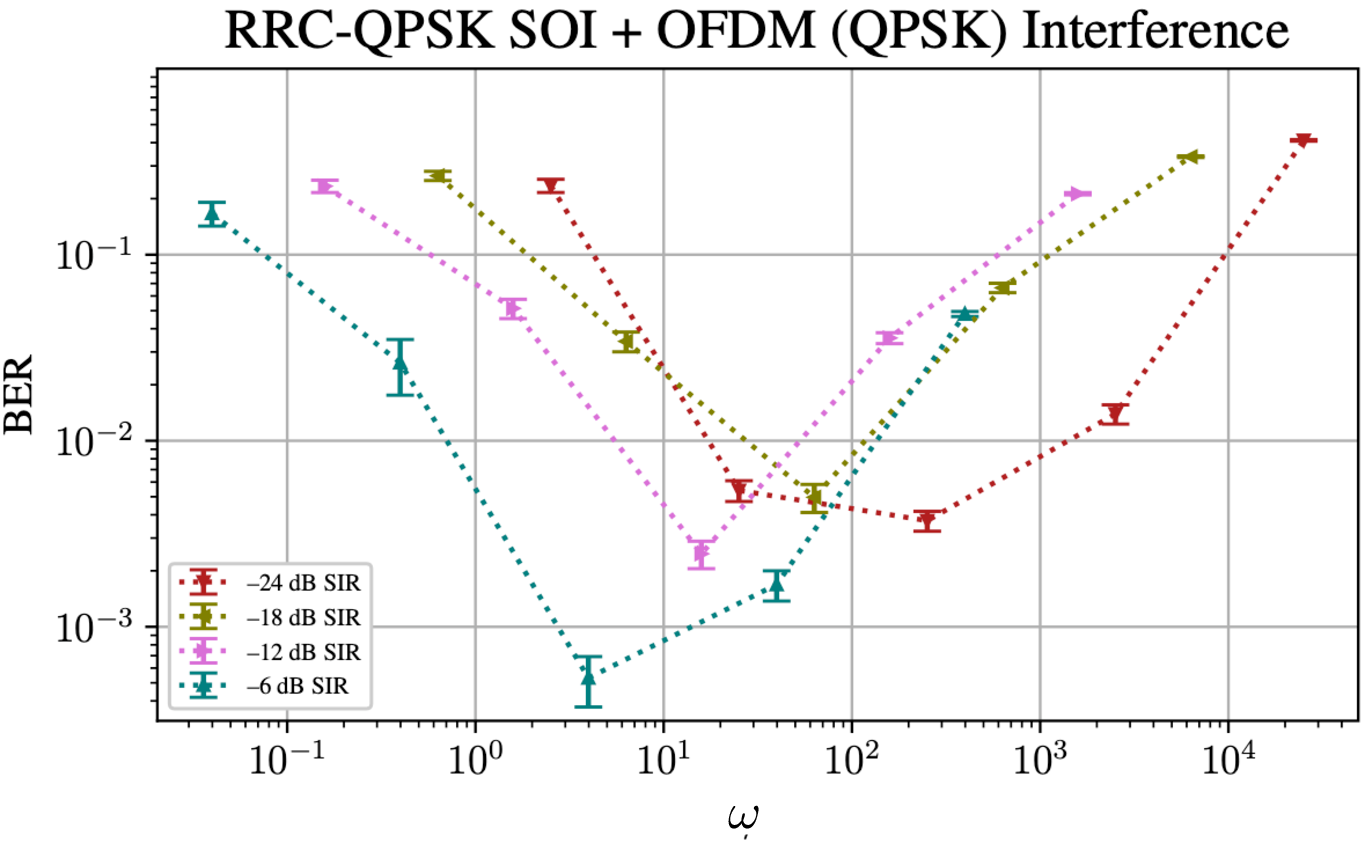}
    \end{subfigure}%
    ~
    \begin{subfigure}[t]{0.5\textwidth}
        \centering
        \includegraphics[scale=0.48]{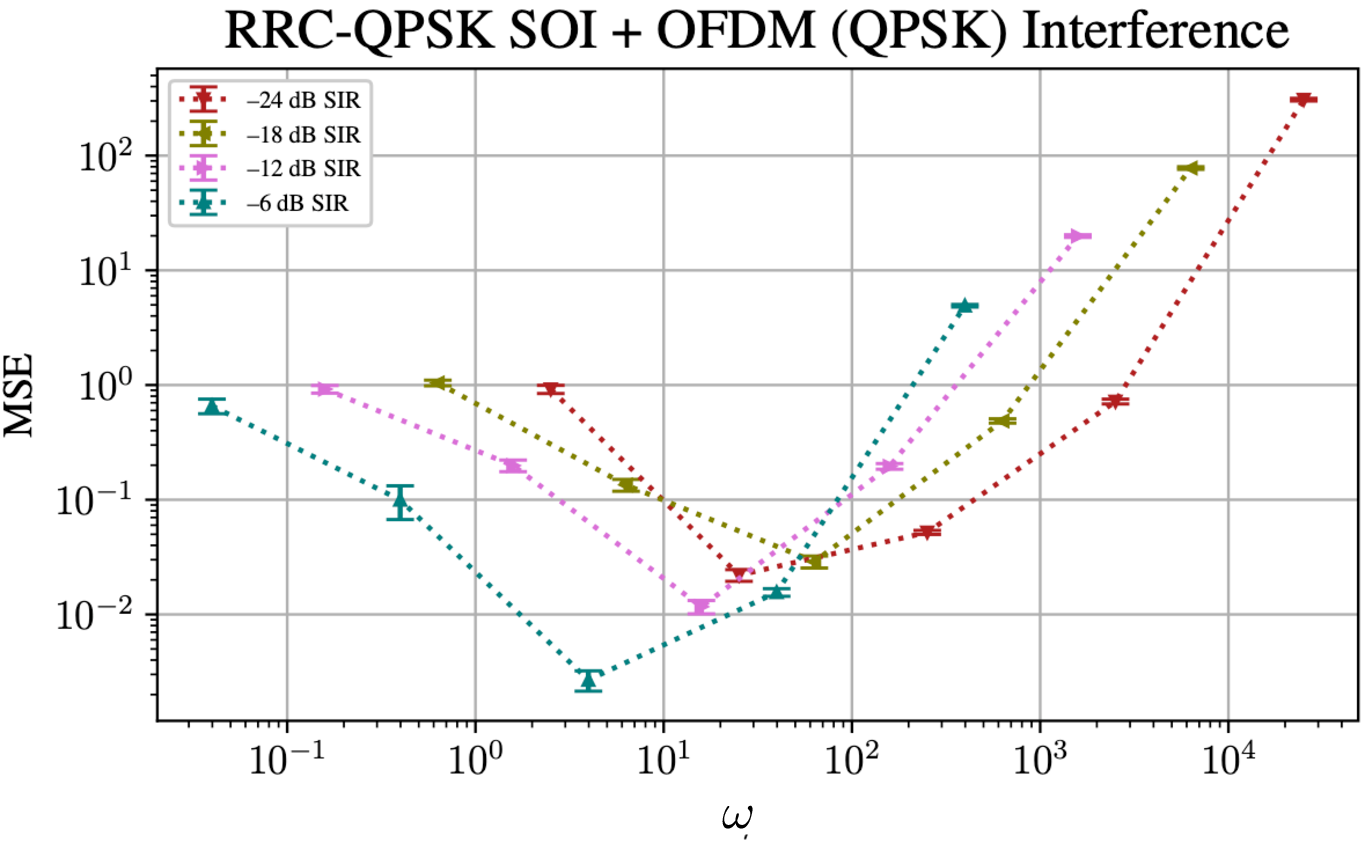}
    \end{subfigure}
    \caption{\textbf{Top:} BER and MSE versus $\omega/\kappa^2$ for different SIR levels. Evidently, a good choice of $\omega$, on average, across different noise levels is $\omega=\kappa^2$. \textbf{Bottom:} By looking at individual SIRs we see that the minimum BER and MSE is achieved when $\omega$ increases with increasing $\kappa^2$ (decreasing SIR), visualized using a log scale on the x-axis.}
    \label{fig:omega ablation}
\end{figure}

\paragraph{Choice of $\omega$.} We numerically verify that $\omega=\kappa^2$ is a good choice of the $\alpha$-posterior term,  To this end, we first find a suitable order of magnitude for $\omega$, by varying $\omega/\kappa^2$ between $10^{-2}$ and $10^2$ across different noise levels.  As shown in the top row in Figure \ref{fig:omega ablation}, on average, the minimum BER and MSE is achieved when $\omega=\kappa^2$. We additionally validate that it is beneficial to adapt $\omega$ as the SIR changes by varying $\omega$ and studying the BER and MSE curves at individual noise levels. As shown in the bottom row of Figure \ref{fig:omega ablation}, we observe that it $\omega$ should increase with $\kappa^2$ to achieve good results.

\begin{figure}[!t]
    \centering
    \begin{subfigure}[t]{0.5\textwidth}
        \centering
        \includegraphics[scale=0.48]{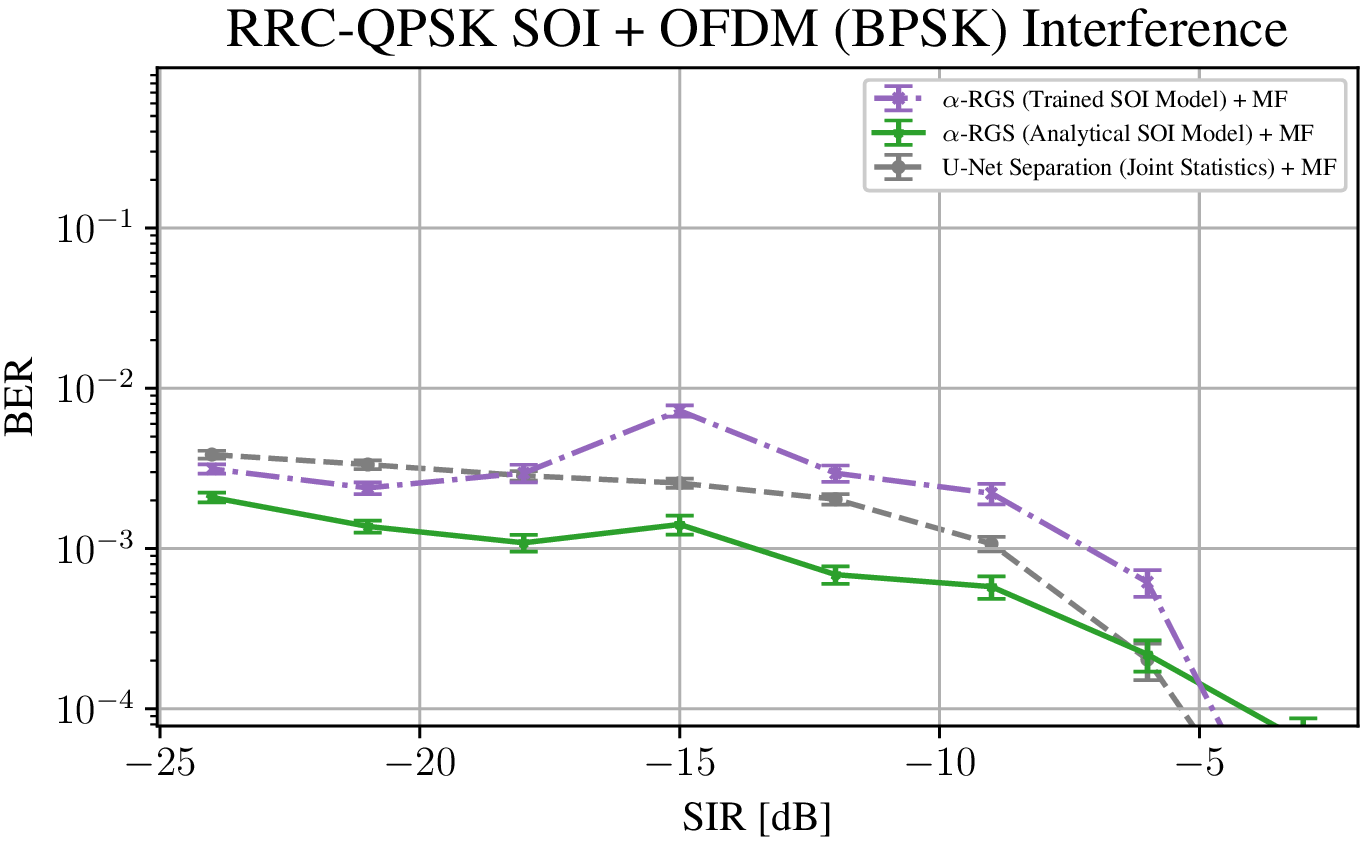}
    \end{subfigure}%
    ~
    \begin{subfigure}[t]{0.5\textwidth}
        \centering
        \includegraphics[scale=0.48]{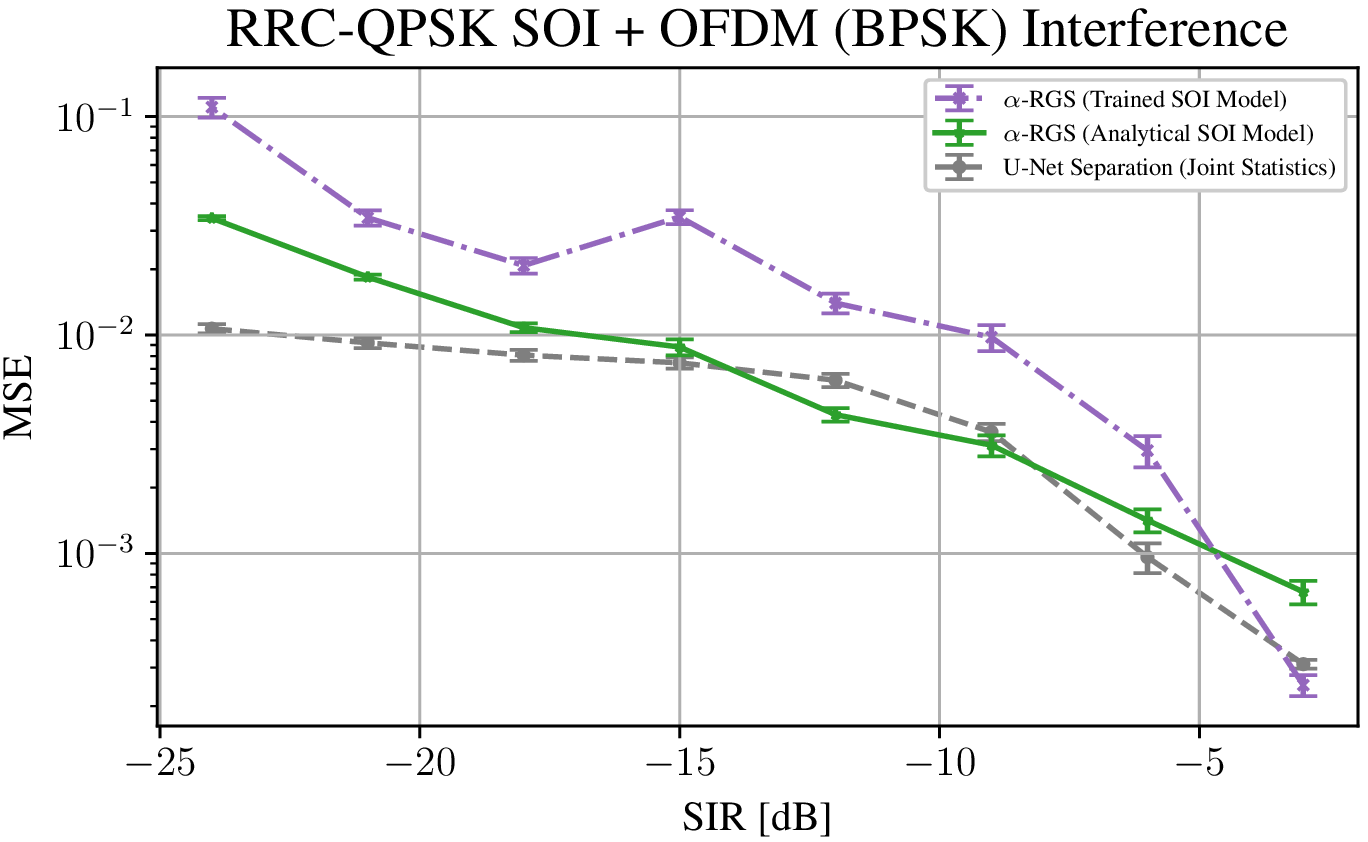}
    \end{subfigure}
    \begin{subfigure}[t]{0.5\textwidth}
        \centering
        \includegraphics[scale=0.48]{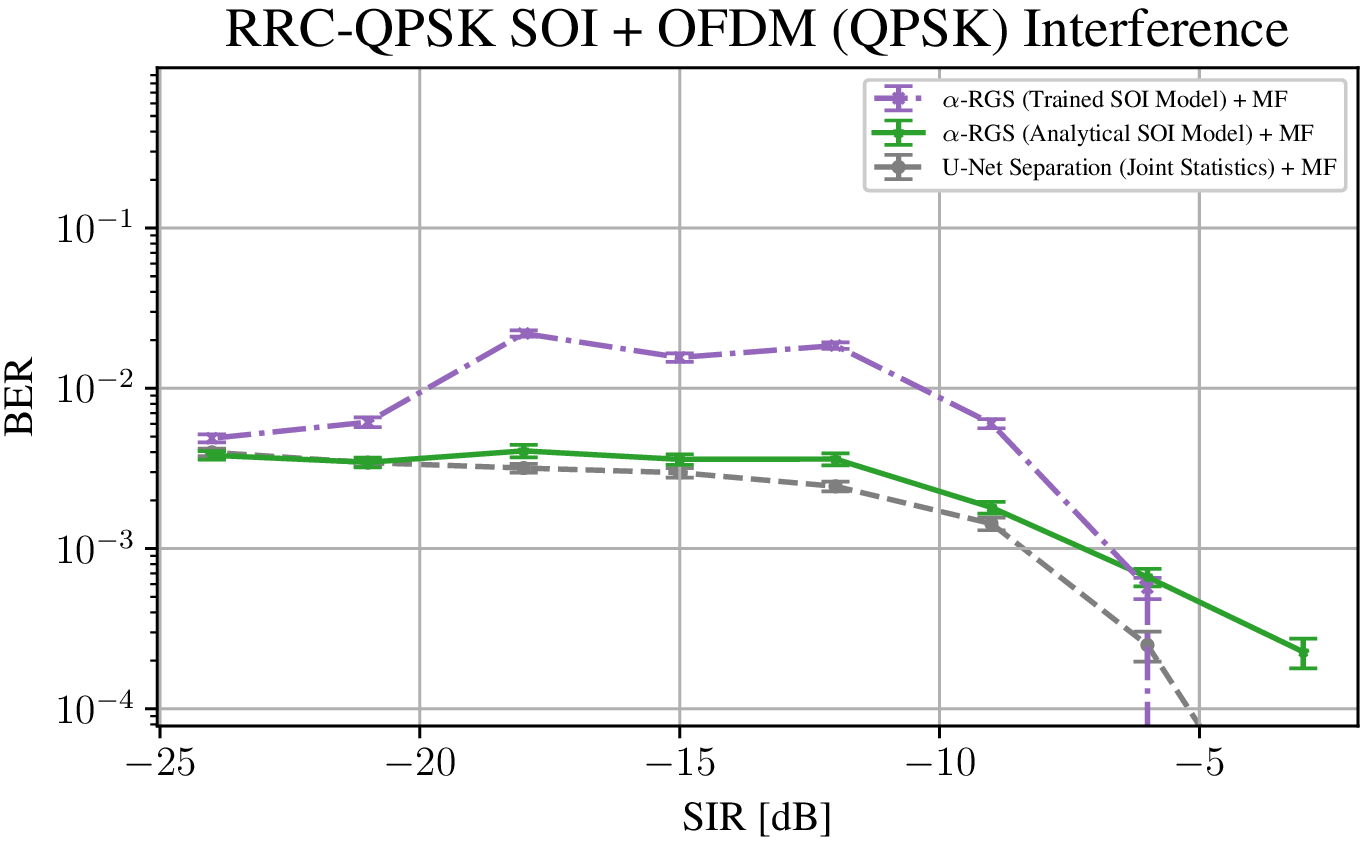}
    \end{subfigure}%
    ~
    \begin{subfigure}[t]{0.5\textwidth}
        \centering
        \includegraphics[scale=0.48]{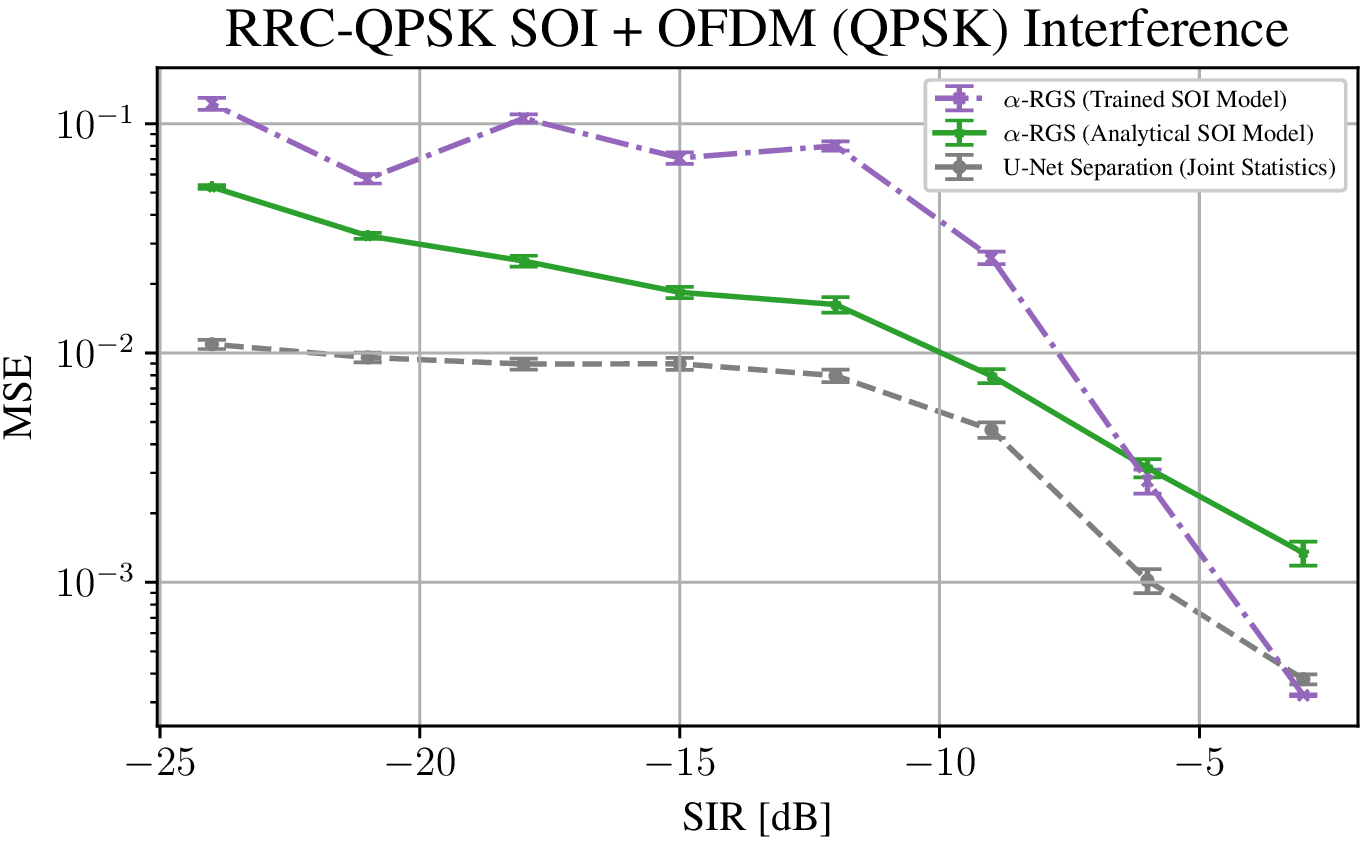}
    \end{subfigure}
    \begin{subfigure}[t]{0.5\textwidth}
        \centering
        \includegraphics[scale=0.48]{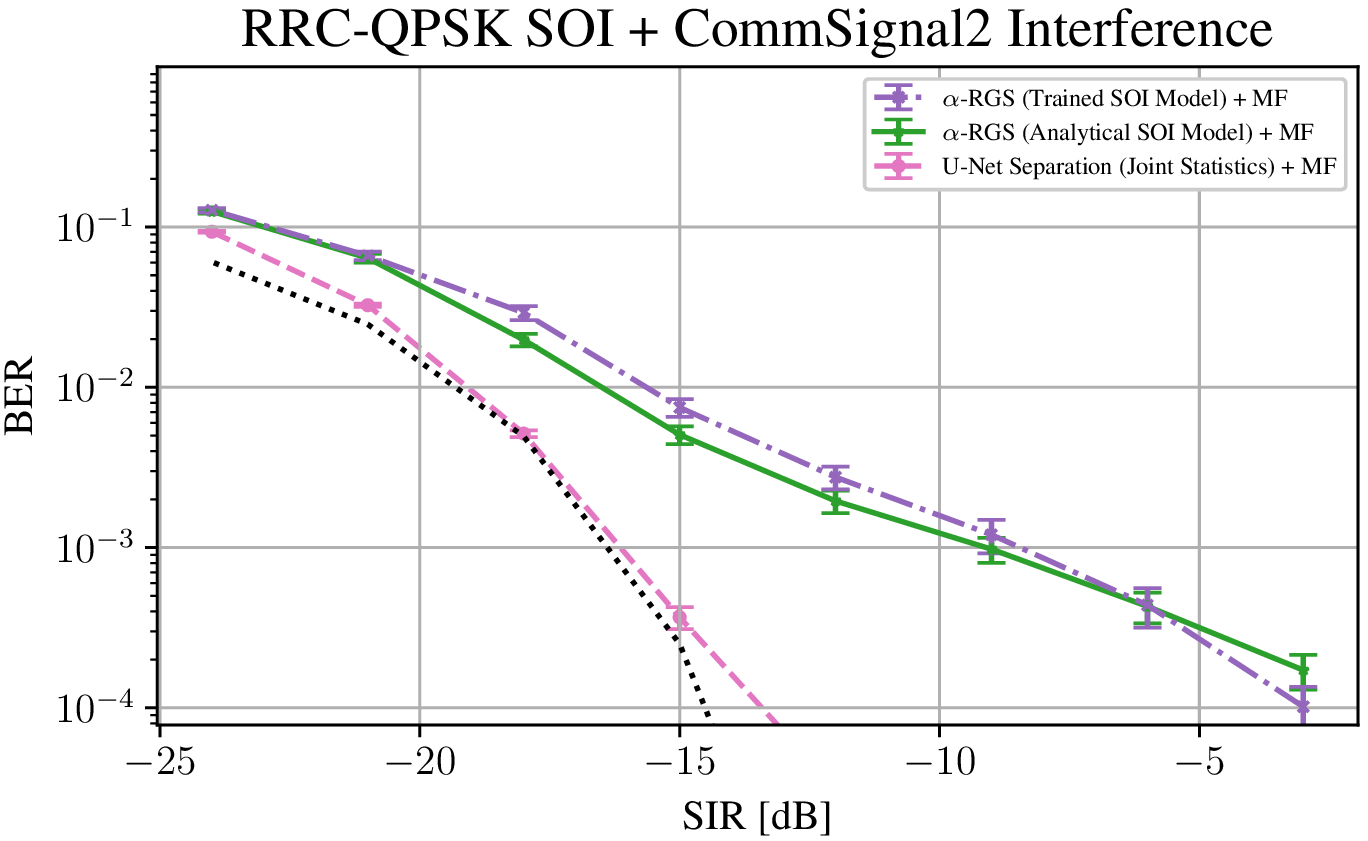}
    \end{subfigure}%
    ~
    \begin{subfigure}[t]{0.5\textwidth}
        \centering
        \includegraphics[scale=0.48]{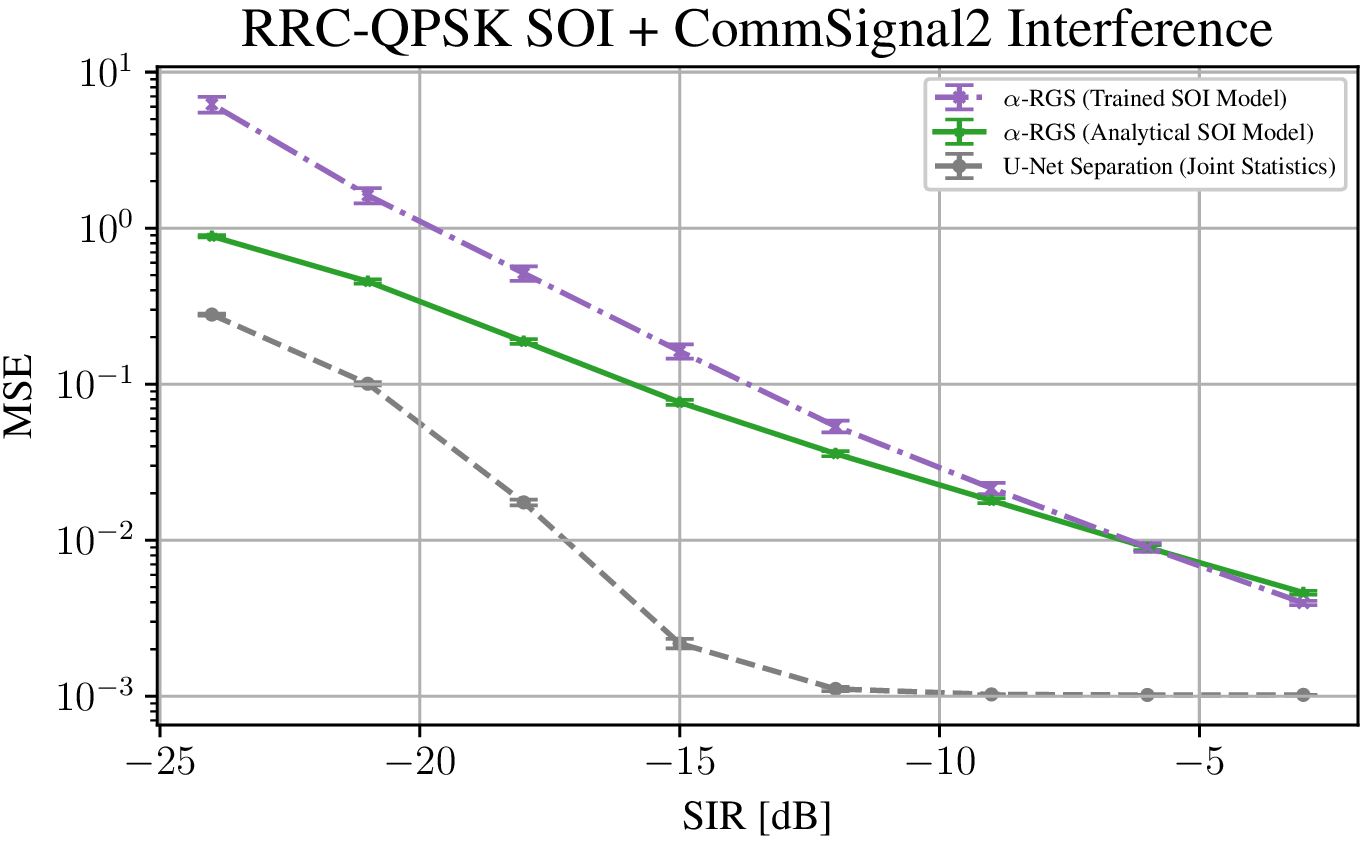}
    \end{subfigure}
    \caption{Comparing our method against a supervised learning setup that trains a UNet on paired data samples with an $\ell_2$ loss.  We show that we are able to achieve competitive BERs, and even outperform the supervised method at certain SIRs. Our method is particularly competitive in the challenging strong interference regime (low SIR), demonstrating the fidelity of our trained interference diffusion models. The gap is larger for CommSignal2 mixtures as the supervised method is presumably able to leverage knowledge of the joint statistics between the SOI and interference to effectively deal with the background noise in the interference signal.}
    \label{fig:unet results}
\end{figure}

\paragraph{Comparing with supervised methods.} As motivated in \S\ref{sec:introduction}, we are interested in leveraging independently trained priors in our source separation setup.  Nevertheless, we compare against a supervised setup that learns to separate mixtures end-to-end by training a UNet on \emph{paired} data.  We train three supervised models based on the recent work in \cite{lee2022exploiting}, using their open-sourced training code\footnote{\url{https://github.com/RFChallenge/SCSS_CSGaussian}}.  These models are trained with an $\ell_2$ loss.  As such, we should expect the MSE performance to be better than our unsupervised approach, which is indeed the case across all the mixtures as shown in Figure \ref{fig:unet results}.  However, we notice that our method, which uses an analytical SOI score, is able to perform similarly to the UNet and even better in terms of BER for some SIRs, especially in the OFDM interference setting.  Furthermore, in the challenging low SIR (strong interference regime) our method performs well, showing that our interference diffusion models were able to learn the underlying statistical structures.  Thus, if the mixture model changes in the future we can re-use our priors whereas the supervised approached could require an entire change to the architecture and additional training. We can potentially drive the BER in the CommSignal2 setting by modeling the background noise in these signals either through a different mixture model or through additional priors might help in driving down the BER.  This will be a focus of future work.

\paragraph{Computation Time.} The inference time is as of now slightly longer than BASIS. To separate a single mixture, our method that uses $N=20,000$ iterations takes 328 seconds on average, whereas BASIS takes 284 seconds for the same number of iterations.  Meanwhile, the UNet only requires one forward pass through the model and can separate a mixture in less than one second.  Our implementation was not optimized for computation time, but there may be strategies to speed up our algorithm in practice, which will be a focus of future work.

\paragraph{Code.}  The code for reproducing our results can be found at \url{https://github.com/tkj516/score_based_source_separation} and is also linked on our project webpage \url{https://alpha-rgs.github.io}.

% \bibliographystyle{unsrtnat}
% \newpage
% \bibliography{refs}

%%%%%%%%%%%%%%%%%%%%%%%%%%%%%%%%%%%%%%%%%%%%%%%%%%%%%%%%%%%%

\end{document}